\let\oldnl\nl%
\newcommand{\nonl}{\renewcommand{\nl}{\let\nl\oldnl}}
\declaretheorem{lemma}
\theoremstyle{remark}
\newtheorem{remark}{Remark}
\theoremstyle{definition}
\newtheorem{assumption}{Assumption}
\renewcommand{\vec}[1]{\bm{#1}}
\DeclareMathOperator*{\E}{\mathbb{E}}
\def\NN{{\mathbb N}}
\def\RR{{\mathbb R}}
\newcommand{\1}{\mathbbm{1}}
\newcommand{\inprod}[2]{\left\langle #1, #2 \right\rangle}
\newcommand\numberthis{\addtocounter{equation}{1}\tag{\theequation}}
\newcommand{\fedconpe}{\texttt{FedConPE}\xspace}
\newcommand{\citet}{\cite}
\title{FedConPE: Efficient Federated Conversational Bandits with Heterogeneous Clients}
\author{}
\author{
Zhuohua Li\and
Maoli Liu\And
John C.S. Lui\\
\affiliations
The Chinese University of Hong Kong
\emails
\{zhli, mlliu, cslui\}@cse.cuhk.edu.hk
}
\begin{document}

\maketitle

\begin{abstract}
  Conversational recommender systems have emerged as a potent solution for efficiently eliciting user preferences.
  These systems interactively present queries associated with ``key terms'' to users and leverage user feedback to estimate user preferences more efficiently.
  Nonetheless, most existing algorithms adopt a centralized approach.
  In this paper, we introduce \fedconpe, a phase elimination-based federated conversational bandit algorithm, where \(M\) agents collaboratively solve a global contextual linear bandit problem with the help of a central server while ensuring secure data management.
  To effectively coordinate all the clients and aggregate their collected data, \fedconpe uses an adaptive approach to construct key terms that minimize uncertainty across all dimensions in the feature space.
  Furthermore, compared with existing federated linear bandit algorithms, \fedconpe offers improved computational and communication efficiency as well as enhanced privacy protections.
  Our theoretical analysis shows that \fedconpe is minimax near-optimal in terms of cumulative regret.
  We also establish upper bounds for communication costs and conversation frequency.
  Comprehensive evaluations demonstrate that \fedconpe outperforms existing conversational bandit algorithms while using fewer conversations.
\end{abstract}

\section{Introduction}
\label{sec:introduction}

  In the contemporary digital era, content providers face a growing demand to serve a large number of customers with diverse geographical locations and disparate preferences around the globe.
  To align content with varied user interests, recommender systems are typically designed to continuously learn and adapt to users' preferences by collecting data from users' feedback.
  For instance, recommender systems that advertise products or news can collect users' real-time click rates and employ online learning algorithms to refine their recommendations continually.
  Due to the distributed nature of data and users, large-scale recommender systems usually deploy multiple servers distributed around the world.
  As a result, \emph{federated learning} is leveraged to localize training on each server and foster collaboration among different servers to enhance learning efficiency while ensuring that sensitive user information is neither transmitted nor centrally stored.

  One common challenge that recommender systems encounter is the ``cold start'' problem, meaning that recommendations may not be accurate for new users with little historical data.
  A viable solution to this problem is called \emph{active learning}, or \emph{conversational recommendation}~\cite{christakopoulou-2016-towards-conversational,sun-2018-conversational-recommender,lei-2020-conversational-recommendation,gao-2021-advances-challenges}, where the recommender system can proactively query users some questions and obtain feedback, thereby quickly accumulating data and eliciting user preference.
  This strategy has been widely used in many real-world applications.
  For example, the well-known large language model, ChatGPT, occasionally offers two responses to users, asking them to select the one they prefer.
  To illustrate, consider the real-world example in Figure~\ref{fig:example}, where ChatGPT is tasked with implementing a specific algorithm in Python.
  ChatGPT provides two implementations with different coding styles: one is object-oriented, and the other is procedural-oriented.
  The user is able to choose which one she prefers by clicking one of the two responses.
  Such interactions help ChatGPT to learn about the user's preferences, enabling it to continually refine its learning process and generate better responses for other, similar users.
  Note that the feedback is gathered from globally distributed users, necessitating ChatGPT to aggregate the data during the learning process.
  \begin{figure}[htb]
    \centering
    \includegraphics[width=\linewidth]{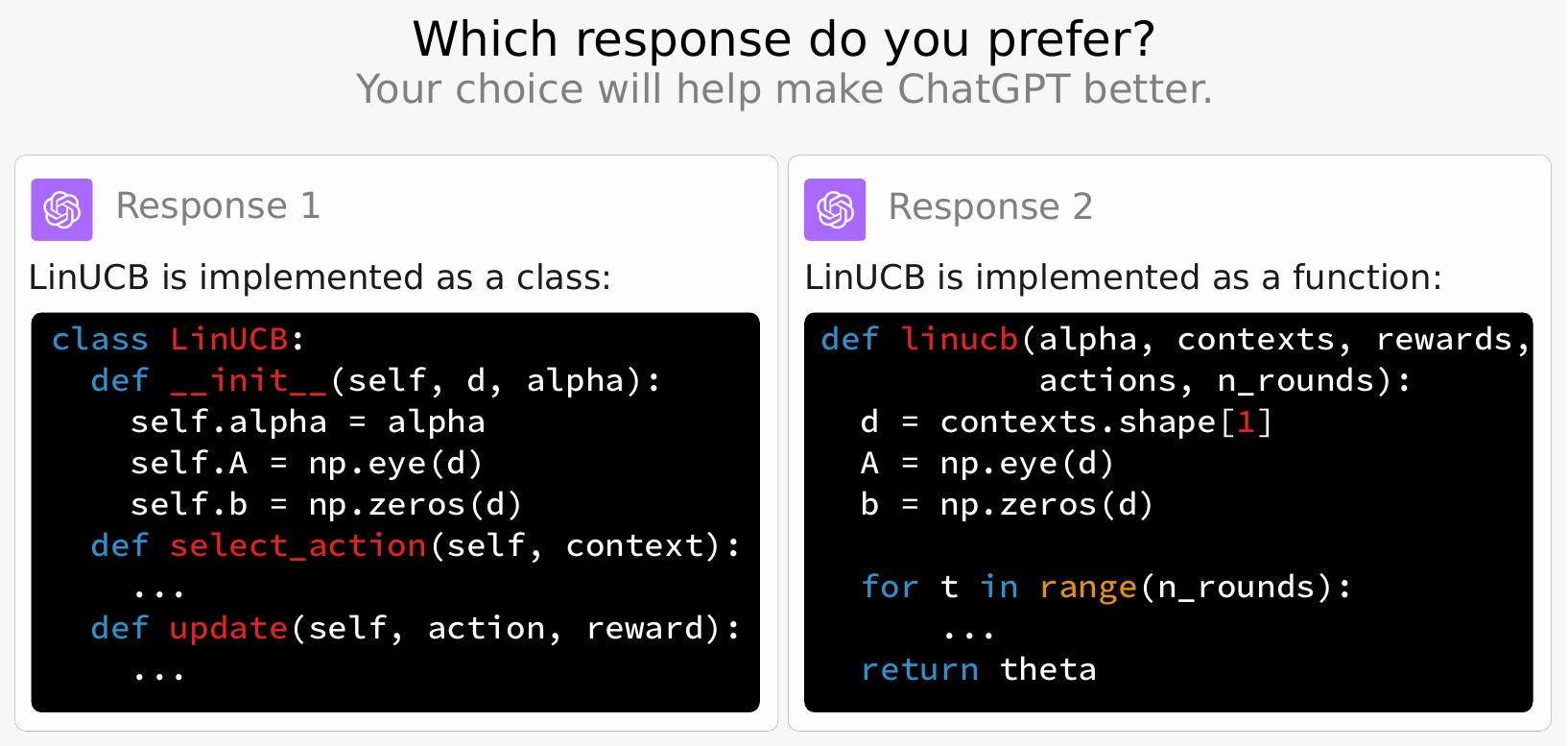}
    \caption{\label{fig:example} An example of conversational recommendation: ChatGPT sometimes gives two options for users to choose their preference.}
  \end{figure}

  In order to explore and understand user preferences, many recommender systems are equipped with \emph{contextual bandit} algorithms to balance the exploration and exploitation trade-off.
  A contextual bandit algorithm is a sequential decision-making algorithm, where in each round, it recommends items to the user and receives feedback/rewards (e.g., whether the user clicks on the recommended item or not).
  The objective of the algorithm is to devise an item recommendation strategy that maximizes the user's feedback over a long period.
  Recently, \emph{conversational contextual bandit} ~\cite{Zhang-Conversational-WWW20} has been proposed to model \emph{active learning} or \emph{conversational recommendation}.
  In this setting, besides receiving item-level feedback, the recommender occasionally prompts users with questions about \emph{key-terms} (e.g., movie genres, news topics) of recommended items and obtains key-term level feedback from users.
  Intuitively, the key-term level feedback reflects the user's preference for a category of items, allowing the recommender to speed up the learning process and eliminate the need to extensively explore all the items.

  While conversational recommendation is a robust solution for fast preference elicitation, the current formulation of conversational bandits still has the following problems.
  \begin{itemize}[leftmargin=*]
  \item Firstly, existing works on conversational bandits mostly follow the framework by \cite{Abbasi-Improved-2011}, which is designed to deal with linear bandit problems with \emph{infinitely many} arms (i.e., the recommended items).
  In practical recommender systems, however, the number of arms is typically finite.

  \item Secondly, existing studies about conversational bandits~\cite{Zhang-Conversational-WWW20,Wang-2023-Efficient} rely on a deterministic function \(b(t)\) to control the frequency of conversations.
  Moreover, they fix the sequence of item recommendation and conversations, i.e., the recommender can only initiate \(P\) conversations \emph{at once} per \(Q\) recommendations for some constant \(P\) and \(Q\).
  This does not align with the best practice of conversational recommendation in reality, imposes unnecessary constraints on the recommender, and potentially diminishes the user experience.

  \item Thirdly, existing literature about conversational bandits solely considers the single-client scenario, neglecting the crucial need for multi-client algorithms in real-world distributed systems.
  How to effectively coordinate data collected across different clients and appropriately initiate conversations remains an unsolved problem.
  \end{itemize}

  To address the challenges mentioned above, this paper considers the federated conversational bandit problem with \(M\) clients and \emph{finite} arm sets.
  We propose \fedconpe (\underline{Fed}erated \underline{Con}versational \underline{P}hase \underline{E}limination algorithm), where we follow the classical phase elimination algorithm~\cite{lattimore-2020-bandit-algorithms} to handle the finite arm setting and improve the regret upper bound to \(\mathcal{\widetilde{O}}(\sqrt{dMT})\).
  Instead of deterministically pre-defining the conversation frequency, our algorithm adaptively determines whether a conversation is needed according to the collected data.
  Moreover, it does not mandate the order of conversations and recommendations.
  Although there are existing works~\cite{huang-2021-federated,lin-2023-federated} proposed for federated linear bandit with finite arms, we highlight that integrating the conversational setting is nontrivial and has several advantages.
  On one hand, the existing algorithm requires all the clients to upload their active arm sets to the central server.
  This not only leaks sensitive information, such as clients' available choices, to the central server, but also increases the communication costs.
  On the other hand, the existing algorithm relies on computing the \emph{multi-client G-optimal design}~\cite{huang-2021-federated} by the server, which is known to be computationally prohibitive; thus it is challenging to achieve real-time recommendation.
  In contrast, for \fedconpe, with the help of conversations, each client only needs to locally compute its own \emph{G-optimal design} on its active arm set, which can be efficiently solved by the well-known Frank-Wolfe algorithm under suitable initialization~\cite{frank-wolfe-1956,todd-2016-minimum-volume}.
  Thus, both computation and communication costs are reduced.
  We refer interested readers to Appendix~\ref{sec:related-work} for a more comprehensive overview of related work.
  To the best of our knowledge, among all the conversational bandit literature, our algorithm is the first to consider the federated setting and the first to give an analysis on lower bounds.

  In summary, our contributions are listed as follows.
  \begin{itemize}[leftmargin=*]
  \item We propose \fedconpe, an algorithm for federated conversational bandits with finite arm sets, which can facilitate collaboration among all the clients to improve the efficiency of recommendations and conversations.
  \item We exploit the property of the conversational setting to enhance the efficiency in both computation and communication compared with existing work in federated linear contextual bandits with finite arm sets.
  \item We show that \fedconpe is nearly minimax optimal by theoretically proving the regret upper bound \(\mathcal{\widetilde{O}}(\sqrt{dMT})\) and the matching lower bound \(\Omega(\sqrt{dMT})\) for conversational bandits. We also prove the communication cost upper bound of \(\mathcal{O}(d^2M \log T)\), and show that conversations only happen for a small number of rounds.
  \item We conduct extensive experiments on both synthetic and real-world datasets to demonstrate that our algorithm achieves lower cumulative regret and uses fewer conversations than prior works.
  \end{itemize}

\section{Problem Formulation}
\label{sec:models}
\subsection{Federated Conversational Bandits}
  We define \([M] := \set{1, \dots, M}\) for \(M \in \NN^{+}\).
  For any real vector \(\vec{x}, \vec{y}\) and positive semi-definite (PSD) matrix \(\vec{V}\), \(\|\vec{x}\|\) denotes the \(\ell_2\) norm of \(\vec{x}\), \(\inprod{\vec{x}}{\vec{y}}=\vec{x}^\mathsf{T} \vec{y}\) denotes the dot product of vectors, and \(\|\vec{x}\|_{\vec{V}}\) denotes the Mahalanobis norm \(\sqrt{\vec{x}^\mathsf{T} \vec{V} \vec{x}}\).
  We consider the federated conversational multi-armed bandits setting, where there are \(M\) clients and a central server.
  Each client \(i\) has a finite set of arms (i.e., recommended items) \(\mathcal{A}_i \subset \RR^d\) and \(|\mathcal{A}_i|=K\).
  Note that clients are heterogeneous, meaning that different clients may have different arm sets, and we assume that arms in \(\mathcal{A}_i\) span \(\RR^d\).
  At each time step \(t \in [T]\), a client \(i \in [M]\) selects an arm \(\vec{a}_{i,t} \in \mathcal{A}_i\) and receives a reward \(x_{i,t}\) from the environment.
  The reward is assumed to have a linear structure: \(x_{i,t} = \vec{a}_{i,t}^\mathsf{T} \vec{\theta}^{*} + \eta_{i,t}\), where each arm \(\vec{a}_{i,t} \in \RR^d\) is represented as a feature vector, \(\vec{\theta}^{*} \in \RR^d\) is an unknown arm-level preference vector that all the clients are trying to learn, and \(\eta_{i,t}\) is a noise term.
  Our objective is to design a learning policy to choose the arms for each time step \(t=1, \dots, T\) such that the cumulative regret, which is the difference of cumulative rewards between our policy and the best policy across all clients, is as small as possible.
  The cumulative regret is defined as:
  \begin{equation}\label{eq:regret}
    R_M(T) = \sum_{i=1}^{M} \sum_{t=1}^{T} \left( \max_{\vec{a} \in \mathcal{A}_i} \vec{a}^\mathsf{T} \vec{\theta}^{*} - \vec{a}_{i,t}^\mathsf{T} \vec{\theta}^{*} \right).
  \end{equation}

  Besides obtaining feedback by pulling arms, the central server can also occasionally query the users of each client and obtain feedback on some conversational \emph{key terms} to accelerate the elicitation of user preferences.
  In particular, a ``key term'' refers to a keyword or topic related to a subset of arms.
  For example, the key term ``programming language'' may be related to arms such as ``C/C++'', ``Python'', ``Java'', etc.
  Let \(\mathcal{K} \subset \RR^d\) denote the finite set of key terms, with each element \(\vec{k} \in \mathcal{K}\) being a fixed and known feature vector.
  The same as previous works~\cite{Zhang-Conversational-WWW20,Wang-2023-Efficient}, the server can initiate a conversation with client \(i\) by presenting a key term \(\vec{k}_{i,t} \in \mathcal{K}\), and the client's feedback is modeled as \(\widetilde{x}_{i,t} = \vec{k}_{i,t}^\mathsf{T} \widetilde{\vec{\theta}}^{*}+\widetilde{\eta}_{i,t}\), where \(\widetilde{\vec{\theta}}^{*}\) is an unknown key-term level preference vector and \(\widetilde{\eta}_{i,t}\) is a noise term.
  It is important to note that in conversational bandit literature, the arm and key-term level preference vectors \(\vec{\theta}^{*}\) and \(\widetilde{\vec{\theta}}^{*}\) are either assumed to be very close or exactly the same.
  Here, we follow the formulation of \citet{Wang-2023-Efficient} and assume that the two preference vectors are identical.

  One significant difference between our formulation and the prior conversational bandit literature is that, previous works depend on a deterministic function \(b(t)\) to govern the frequency of conversations, typically defined as a linear or logarithmic function of \(t\).
  This means that the recommender only periodically launches conversations without considering whether the user preference has been thoroughly estimated, potentially impacting the user experience in practice.
  In contrast, as illustrated in Section~\ref{sec:algorithms}, our algorithm adaptively determines whether to initiate conversations based on the accumulated data.
  If the estimation of the unknown preference vector \(\vec{\theta}^{*}\) is already sufficiently accurate, the recommender will not prompt conversations to users.

  In the following, we list and explain our assumptions.
\begin{assumption}\label{assumption:1}
  We assume the feature vectors for both arms and key terms are normalized, i.e., \(\|\vec{a}\|=\|\vec{k}\|=1\) for all \(\vec{a} \in \mathcal{A}_i\) and \(\vec{k} \in \mathcal{K}\).
  We also assume the unknown preference vector is bounded, i.e., \(\|\vec{\theta}^{*}\| \leq 1\), and \(\eta_{i,t}, \widetilde{\eta}_{i,t}\) are both 1-subgaussian random variables.
\end{assumption}

\begin{assumption}\label{assumption:2}
  We assume that the elements in the key term set \(\mathcal{K}\) are sufficiently rich, such that for any unit vector \(\vec{v} \in \RR^d\), there exists a key term \(\vec{k} \in \mathcal{K}\) satisfying \(\vec{k}^\mathsf{T}\vec{v} \geq C\), where \(C \in (0,1]\) is a universal constant that is close to 1.
\end{assumption}

Assumption~\ref{assumption:1} aligns with the conventional assumption made in most of the linear bandits literature~\cite{Abbasi-Improved-2011}.
Assumption~\ref{assumption:2}, serving as a technical assumption, ensures that the feature vectors of key terms are sufficiently distributed across the entire feature space, such that they are representative enough to help each client minimize uncertainty during the learning process.
This assumption is mild and readily attainable.
For example, it is satisfied when the key term set \(\mathcal{K}\) contains an orthonormal basis in \(\RR^d\).

\subsection{Communication Model}
  We adopt a star-shaped communication paradigm, i.e., each client can communicate with the server by uploading and downloading data with zero latency, but clients cannot directly communicate with each other.
  In addition, we assume that clients and the server are fully synchronized, meaning that at each time \(t\), each client pulls and only pulls one arm.
  Considering that the regret definition does not take the key-term level feedback into account, without loss of generality, we assume that querying key terms does not increment the time step \(t\), thereby allowing querying a key term and pulling an arm to happen simultaneously.
  To prevent ambiguity of the notation \(\vec{k}_{i,t}\), we assume that at most one key term is queried per time step \(t\) for each client.
  Consistent with existing works~\cite{huang-2021-federated}, we define the communication cost as the number of scalars (either integers or real numbers) transmitted between the server and clients.

\section{Algorithms \& Theoretical Analysis}
\label{sec:algorithms}

\subsection{Challenges}
\label{sec:challenges}
  Our work is based on the classical single-client phase elimination algorithms for linear bandits (Section 22 in \citet{lattimore-2020-bandit-algorithms}), which we refer to as \texttt{LinPE}.
  In this algorithm, the learner estimates the unknown preference vector \(\vec{\theta}^{*}\) using the \emph{optimal design} of least squares estimators.
  Specifically, the learner minimizes the maximum prediction variance by computing the \emph{G-optimal design} \(\pi: \mathcal{A} \to [0,1]\), which is a distribution over the arm set \(\mathcal{A}\).
  As shown in Lemma~\ref{lemma:kiefer-wolfowitz}, the G-optimal design \(\pi\) guarantees that the prediction variance \(g(\pi) \leq d\).
  Then the learner plays arms according to the distribution \(\pi\), estimates the unknown parameter \(\vec{\theta}^{*}\), and eliminates inferior arms based on the estimation.
  As a result, the regret of \texttt{LinPE} scales as \(\mathcal{\widetilde{O}}(\sqrt{dT})\).

  \begin{lemma}[\protect\citeauthor{kiefer-wolfowitz-1960}]\label{lemma:kiefer-wolfowitz}
    Assume that \(\mathcal{A} \subset \RR^d\) is compact and \(\text{span}(\mathcal{A}) = \RR^d\), let \(\pi: \mathcal{A} \to [0,1]\) be a distribution on \(\mathcal{A}\) and define \(\vec{V}(\pi)=\sum_{\vec{a} \in \mathcal{A}} \pi(\vec{a})\vec{a} \vec{a}^\mathsf{T}\), then there exists a minimizer \(\pi^{*}\) of \(g(\pi)=\max_{\vec{a} \in \mathcal{A}} \|\vec{a}\|_{\vec{V}(\pi)}\) such that \(g(\pi^{*}) = d\) and \(|\text{Supp}(\pi^{*})| \leq d(d+1)/2\).
  \end{lemma}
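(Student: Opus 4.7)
The plan is to follow the classical proof of the Kiefer--Wolfowitz equivalence theorem. I read $g(\pi) = \max_{\vec{a}\in\mathcal{A}} \|\vec{a}\|_{\vec{V}(\pi)}$ as shorthand for the standard G-optimal design objective $\max_{\vec{a}\in\mathcal{A}} \sqrt{\vec{a}^\mathsf{T}\vec{V}(\pi)^{-1}\vec{a}}$, since otherwise $g(\pi^{*})=d$ cannot hold for unit-norm arms. The proof has three ingredients: a trace-trick lower bound on $g$, a detour through the D-optimal design to show the bound is attained, and a Carath\'eodory argument in the space of symmetric matrices for the support size.

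For the lower bound, for any $\pi$ making $\vec{V}(\pi)$ invertible (such $\pi$ exists because $\mathcal{A}$ spans $\RR^d$), averaging the prediction variance gives
\begin{equation*}
\sum_{\vec{a}\in\mathcal{A}} \pi(\vec{a})\, \vec{a}^\mathsf{T}\vec{V}(\pi)^{-1}\vec{a} = \tr\!\left(\vec{V}(\pi)^{-1}\sum_{\vec{a}} \pi(\vec{a})\vec{a}\vec{a}^\mathsf{T}\right) = d,
\end{equation*}
so the maximum is at least $d$, i.e., $g(\pi)^2 \geq d$. To show the bound is attained, I would introduce the D-optimal objective $f(\pi) = -\log\det \vec{V}(\pi)$, which is convex and blows up where $\vec{V}(\pi)$ becomes singular; compactness of $\mathcal{A}$ plus standard coercivity yields a minimizer $\pi^{*}$. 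A short calculation of the directional derivative toward a Dirac mass gives
\begin{equation*}
\left.\frac{d}{d\varepsilon}\right|_{\varepsilon=0^+} f\bigl((1-\varepsilon)\pi^{*} + \varepsilon\delta_{\vec{a}}\bigr) = d - \vec{a}^\mathsf{T}\vec{V}(\pi^{*})^{-1}\vec{a},
\end{equation*}
and first-order optimality for the convex $f$ forces this to be nonnegative for every $\vec{a}\in\mathcal{A}$. Hence $\vec{a}^\mathsf{T}\vec{V}(\pi^{*})^{-1}\vec{a}\leq d$, which together with the lower bound shows that the D-optimal $\pi^{*}$ is simultaneously G-optimal and attains $g(\pi^{*})^2 = d$.

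For the support size, observe that the convex hull $\mathcal{C} = \operatorname{conv}\{\vec{a}\vec{a}^\mathsf{T} : \vec{a}\in\mathcal{A}\}$ lies inside the $d(d+1)/2$-dimensional real vector space of symmetric $d\times d$ matrices, and $\vec{V}(\pi^{*})\in\mathcal{C}$. Since $g$ and $f$ depend on $\pi$ only through $\vec{V}(\pi)$, any alternative convex representation $\vec{V}(\pi^{*}) = \sum_j \lambda_j\, \vec{a}_j\vec{a}_j^\mathsf{T}$ produces another optimizer whose support is $\{\vec{a}_j\}$. Carath\'eodory's theorem in this matrix space then delivers such a representation with at most $d(d+1)/2$ atoms, provided one further observes that $\vec{V}(\pi^{*})$ lies on a face of $\mathcal{C}$ (equivalently, that the linear constraint $\vec{V}(\pi) = \vec{V}(\pi^{*})$, intersected with the probability simplex, defines a polytope whose vertices have at most $d(d+1)/2$ nonzero coordinates). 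The main obstacle I anticipate is precisely this last sharpening---pushing the generic Carath\'eodory count $d(d+1)/2 + 1$ down to the stated $d(d+1)/2$---while the rest reduces to routine convex analysis and the Kiefer--Wolfowitz equivalence.
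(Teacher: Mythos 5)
The paper offers no proof of this lemma to compare against: it is imported verbatim from Kiefer and Wolfowitz (1960) (see also Chapter 21 of the Lattimore--Szepesv\'ari book the authors cite) and used as a black box. Judged on its own, your outline is the standard equivalence-theorem proof, and your reading of the statement is the right one --- $g$ must be the squared inverse-Mahalanobis norm $\max_{\vec{a}}\vec{a}^\mathsf{T}\vec{V}(\pi)^{-1}\vec{a}$, exactly as the paper writes it inside Algorithm~\ref{algo:client}. The trace identity giving $g\geq d$, the first-variation computation $d-\vec{a}^\mathsf{T}\vec{V}(\pi^{*})^{-1}\vec{a}\geq 0$ at a D-optimal $\pi^{*}$, and the squeeze to equality are all correct as written.

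The one genuine gap is the one you flag yourself: Carath\'eodory in the $d(d+1)/2$-dimensional space of symmetric matrices only gives $d(d+1)/2+1$ atoms, and your parenthetical does not close it --- the polytope $\{\pi:\vec{V}(\pi)=\vec{V}(\pi^{*}),\ \sum_{\vec{a}}\pi(\vec{a})=1,\ \pi\geq 0\}$ is cut out by $d(d+1)/2+1$ equality constraints, so its vertices generically have $d(d+1)/2+1$ nonzero coordinates, not $d(d+1)/2$. The missing observation is already in your hands: since the $\pi^{*}$-average of $\vec{a}^\mathsf{T}\vec{V}(\pi^{*})^{-1}\vec{a}$ equals $d$ while first-order optimality bounds each term by $d$, you must have \emph{equality} $\vec{a}^\mathsf{T}\vec{V}(\pi^{*})^{-1}\vec{a}=d$ for every $\vec{a}\in\text{Supp}(\pi^{*})$. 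Hence each atom $\vec{a}\vec{a}^\mathsf{T}$ of the representation, and $\vec{V}(\pi^{*})$ itself, lies on the affine hyperplane $\{\vec{M}:\tr(\vec{V}(\pi^{*})^{-1}\vec{M})=d\}$, which has dimension $d(d+1)/2-1$; applying Carath\'eodory inside this hyperplane yields a convex representation with at most $d(d+1)/2$ atoms. (Equivalently: on the support, the normalization $\sum_{\vec{a}}\pi(\vec{a})=1$ is implied by $\vec{V}(\pi)=\vec{V}(\pi^{*})$ via the same trace identity, so only $d(d+1)/2$ of the equalities are independent.) The reweighted design has the same information matrix, hence the same $g$-value $d$, and is therefore still a minimizer. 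With that patch the argument is complete.
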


  To extend \texttt{LinPE} algorithm to the federated setting, one straightforward and intuitive approach is to let each client locally run \texttt{LinPE}, and the central server aggregates data collected from them to estimate \(\vec{\theta}^{*}\).
  Intuitively, with the server now possessing more data, the estimation should be more accurate.
  However, this simple idea \emph{will not work} because of client heterogeneity, namely, different clients may have different arms sets.
  Specifically, the G-optimal design computed by one client may not contain useful information for another client, and hence the estimation made by the aggregated data is likely to be biased, meaning that it may be accurate only for a specific client, but not others.
  As a result, even if the central server aggregates data from all the clients, it may not help improve the regret.
  One can verify that this intuitive algorithm only results in a regret bound of \(\mathcal{\widetilde{O}}(M \sqrt{dT})\), which can be trivially achieved by individually running the \texttt{LinPE} algorithm for each client without any communication.

\subsection{Intuition of Utilizing Key Terms}
  As we elaborated above, the main challenge of federated conversational bandits is that aggregating data according to each client's local G-optimal design does not necessarily lead to a better estimation of \(\vec{\theta}^{*}\) because of client heterogeneity.
  To mitigate this issue, the central server can leverage key terms to further explore the directions lacking information until the estimation is accurate across all directions in the feature space.

  Specifically, the matrix \(\vec{V}(\pi)\) in Lemma~\ref{lemma:kiefer-wolfowitz} (referred to as \emph{information matrix}) contains the ``information'' of the feature space.
  Each eigenvector of \(\vec{V}(\pi)\) represents a direction in the feature space, and its associated eigenvalue indicates how much information it encompasses.
  That is, an eigenvector associated with a larger eigenvalue points towards a direction where \(\vec{V}(\pi)\) contains more information, leading to a more precise estimation in this direction.
  Therefore, ensuring that all the eigenvalues of \(\vec{V}(\pi)\) are reasonably large guarantees that the model can make accurate predictions with high certainty throughout the entire feature space.

  Based on the above intuition, our algorithm enables the server to communicate with each client, identify directions with deficient information, and accordingly select appropriate key terms for each client.
  Then, the server can aggregate data from all clients and minimize the uncertainty across the entire space.
  All of the above intuitions are rigorously formulated in our theoretical analysis (see details in Appendix~\ref{sec:proof-regret}).

\subsection{\fedconpe Algorithms}
  The algorithms for the clients and the server are described in Algorithm~\ref{algo:client} and Algorithm~\ref{algo:server}, respectively.
  In the following, we define \(\mathcal{T}_{i,\vec{a}}^{\ell}\) as the set of time steps during which client \(i\) plays arm \(\vec{a}\) in phase \(\ell\), and define \(\mathcal{T}_{i}^{\ell} =\bigcup_{\vec{a} \in \mathcal{A}_i^{\ell}} \mathcal{T}_{i,\vec{a}}^{\ell}\).
  Similarly, \(\mathcal{\widetilde{T}}_{i,\vec{k}}^{\ell}\) represents the set of times steps during which client  \(i\) plays key term \(\vec{k}\) in phase \(\ell\), with \(\mathcal{\widetilde{T}}_{i}^{\ell} =\bigcup_{\vec{k} \in \mathcal{K}_i^{\ell}} \mathcal{\widetilde{T}}_{i,\vec{k}}^{\ell}\).
  We also denote \(\lambda_{\vec{v}}\) as the eigenvalue associated with eigenvector \(\vec{v}\) and define \(\varepsilon_{\ell} = 2^{-\ell}\).

\subsubsection{Client-side Algorithm}
\label{sec:client-algorithm}
  As shown in Algorithm~\ref{algo:client}, in each phase \(\ell\), each client \(i\) first computes its G-optimal design \(\pi_i^{\ell}\), a distribution among its available arms \(\mathcal{A}_i^{\ell}\) (called \emph{active} arms).
  We note that this can be effectively computed by the well-known Frank-Wolfe algorithm under an appropriate initialization.
  The main purpose of the G-optimal design is to minimize the maximum prediction variance so as to make the estimation of \(\vec{\theta}^{*}\) as precise as possible.
  At line~\ref{line:diagonalization}, by diagonalizing the information matrix \(\vec{V}_i^{\ell}(\pi_i^{\ell})\), the client can check all directions (represented by eigenvectors) to see if they lack adequate certainty (Line~\ref{line:check-eigenvalue}), then uploads the eigenvalue-eigenvector pairs to the central server.
  The server subsequently returns a set of key terms \(\mathcal{K}_i^{\ell}\) and the corresponding repetition times \(\set{n_{\vec{k}}}_{\vec{k} \in \mathcal{K}_i^{\ell}}\) (Line~\ref{line:download-key-terms}).
  Then client \(i\) plays arms and key terms for the required number of times, and shares its progress by uploading the local data to the server (Line~\ref{line:upload-data}) so that it can benefit other clients.
  Note that the order of playing each arm\slash key term is not important.
  I.e., within a phase, the client can freely arrange the order as long as ensuring each arm\slash key term is played the required times.
  This flexibility affords each client more freedom to change its recommendation strategy as needed.
  Finally, the client downloads the estimated preference parameter \(\widehat{\vec{\theta}}_{\ell}\), and updates its active arm set by eliminating inferior arms according to the estimation (Line~\ref{line:elimination}).
  We emphasize that the sensitive data for each client (e.g., which arm is played and the corresponding reward) is not transmitted to the central server as the client only shares locally aggregated data (\(\vec{G}_i^{\ell}\) and \(\vec{W}_i^{\ell}\)).

  \begin{algorithm}[htb]
  \DontPrintSemicolon
  \small
  \SetKwComment{Comment}{$\triangleright$\ }{}
  \SetKwInput{KwInit}{Initialization}
  \KwIn{Time horizon \(T\), number of clients \(M\), dimension \(d\), number of arms \(K\), arm set \(\mathcal{A}_i \subseteq \RR^d\), constant \(C \in (0,1]\) in Assumption~\ref{assumption:2}, constant \(N>0\)}
  \KwInit{Let \(\ell=1, \mathcal{A}_i^{\ell} = \mathcal{A}_i\)}
  \SetKwProg{Fn}{Function}{:}{}

  \While{not reaching time horizon \(T\)}{
    Compute G-optimal design \(\pi_i^{\ell}\) on \(\mathcal{A}_i^{\ell}\), such that\;
    \Indp\nonl\(\sum_{\vec{a} \in \mathcal{A}_i^{\ell}} \pi_i^{\ell}(\vec{a}) = 1, \ \ \vec{V}_i^{\ell}(\pi) = \sum_{\vec{a} \in \mathcal{A}_i^{\ell}} \pi(\vec{a}) \vec{a} \vec{a}^\mathsf{T}\)\;
    \nonl\(g(\pi_i^{\ell}) = \max_{\vec{a} \in \mathcal{A}_i^{\ell}} \|\vec{a}\|_{\vec{V}_i^{\ell}(\pi_i^{\ell})^{-1}}^{2}=d\)\;
    \Indm Diagonalize \(\vec{V}_i^{\ell}(\pi_i^{\ell})= \sum_{j=1}^{d} \lambda_{\vec{v}_j} \vec{v}_j \vec{v}_j^\mathsf{T}\) \label{line:diagonalization}\;
    \ForEach{\(\lambda_{\vec{v}_j} < \frac{3}{4(1-\varepsilon_{\ell}^2)dN}\)\label{line:check-eigenvalue}}{
      Upload \((\lambda_{\vec{v}_j}, \vec{v}_j)\)\;
    }
    Download \(\mathcal{K}_i^{\ell}\) and \(\set{n_{\vec{k}}}_{\vec{k} \in \mathcal{K}_i^{\ell}}\) from server\label{line:download-key-terms}\;
    \tcp{The order of plays is arbitrary}
    \ForEach(\Comment*[f]{Play arms}){\(\vec{a} \in \mathcal{A}_i^{\ell}\)}{
      Pull \(\vec{a}\) for \(T_{i,\ell}(\vec{a})=\left\lceil \frac{2d\pi_i^{\ell}(\vec{a})}{\varepsilon_{\ell}^2}\log \frac{2KM\log T}{\delta} \right\rceil\) times\;
      Receive rewards \(\set{x_{i,t}}_{t \in \mathcal{T}_{i,\vec{a}}^{\ell}}\)\;
    }
    \ForEach(\Comment*[f]{Play key terms}){\(\vec{k} \in \mathcal{K}_i^{\ell}\)}{
      Pull \(\vec{k}\) for \(n_{\vec{k}}\) times\;
      Receive rewards \(\set{\widetilde{x}_{i,t}}_{t \in \mathcal{\widetilde{T}}_{i,\vec{k}}^{\ell}}\)\;
    }
    \(
    \begin{aligned}
      \text{Upload }\vec{G}_i^{\ell} &= \sum_{\vec{a} \in \mathcal{A}_i^{\ell}} T_{i,\ell}(\vec{a}) \vec{a} \vec{a}^\mathsf{T} + \sum_{\vec{k} \in \mathcal{K}_i^{\ell}} n_{\vec{k}} \vec{k} \vec{k}^\mathsf{T}\text{, and}\\
      \vec{W}_i^{\ell} &= \sum_{t \in \mathcal{T}_i^{\ell}} \vec{a}_{i,t} x_{i,t} + \sum_{t \in \mathcal{\widetilde{T}}_i^{\ell}} \vec{k}_{i,t} \widetilde{x}_{i,t}
    \end{aligned}
    \)\label{line:upload-data}\;
    Download \(\widehat{\vec{\theta}}_{\ell}\) from server\;
    Eliminate suboptimal arms:
    \(\displaystyle \mathcal{A}_i^{\ell+1} = \set{\vec{a} \in \mathcal{A}_i^{\ell}: \max_{\vec{b} \in \mathcal{A}_i^{\ell}} \inprod{\widehat{\vec{\theta}}_{\ell}}{\vec{b}-\vec{a}} \leq 2\sqrt{\frac{N}{M}}\varepsilon_{\ell}}\) \label{line:elimination}\;
    \(\ell = \ell + 1\)\;
  }
  \caption{\texttt{FedConPE} Algorithm for client \(i\)} \label{algo:client}
\end{algorithm}

\subsubsection{Server-side Algorithm}
\label{sec:server-algorithm}
  As shown in Algorithm~\ref{algo:server}, the server's role is to find appropriate key terms in \(\mathcal{K}\) that can minimize the uncertainty for each client \(i\), and aggregate data from the clients to estimate the preference parameter.
  Upon receiving the eigenvalues and eigenvectors from each client, the server searches all the key terms and finds the closest one in terms of the inner product (Line~\ref{line:find-key-term}).
  Then the selected key term \(\vec{k}\) and the corresponding repetition times \(n_{\vec{k}}\) are sent back to the client (Line~\ref{line:send-back-key-terms}).
  Finally, the server aggregates data from all the clients and estimates the unknown preference parameter via linear regression (Line~\ref{line:aggregate-data} and Line~\ref{line:estimate-theta}).
  Note that estimating \(\widehat{\vec{\theta}}_{\ell}\) requires computing matrix inverse.
  Without loss of generality, we assume that \(\vec{G}\) is always invertible when the associated feature vectors span \(\RR^d\).
  When they do not span \(\RR^d\), we can always reduce the number of dimensions.

\begin{algorithm}[htb]
  \DontPrintSemicolon
  \small
  \SetKwInput{KwInit}{Initialization}
  \KwIn{Time horizon \(T\), number of clients \(M\), dimension \(d\), key term set \(\mathcal{K} \subseteq \RR^d\), constant \(C \in (0,1]\) in Assumption~\ref{assumption:2}, constant \(N>0\)}
  \KwInit{Let \(\ell=1, \vec{G}=\vec{0}, \vec{W}=\vec{0}\)}
  \SetKwFunction{SupportExploration}{SupportExploration}
  \SetKwProg{Fn}{Function}{:}{}

  \While{not reaching time horizon \(T\)}{
    \ForEach{\(i \in [M]\)}{
      Receive \(E_i=\set{(\lambda_{\vec{v}_j}, \vec{v}_j)}_j\), let \(\mathcal{K}_i^{\ell} = \emptyset\)\;
      \ForEach{\((\lambda_{\vec{v}_j}, \vec{v}_j) \in E_i\)}{
        \(\vec{k} = \argmax_{\vec{v} \in \mathcal{K}} \vec{v}^\mathsf{T}\vec{v}_j\)\label{line:find-key-term}\;
        \(\mathcal{K}_i^{\ell} = \mathcal{K}_i^{\ell} \cup (\lambda_{\vec{v}_j}, \vec{k})\) \;
        \(n_{\vec{k}} = \left\lceil \frac{\frac{3}{2(1-\varepsilon_{\ell}^2)N}-2d\lambda_{\vec{v}_j}}{C^2\varepsilon_{\ell}^2}\log \frac{2KM\log T}{\delta} \right\rceil\)\;
      }
      Send \(\mathcal{K}_i^{\ell}\) and \(\set{n_{\vec{k}}}_{\vec{k} \in \mathcal{K}_i^{\ell}}\) to client \(i\)\label{line:send-back-key-terms}\;
      Receive \(\vec{G}_i^{\ell}\) and \(\vec{W}_i^{\ell}\)\;
    }
    \(\vec{G} = \sum_{p \in [\ell]} \sum_{i \in [M]} \vec{G}_i^p,\ \
    \vec{W} = \sum_{p \in [\ell]}\sum_{i \in [M]} \vec{W}_i^p\) \label{line:aggregate-data}\;
    Broadcast \(\widehat{\vec{\theta}}_{\ell} = \vec{G}^{-1} \vec{W}\) to all clients \label{line:estimate-theta}\;
    \(\ell = \ell + 1\)\;
  }
  \caption{\texttt{FedConPE} Algorithm for server} \label{algo:server}
\end{algorithm}

\subsection{Theoretical Analysis}
\label{sec:theoretical-analysis}
This section presents the theoretical results of \fedconpe, including its cumulative regret, communication costs, and the number of conversations.
The proofs of Theorem~\ref{thm:regret}, \ref{thm:lowerbound}, \ref{thm:communication}, and \ref{thm:conversation} are given in Appendix~\ref{sec:proof-regret}, \ref{sec:proof-lowerbound}, \ref{sec:proof-communication}, and \ref{sec:proof-conversation}, respectively.
\begin{restatable}[Regret upper bound]{theorem}{restateregret}\label{thm:regret}
  With probability at least \(1-\delta\), the cumulative regret scales in \(\mathcal{O}\ab(\sqrt{dMT\log \frac{KM\log T}{\delta}})\).
\end{restatable}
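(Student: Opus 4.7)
The plan is to follow the classical phase elimination blueprint, adapted to the federated conversational setting. First I would define a high-probability ``good event'' under which the estimator $\widehat{\vec{\theta}}_\ell$ is uniformly close to $\vec{\theta}^{*}$ in every direction used by any client. Concretely, writing $\widehat{\vec{\theta}}_\ell - \vec{\theta}^{*} = \vec{G}^{-1}\sum_{i,t}(\vec{a}_{i,t}\eta_{i,t} + \vec{k}_{i,t}\widetilde{\eta}_{i,t})$, I would apply a subgaussian tail bound to the scalar $\langle \vec{a}, \widehat{\vec{\theta}}_\ell - \vec{\theta}^{*}\rangle = \vec{a}^\mathsf{T}\vec{G}^{-1}\vec{Z}$ and take a union bound over $\ell \leq \log_2 T$, $i \in [M]$, and the at most $K$ active arms per client, which yields the $\log(2KM\log T/\delta)$ factor.

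The pivotal step is to show that in every phase $\ell$,
\[
\vec{G} \succeq \frac{3M}{2(1-\varepsilon_\ell^2)N\varepsilon_\ell^2}\log\ab(\frac{2KM\log T}{\delta})\,\vec{I}.
\]
For each client $i$, the arm contribution $\sum_{\vec{a}}T_{i,\ell}(\vec{a})\vec{a}\vec{a}^\mathsf{T}$ equals $(2d/\varepsilon_\ell^2)\log(\cdot)\,\vec{V}_i^\ell(\pi_i^\ell)$. Along any eigenvector $\vec{v}_j$ with $\lambda_{\vec{v}_j} \geq 3/(4(1-\varepsilon_\ell^2)dN)$, arm exploration alone already meets the threshold; along the remaining (``weak'') directions, the server's choice of $n_{\vec{k}}$ combined with Assumption~\ref{assumption:2} (which yields $(\vec{k}^\mathsf{T}\vec{v}_j)^2 \geq C^2$) contributes exactly the deficit $3/(2(1-\varepsilon_\ell^2)N\varepsilon_\ell^2)\log(\cdot) - 2d\lambda_{\vec{v}_j}/\varepsilon_\ell^2 \cdot \log(\cdot)$, and summing across the $M$ clients delivers the stated lower bound. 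This gives $\vec{a}^\mathsf{T}\vec{G}^{-1}\vec{a} \lesssim N\varepsilon_\ell^2/M$, and the subgaussian bound then upgrades the good event to $|\langle \vec{a},\widehat{\vec{\theta}}_\ell - \vec{\theta}^{*}\rangle| \leq \sqrt{N/M}\,\varepsilon_\ell$ uniformly.

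With the uniform estimation bound in hand, the standard phase elimination argument runs as follows: the optimal arm $\vec{a}_i^{*}$ is never eliminated since $\max_{\vec{b}}\inprod{\widehat{\vec{\theta}}_\ell}{\vec{b}-\vec{a}_i^{*}} \leq 0 + 2\sqrt{N/M}\varepsilon_\ell$; and any arm $\vec{a} \in \mathcal{A}_i^{\ell+1}$ that survives phase $\ell$ satisfies a suboptimality gap of at most $4\sqrt{N/M}\,\varepsilon_\ell$. Hence each client's per-phase regret is bounded by $|\mathcal{T}_i^\ell|\cdot 8\sqrt{N/M}\,\varepsilon_{\ell+1} \lesssim d\sqrt{N/M}\,\log(\cdot)/\varepsilon_\ell$ (the $|\mathcal{A}_i^\ell|$ ceiling correction is absorbed into a lower-order term). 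Summing over the $M$ clients yields $\mathcal{O}(d\sqrt{NM}\log(\cdot)/\varepsilon_\ell)$ regret per phase, and since $\varepsilon_\ell = 2^{-\ell}$ this sum is geometric and dominated by the final phase $L$. Using $\sum_\ell \frac{2dM}{\varepsilon_\ell^2}\log(\cdot) \leq MT$ to solve for $L$ gives $1/\varepsilon_L \lesssim \sqrt{T/(d\log(\cdot))}$, whence $R_M(T) = \mathcal{O}(\sqrt{dMT\log(KM\log T/\delta)})$.

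The main obstacle is the careful coordination of constants in the information-matrix lower bound: the threshold $3/(4(1-\varepsilon_\ell^2)dN)$, the repetition count $n_{\vec{k}}$, and the elimination radius $2\sqrt{N/M}\,\varepsilon_\ell$ must be chosen so that the subgaussian concentration exactly matches the elimination criterion, and one must verify this holds as a PSD inequality (not only direction by direction) even though different clients upload different eigenvectors. A secondary subtlety is the union bound: since arms are eliminated adaptively, the total number of (arm, phase) pairs must be bounded a priori (by $KM\log T$) rather than via the random trajectory, and the subgaussian argument must be applied to a martingale-like sum of the rewards collected across possibly data-dependent stopping times within each phase.
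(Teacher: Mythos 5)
Your proposal follows essentially the same route as the paper: the same subgaussian concentration bound via $\|\vec{a}\|_{\vec{G}^{-1}}$, the same key-term argument that fills the weak eigendirections of $\vec{V}_i^{\ell}(\pi_i^{\ell})$ up to the threshold $\tfrac{3}{4(1-\varepsilon_{\ell}^2)dN}$ using Assumption~\ref{assumption:2}, the same elimination lemmas, and the same geometric summation over phases. The only substantive difference is that you lower-bound $\vec{G}$ using only phase $\ell$'s data, whereas the paper accumulates $\vec{G}_i^{p}$ over all phases $p\le\ell$ (via Weyl's inequality and $\sum_{p\le\ell}\varepsilon_p^{-2}=\tfrac{4}{3}(\varepsilon_\ell^{-2}-1)$), which is precisely what cancels the $(1-\varepsilon_\ell^2)$ factor and makes the concentration radius come out to exactly $\sqrt{N/M}\,\varepsilon_\ell$ as required by the elimination rule — your single-phase version is off by a benign constant factor of at most $2/\sqrt{3}$.
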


\begin{remark}\label{remark:regret}
  When there is only one client (i.e., \(M=1\)), our setting reduces to the non-federated conversational bandits and the regret scales in \(\mathcal{\widetilde{O}}(\sqrt{dT})\).
  It improves the result of prior works~\cite{Zhang-Conversational-WWW20,Wang-2023-Efficient}, which is \(\mathcal{\widetilde{O}}(d\sqrt{T})\).
  The improvement stems from utilizing phase elimination on finite arm sets.
  Also, our regret bound coincides with that presented by \citet{huang-2021-federated}.
  Although \fedconpe relies on the conversational setting, it sidesteps the computationally prohibitive \emph{multi-client G-optimal design}, making it easier to achieve real-time recommendations.
\end{remark}

\begin{restatable}[Regret lower bound]{theorem}{restatelowerbound}\label{thm:lowerbound}
  For any policy that chooses at most one key term at each time step, there exists an instance of the federated conversational bandits such that the policy suffers an expected regret of at least \(\Omega(\sqrt{dMT})\).
\end{restatable}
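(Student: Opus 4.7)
The plan is to construct a family of hard instances on which any federated conversational bandit policy must suffer regret $\Omega(\sqrt{dMT})$, via a Le Cam/Assouad-style information-theoretic argument adapted to absorb the key-term observations.

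First, I would choose an instance in which all $M$ clients are symmetric. Set $\mathcal{A}_i = \{\vec{e}_1,\ldots,\vec{e}_d\}$ for every client $i$, and let $\mathcal{K}$ be a sufficiently fine cover of the unit sphere so that Assumption~\ref{assumption:2} holds. Consider $d{+}1$ hypotheses indexed by $j \in \{0,1,\ldots,d\}$: under hypothesis $0$ set $\vec{\theta}^{*(0)} = \vec{0}$, while under hypothesis $j \in [d]$ set $\vec{\theta}^{*(j)} = \Delta\,\vec{e}_j$ for a gap $\Delta = c\sqrt{d/(MT)}$ to be tuned. We assume $MT \geq d$ so that $\|\vec{\theta}^{*(j)}\| = \Delta \leq 1$; otherwise the claimed bound is vacuous.

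Second, let $P_j$ be the distribution of the entire interaction history (all arm and key-term rewards across the $M$ clients and $T$ rounds) when $\vec{\theta}^{*} = \vec{\theta}^{*(j)}$. Taking worst-case Gaussian rewards, the divergence decomposition yields
\begin{equation*}
\mathrm{KL}(P_0, P_j) \;\leq\; \tfrac{\Delta^2}{2}\,\EE_0\!\bigl[\,T_j + S_j\bigr], \qquad S_j \;:=\; \sum_{(i,t)} k_{j,i,t}^{2},
\end{equation*}
where $T_j$ counts the total pulls of arm $\vec{e}_j$ across all clients and $S_j$ accumulates the squared $j$-th coordinate of each key term queried. Since every arm pull charges exactly one $T_j$ and every unit-norm key term contributes $\|\vec{k}\|^2 = 1$ split across coordinates, $\sum_{j=1}^d \EE_0[T_j + S_j] \leq MT + MT = 2MT$. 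Averaging yields $\tfrac{1}{d}\sum_j \mathrm{KL}(P_0,P_j) \leq \Delta^2 MT/d$, and Jensen's inequality combined with Pinsker's gives $\bar{\mathrm{TV}} := \tfrac{1}{d}\sum_j \mathrm{TV}(P_0,P_j) \leq \Delta\sqrt{MT/(2d)}$.

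Third, under hypothesis $j$ the unique optimal arm is $\vec{e}_j$ with a gap of $\Delta$ over every other arm, so $\EE_j[R_M(T)] \geq \Delta\,(MT - \EE_j[T_j])$. The change-of-measure bound $\EE_j[T_j] \leq \EE_0[T_j] + MT\cdot\mathrm{TV}(P_0,P_j)$, averaged over $j \in [d]$ using $\sum_j \EE_0[T_j] \leq MT$ and the TV estimate above, gives
\begin{equation*}
\max_{j\in[d]} \EE_j[R_M(T)] \;\geq\; \Delta\,MT\,\Bigl(1 - \tfrac{1}{d} - \Delta\sqrt{MT/(2d)}\Bigr).
\end{equation*}
Choosing $\Delta = c\sqrt{d/(MT)}$ for a small enough universal constant $c$ balances the two negative terms and delivers $\max_j \EE_j[R_M(T)] = \Omega(\sqrt{dMT})$, from which the stated $\Omega(\sqrt{dMT})$ bound on the worst-case expected regret follows.

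The main obstacle is the key-term feedback. Because the learner may adaptively aim key terms at whichever direction it currently suspects, one might fear that conversations kill the lower bound. The crucial observation is the \emph{unit-norm budget}: a key term's squared coordinates sum to $1$, so however adaptively one concentrates on a single direction, the total contribution of at most $MT$ key-term queries to $\sum_j \mathrm{KL}(P_0,P_j)$ is capped at $MT\Delta^2/2$, matching the arm-pull contribution up to a factor of $2$ that is absorbed into the $\Omega(\cdot)$ constant. This is precisely where the restriction ``at most one key term per time step'' in the theorem statement is used; without it, the effective sample count could grow without bound and the lower bound would no longer hold.
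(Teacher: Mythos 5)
Your proof is correct, and it reaches the $\Omega(\sqrt{dMT})$ bound by a genuinely different information-theoretic route than the paper. The paper first collapses the $M$-client problem into a single client with $MT$ observations and then runs a two-point Le Cam argument: the alternative $\vec{\theta}'$ perturbs a single least-explored coordinate $s$, chosen by pigeonhole jointly over the arm counts $N_j$ and the key-term counts $\widetilde{N}_j$, and the two regrets are combined via the Bretagnolle--Huber inequality. You instead run an Assouad-style average over $d$ alternatives against a null instance $\vec{\theta}^{*(0)}=\vec{0}$, using Pinsker's inequality and a change of measure on $T_j$; both routes land on the same $\Delta \asymp \sqrt{d/(MT)}$. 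The substantive difference is how the conversational feedback is absorbed. The paper takes the key-term set to contain the standard basis and counts how often the key term $\vec{e}_s$ itself is queried; the step in its divergence decomposition asserting that the reward distributions differ only for index $s$ implicitly requires that no other available key term has a nonzero $s$-th coordinate. Your unit-norm budget $\sum_j k_j^2=1$ handles arbitrary unit-norm key terms: however adaptively the policy aims its at most $MT$ conversations, their total contribution to $\sum_j \mathrm{KL}(P_0,P_j)$ is capped at $MT\Delta^2/2$, matching the arm-pull budget. This is a cleaner and more robust accounting, and it makes transparent why the one-key-term-per-step restriction is exactly what preserves the lower bound. Two small points to tidy up: the factor $1-1/d$ forces $d\ge 2$ (the paper's $\sqrt{(d-1)MT}$ bound degenerates at $d=1$ in the same way), and if the instance must have $|\mathcal{A}_i|=K>d$ you should pad the arm set with unit vectors whose inner products with every $\vec{e}_j$ are nonpositive, so that the per-pull gap under hypothesis $j$ remains at least $\Delta$.
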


\begin{remark}
  Theorem~\ref{thm:regret} and Theorem~\ref{thm:lowerbound} imply that \fedconpe is minimax optimal up to a logarithmic factor.
\end{remark}

\begin{restatable}[Communication cost]{theorem}{restatecommunication}\label{thm:communication}
  The communication cost scales in \(\mathcal{O}(d^2M\log T)\).
\end{restatable}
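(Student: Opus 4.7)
The plan is to bound separately (i) the communication cost per phase and (ii) the total number of phases, then multiply.

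First I would count the scalars exchanged between each client $i$ and the server within a single phase $\ell$. By Lemma~\ref{lemma:kiefer-wolfowitz} the G-optimal design $\pi_i^\ell$ has support of size at most $d(d+1)/2$, and the diagonalization at line~\ref{line:diagonalization} yields at most $d$ eigenvalue-eigenvector pairs; hence the upload of $\set{(\lambda_{\vec{v}_j},\vec{v}_j)}_j$ costs at most $d\cdot(d+1) = O(d^2)$ scalars, and the response $\mathcal{K}_i^\ell$ together with the repetition counts $\set{n_{\vec{k}}}$ is likewise $O(d^2)$. The subsequent upload of the sufficient statistics $\vec{G}_i^\ell \in \RR^{d\times d}$ and $\vec{W}_i^\ell \in \RR^d$ takes $d^2 + d = O(d^2)$ scalars, and the broadcast of $\widehat{\vec{\theta}}_\ell$ contributes $O(d)$. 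Summing across all $M$ clients, each phase therefore requires $O(d^2 M)$ scalars of communication.

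Next I would bound the number of phases by $O(\log T)$. The key observation is that the per-phase play counts grow geometrically: in phase $\ell$, client $i$ plays arms for a total of
\[
\sum_{\vec{a}\in\mathcal{A}_i^\ell} T_{i,\ell}(\vec{a}) \;\geq\; \frac{2d}{\varepsilon_\ell^2}\log\frac{2KM\log T}{\delta} \;=\; \Omega\!\ab(d\cdot 4^\ell)
\]
rounds, since $\sum_{\vec{a}} \pi_i^\ell(\vec{a})=1$ and $\varepsilon_\ell = 2^{-\ell}$. Because clients are fully synchronized (each time step corresponds to exactly one arm pull per client), if the algorithm runs for $L$ phases then the horizon must satisfy $T \geq \Omega(d\cdot 4^L)$, yielding $L = O(\log(T/d)) = O(\log T)$.

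Combining the two bounds gives a total communication cost of $O(d^2 M)\cdot O(\log T) = O(d^2 M \log T)$, as claimed. The only subtle step I expect is the phase-count bound: one has to be careful that the ceiling in the definition of $T_{i,\ell}(\vec{a})$ and the fact that key-term queries do not advance $t$ both work in our favor (they can only increase arm-pull counts or leave the time budget unchanged), so the lower bound on rounds-per-phase is legitimate. Everything else is a routine scalar count using Lemma~\ref{lemma:kiefer-wolfowitz} and the fixed dimensions of the uploaded/downloaded objects.
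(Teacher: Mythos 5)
Your proposal is correct and follows essentially the same route as the paper: count $O(d^2)$ scalars per client per phase (eigenpairs, key terms with repetition counts, $\vec{G}_i^{\ell}$, $\vec{W}_i^{\ell}$, and $\widehat{\vec{\theta}}_{\ell}$), then multiply by $M$ clients and the $O(\log T)$ phase bound, which the paper establishes in Lemma~\ref{lemma:bound-num-of-phases} via the same geometric growth of per-phase play counts you describe. The appeal to Lemma~\ref{lemma:kiefer-wolfowitz} for the design's support size is unnecessary (the design itself is never transmitted) but harmless.
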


\begin{remark}\label{remark:communication}
  Compared with \citet{huang-2021-federated}, whose communication cost is \(O(d^2MK\log T)\), our result demonstrates an improvement by a factor of \(K\).
  This is because \fedconpe does not require each client to upload its active arm set (whose cardinality is of size \(\mathcal{O}(K)\)).
  Instead, each client independently collects data based on its own arms, and only communicates the aggregated data to the server.
  This not only reduces communication overhead but also enhances privacy preservation by limiting the data shared with the server.
\end{remark}

\begin{restatable}[Conversation frequency upper bound]{theorem}{restateconversation}\label{thm:conversation}
  For any client \(i \in [M]\) and phase \(\ell \in [L]\), let \(\beta = \lambda_{\text{min}}\ab(V_i^{\ell}(\pi_i^{\ell}))\).
  We have: (a) If \(\beta\geq \frac{3}{4(1-\varepsilon_{\ell}^2)dN}\), no conversations will be initiated. (b) Otherwise, the fraction of conversations is at most \(\frac{\frac{3}{4(1-\varepsilon_{\ell}^2)}-dN\beta}{NC^2}\) relative to the total phase length.
\end{restatable}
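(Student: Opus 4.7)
The plan is to inspect Algorithms~\ref{algo:client} and~\ref{algo:server} directly and count, for client $i$ in phase $\ell$, both the total key-term pulls prescribed by the server and the total arm pulls performed by the client. Since the test at Line~\ref{line:check-eigenvalue} entirely determines which eigen-directions trigger a conversation, and since the server's formula for $n_{\vec k}$ is explicit, the whole argument reduces to bookkeeping.

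For part~(a), I would observe that when $\beta = \lambda_{\min}(\vec V_i^{\ell}(\pi_i^{\ell})) \geq \tfrac{3}{4(1-\varepsilon_{\ell}^2)dN}$, every eigenvalue of $\vec V_i^{\ell}(\pi_i^{\ell})$ automatically satisfies $\lambda_{\vec v_j} \geq \beta \geq \tfrac{3}{4(1-\varepsilon_{\ell}^2)dN}$, so the conditional at Line~\ref{line:check-eigenvalue} never fires. Consequently the client uploads no $(\lambda_{\vec v_j}, \vec v_j)$ pair, the server returns $\mathcal K_i^{\ell}=\emptyset$, and no conversation is initiated in phase~$\ell$.

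For part~(b), I would first total the arm pulls: using $\sum_{\vec a \in \mathcal A_i^{\ell}} \pi_i^{\ell}(\vec a) = 1$ and absorbing the ceilings into the logarithmic factor, $\sum_{\vec a} T_{i,\ell}(\vec a)$ is of order $\tfrac{2d}{\varepsilon_{\ell}^2}\log\tfrac{2KM\log T}{\delta}$; because querying a key term does not advance $t$ (Section~\ref{sec:models}), this count also coincides with the phase length in time steps. Next I would total the key-term pulls: $\vec V_i^{\ell}(\pi_i^{\ell}) \in \RR^{d\times d}$ has at most $d$ eigenvalues, so at most $d$ pairs are uploaded, and for each the server prescribes $n_{\vec k}$ as in Algorithm~\ref{algo:server}. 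Since $\lambda_{\vec v_j} \geq \beta$, replacing $\lambda_{\vec v_j}$ by $\beta$ in the numerator yields the uniform upper bound $n_{\vec k} \leq \tfrac{3/(2(1-\varepsilon_{\ell}^2)N) - 2d\beta}{C^2 \varepsilon_{\ell}^2}\log\tfrac{2KM\log T}{\delta}$, so the total key-term pull count is at most $d$ times this quantity.

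Dividing the key-term count by the arm-pull count (which upper bounds the conversation fraction relative to the phase length, since $x/(a+x) \leq x/a$) cancels $\varepsilon_{\ell}^2$, one factor of $d$, and the logarithm, leaving $\tfrac{3/(4(1-\varepsilon_{\ell}^2)N) - d\beta}{C^2}$; multiplying numerator and denominator by $N$ then gives exactly the claimed $\tfrac{3/(4(1-\varepsilon_{\ell}^2)) - dN\beta}{NC^2}$. The only mildly delicate pieces are the bookkeeping around the ceilings and the case where fewer than $d$ eigenvalues fall below the threshold, but the former is absorbed into the dominant $\log(2KM\log T/\delta)$ factor and the latter only shrinks the conversation count, so neither affects the stated bound.
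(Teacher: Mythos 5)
Your proposal is correct and follows essentially the same route as the paper's own proof: part (a) by noting the test at Line~\ref{line:check-eigenvalue} never fires, and part (b) by lower-bounding the arm-pull count $T_\ell \geq \frac{2d}{\varepsilon_\ell^2}\log\frac{2KM\log T}{\delta}$, upper-bounding each $n_{\vec k}$ via $\lambda_{\vec v_j}\geq\beta$ summed over at most $d$ eigendirections, and taking the ratio. The paper likewise treats the ceilings as negligible, so your bookkeeping matches it exactly.
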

\begin{remark}\label{remark:conversation}
  Even in the worst case, where \(\beta=0\), the proportion of conversations is at most \(\mathcal{O}(1/(NC^2))\).
  This theorem indicates that \fedconpe initiates only a small number of conversations over the time horizon.
\end{remark}

\section{Performance Evaluation}
\label{sec:evaluation}
  In this section, we conduct extensive experiments to demonstrate the effectiveness of our algorithm.
  Specifically, we aim to answer the following research questions:
  \begin{enumerate}[leftmargin=*]
  \item For both the single-client and multi-client settings, does our algorithm \fedconpe outperform existing state-of-the-art algorithms for conversational contextual bandits?
  \item How do the number of clients \(M\) and the arm set size \(K\) affect the performance of \fedconpe?
  \item Does our algorithm use fewer conversations in practice?
  \end{enumerate}

\subsection{Experimental Settings}
\label{sec:experimental-settings}

\subsubsection{Datasets}
  In light of previous studies, we generate a synthetic dataset and use the following three real-world datasets.
  The details of data generation and preprocessing are postponed to Appendix~\ref{sec:data-generation-preprocessing} in the extended version of this paper~\cite{li-2024-fedconpe}.
  \begin{itemize}[leftmargin=*]
    \item \textbf{MovieLens-25M~\cite{harper-2015-movielens}}: A dataset from MovieLens, a movie recommendation website. It contains 25,000,095 ratings across 62,423 movies created by 162,541 users.
    \item \textbf{Last.fm~\cite{cantador-2011-second-workshop}}: A dataset from an online music platform Last.fm. It contains 186,479 tag assignments, interlinking 1,892 users with 17,632 artists.
    \item \textbf{Yelp}\footnote{\url{https://www.yelp.com/dataset}}: A dataset from Yelp, a website where users post reviews for various businesses.
    It contains 6,990,280 reviews for 150,346 businesses created by 1,987,897 users.
  \end{itemize}

  \subsubsection{Baseline Algorithms}
  We select the following \ algorithms from existing studies as the baselines that we will compare with.
  \begin{itemize}[leftmargin=*]
  \item \texttt{LinUCB}~\cite{li-2010-a-contextual,Abbasi-Improved-2011}: The standard linear contextual bandit framework designed for \emph{infinite} arm sets.
    It does not consider the conversational setting so it only has arm-level feedback.
  \item \texttt{ConUCB}~\cite{Zhang-Conversational-WWW20}: The original algorithm proposed for conversational contextual bandits. It queries key terms when conversations are allowed and leverages conversational feedback on key terms to improve learning speed.
  \item \texttt{Arm-Con}~\cite{christakopoulou-2016-towards-conversational}: The algorithm that initiates conversations based on arms instead of key terms. The arm selection follows \texttt{LinUCB}.
  \item \texttt{ConLinUCB}~\cite{Wang-2023-Efficient}: A series of algorithms that change the key term selection strategy.
    It contains 3 algorithms: \texttt{ConLinUCB-BS} computes the \emph{barycentric spanner} of key terms as an exploration basis.
    \texttt{ConLinUCB-MCR} selects key terms that have the largest confidence radius.
    \texttt{ConLinUCB-UCB} uses a \texttt{LinUCB}-like strategy to choose key terms that have the largest upper confidence bounds.
  \end{itemize}

\subsection{Evaluation Results}
\label{sec:evaluation-results}

\subsubsection{Cumulative Regret for a Single Client}
  First of all, we evaluate the single-client scenario and compare \fedconpe with all the six baseline algorithms in terms of cumulative regret.
  We randomly select 10 users and calculate their cumulative regret over $T=6,000$ rounds.
  We set the arm set size $K=100$ and randomly select $K$ arms from $|\mathcal{A}|$ for the client.
  For the baseline conversational algorithms \texttt{ConUCB}, \texttt{Arm-Con}, and \texttt{ConLinUCB}, we adopt the conversation frequency function \(b(t)=5 \lfloor \log(t) \rfloor\), which adheres to their original papers.
  The result is shown in Figure~\ref{fig:regret-single-client}, where the \(x\)-axis is the number of rounds and the \(y\)-axis is the cumulative regret.
  We observe similar results among all the datasets, aligning with the findings from previous studies.
  That is, all the algorithms exhibit sublinear regrets with respect to the number of rounds.
  The algorithms without querying key terms (i.e., \texttt{LinUCB} and \texttt{Arm-Con}) have the poorest performance (largest cumulative regret), while other algorithms that query key terms perform better, thereby showing the importance of conversations about key terms.
  Our algorithm \fedconpe \emph{outperforms all of them}, achieving at least $5.25\%$ improvements among all the datasets.
  This performance improvement substantiates our theoretical results, showing that even under the non-federated scenario (i.e., \(M=1\)), our algorithm also achieves lower regret (see discussion in Remark~\ref{remark:regret}).

\begin{figure}[htb]
    \centering
    \includegraphics[width=\linewidth]{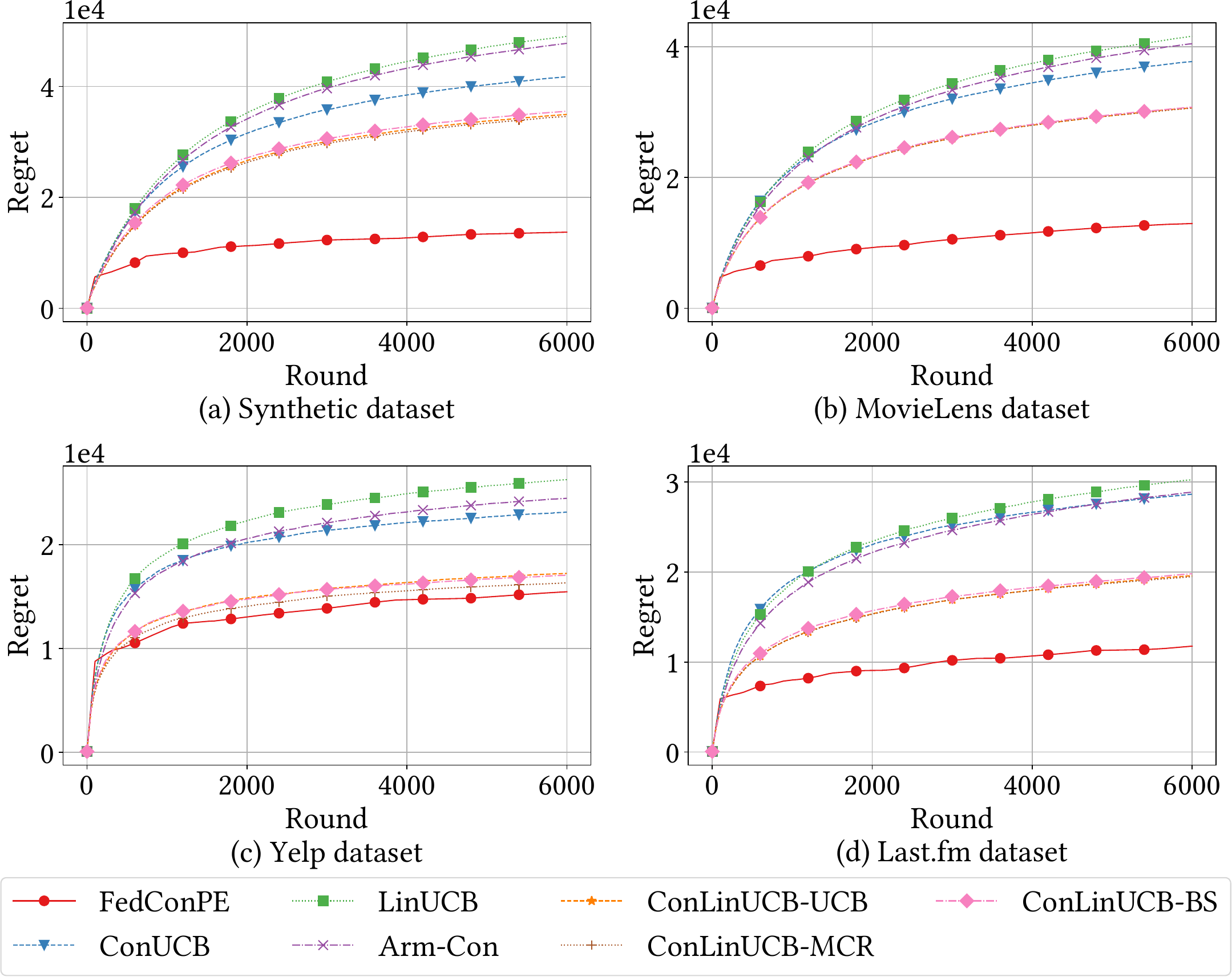}
    \caption{\label{fig:regret-single-client} Cumulative regret for the single-client scenario.}
  \end{figure}

\subsubsection{Cumulative Regret for Multiple Clients}
  Then, we evaluate the federated (i.e., multi-client) scenario.
  Since our work is the first to consider the federated setting in conversational bandits, when comparing with the baselines, we simply run the baseline algorithms on each client individually, without any communication with the server.
  We set the number of clients $M=10$ and independently select $K=100$ random arms for each client.
  Other parameters remain the same as the single-client setting.
  As depicted in Figure~\ref{fig:regret-multi-client}, \fedconpe shows even more advantages compared with the single-client scenario, achieving at least $37.05\%$ improvement over other algorithms.
  The advantage stems from the federated framework, where the central server aggregates data from each client to better estimate the unknown preference vector.

  \begin{figure}[htb]
    \centering
    \includegraphics[width=\linewidth]{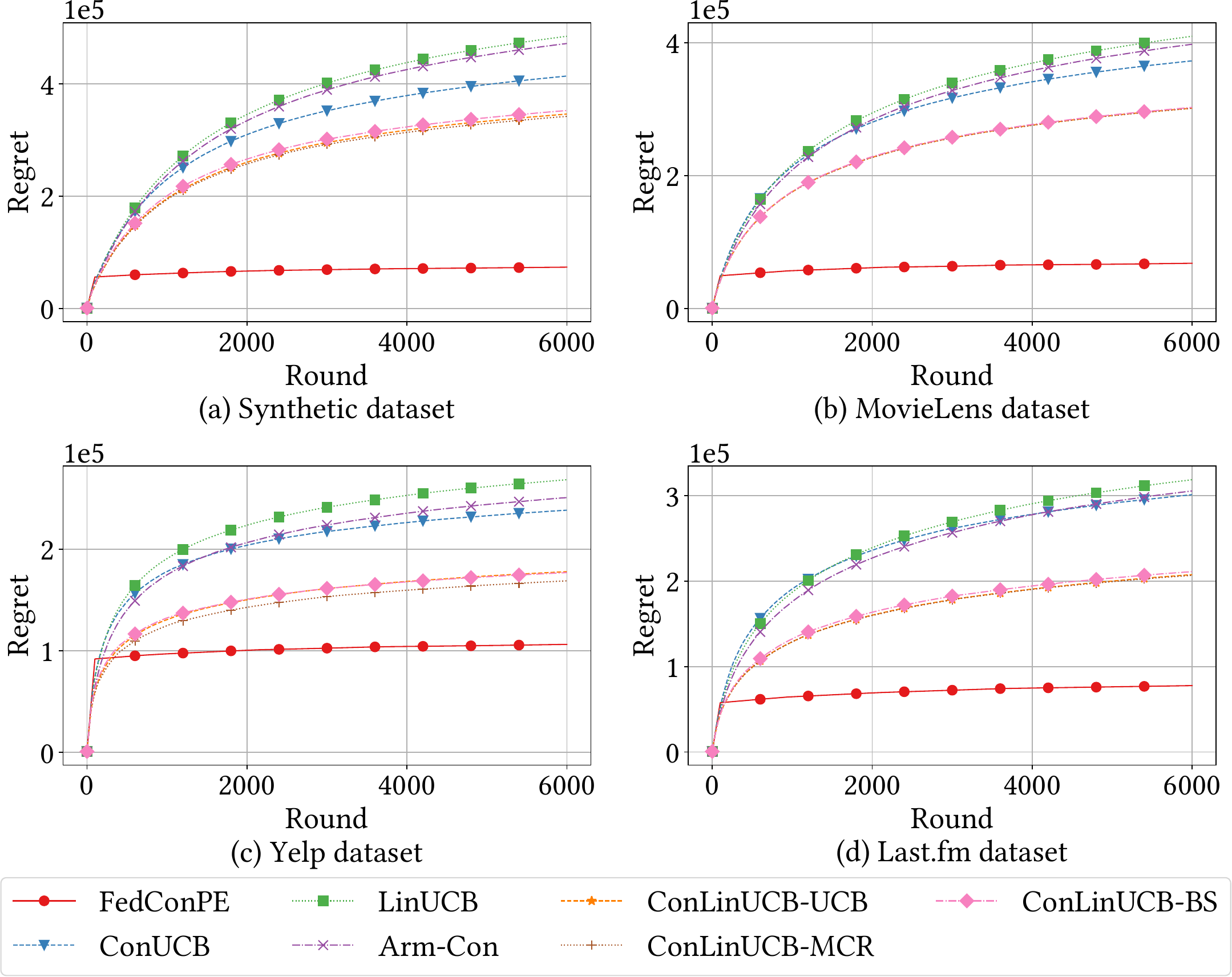}
    \caption{\label{fig:regret-multi-client} Cumulative regret for the multi-client scenario.}
  \end{figure}

  \subsubsection{Cumulative Regret with Different Number of Clients}
  We further demonstrate the effect of the number of clients by varying it from 3 to 15 with other parameters fixed and compare the cumulative regret at time \(T=6,000\).
  For illustration purposes, we only present the result for the synthetic dataset, while the results for the real-world datasets, which exhibit similar patterns, are provided in Appendix~\ref{appendix:plot-clients}.
  Figure~\ref{fig:regret-vs-num-clients} depicts that, without communication, the cumulative regrets of all the baseline algorithms increase linearly with the number of clients, i.e., \(\mathcal{\widetilde{O}}(dM\sqrt{T})\).
  In contrast, our algorithm leverages the data collected from all the clients, achieving a regret scaled in \(\mathcal{\widetilde{O}}(\sqrt{dMT})\), thereby mitigating the regret increase.

  \begin{figure}[htb]
    \includegraphics[width=\linewidth]{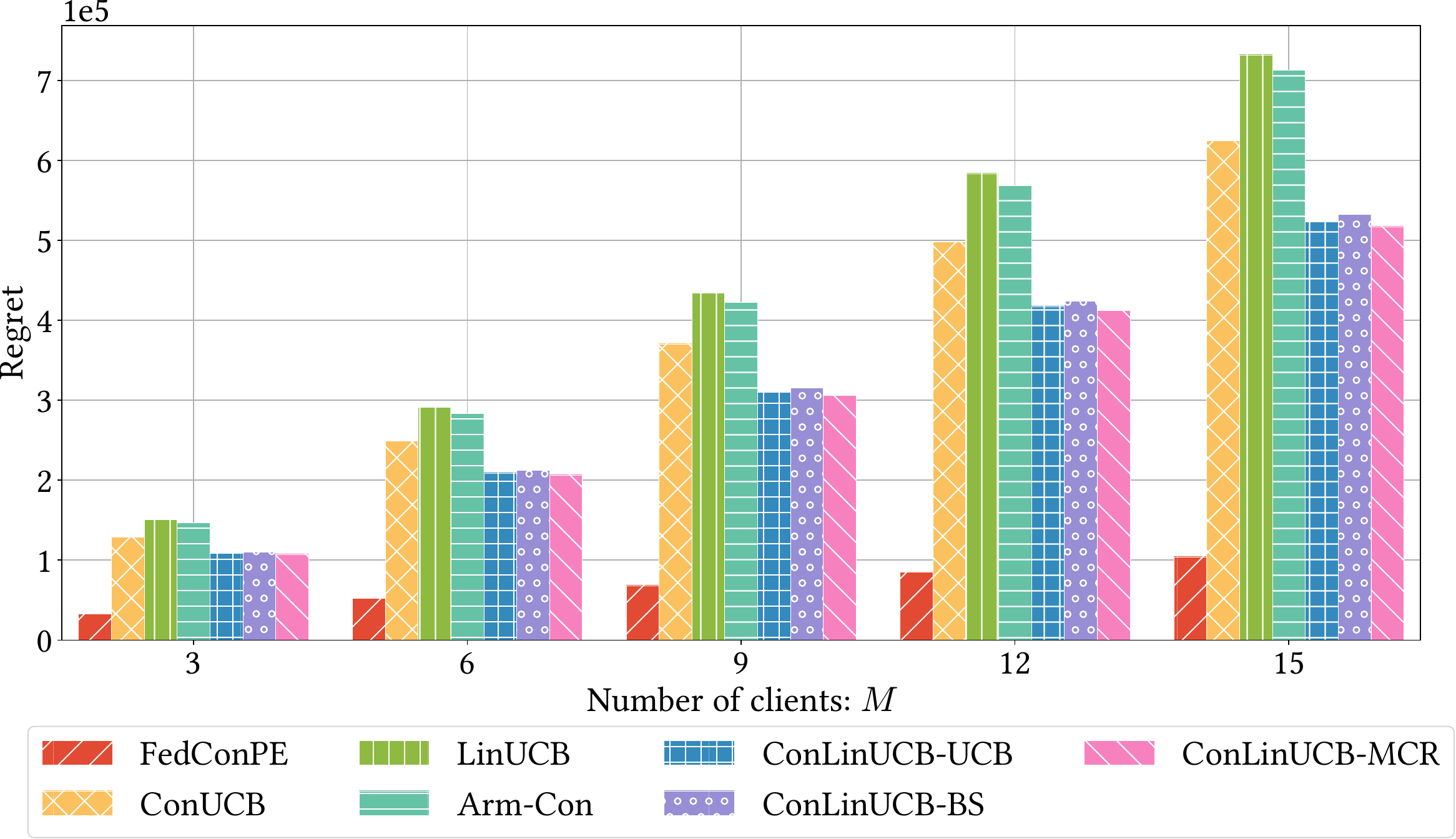}
    \caption{\label{fig:regret-vs-num-clients} Cumulative regret v.s. number of clients.}
  \end{figure}

  \subsubsection{Cumulative Regret with Different Arm Set Sizes}
  We evaluate the effect of the arm set size \(K\).
  Figure~\ref{fig:regret-vs-size-armset} illustrates the simulations executed for \(T=10,000\) under different arm set sizes, ranging from 100 to 300 on the synthetic dataset.
  Similar results derived from the real-world datasets are also provided in Appendix~\ref{appendix:plot-arms}.
  One can observe that, for all the baseline algorithms, there is no substantial amplification of the cumulative regrets as the arm set size \(K\) increases.
  This result is expected because all the baseline algorithms are based on the \texttt{LinUCB} framework, which is intrinsically designed to deal with infinite arm sets.
  Note that \fedconpe is also insensitive to the size \(K\).
  This validates our theoretical results (Theorem~\ref{thm:regret}), where the regret of our algorithm increases only logarithmically with respect to \(K\).

  \begin{figure}[htb]
    \includegraphics[width=\linewidth]{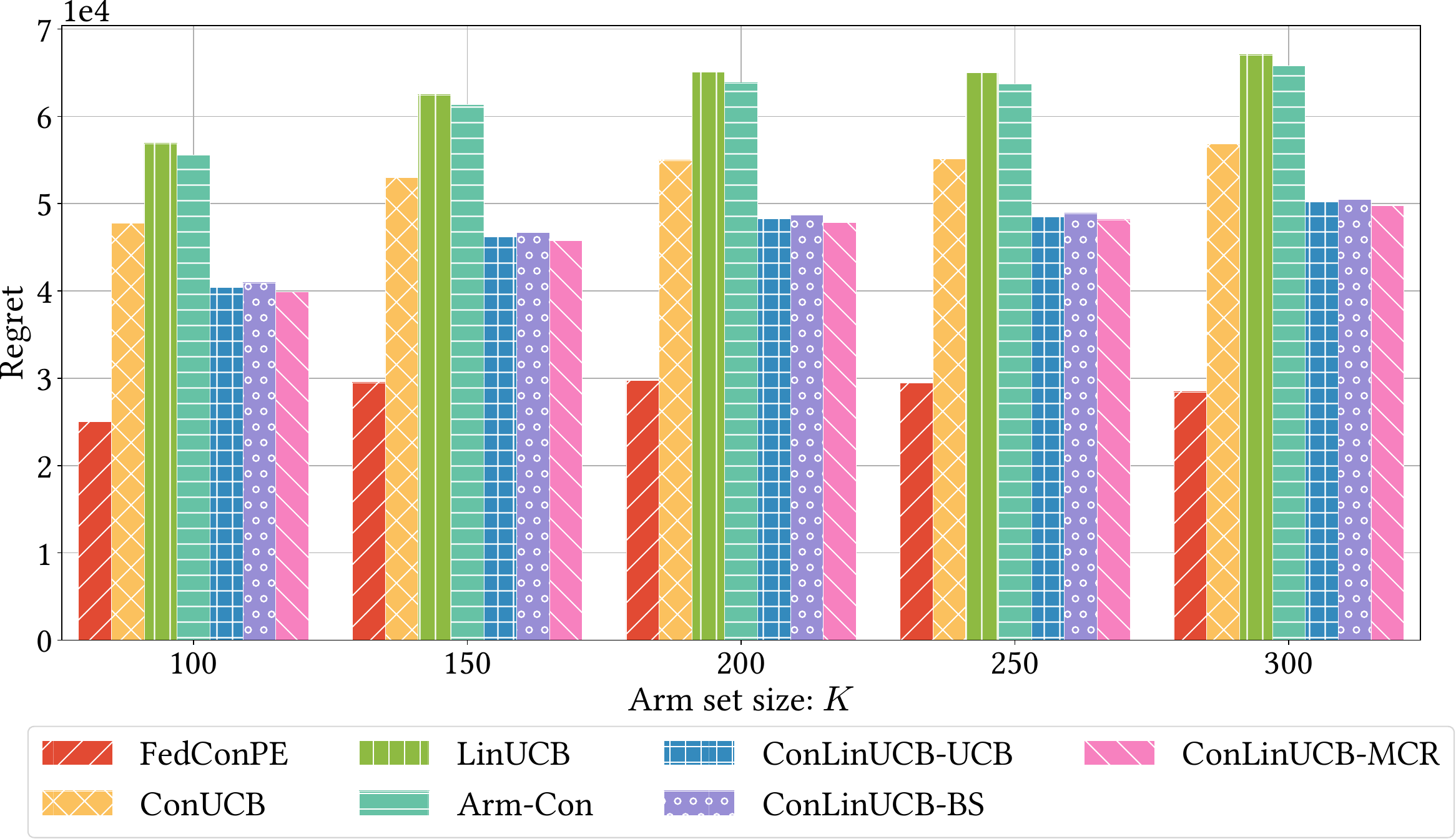}
    \caption{\label{fig:regret-vs-size-armset} Cumulative regret v.s. size of arm sets.}
  \end{figure}

\subsubsection{Accuracy of Estimated Preference Vectors}
  We evaluate the accuracy of the estimated preference vectors by computing the average \(\ell_2\)-distance between the estimated vector \(\widehat{\vec{\theta}}_{u,t}\) and the ground truth \(\vec{\theta}^{*}_{u}\) across 10 randomly selected users \(u\).
  The number of clients and the arm set size are configured to 5 and 100, respectively.
  In Figure~\ref{fig:theta-diff-vs-rounds}, we exhibit the average difference of all algorithms over the initial 2,000 rounds.
  Benefiting from the aggregated data, our algorithm \fedconpe can estimate the preference vector more quickly and accurately than the baseline algorithms.

  \begin{figure}[htb]
    \centering
    \includegraphics[width=\linewidth]{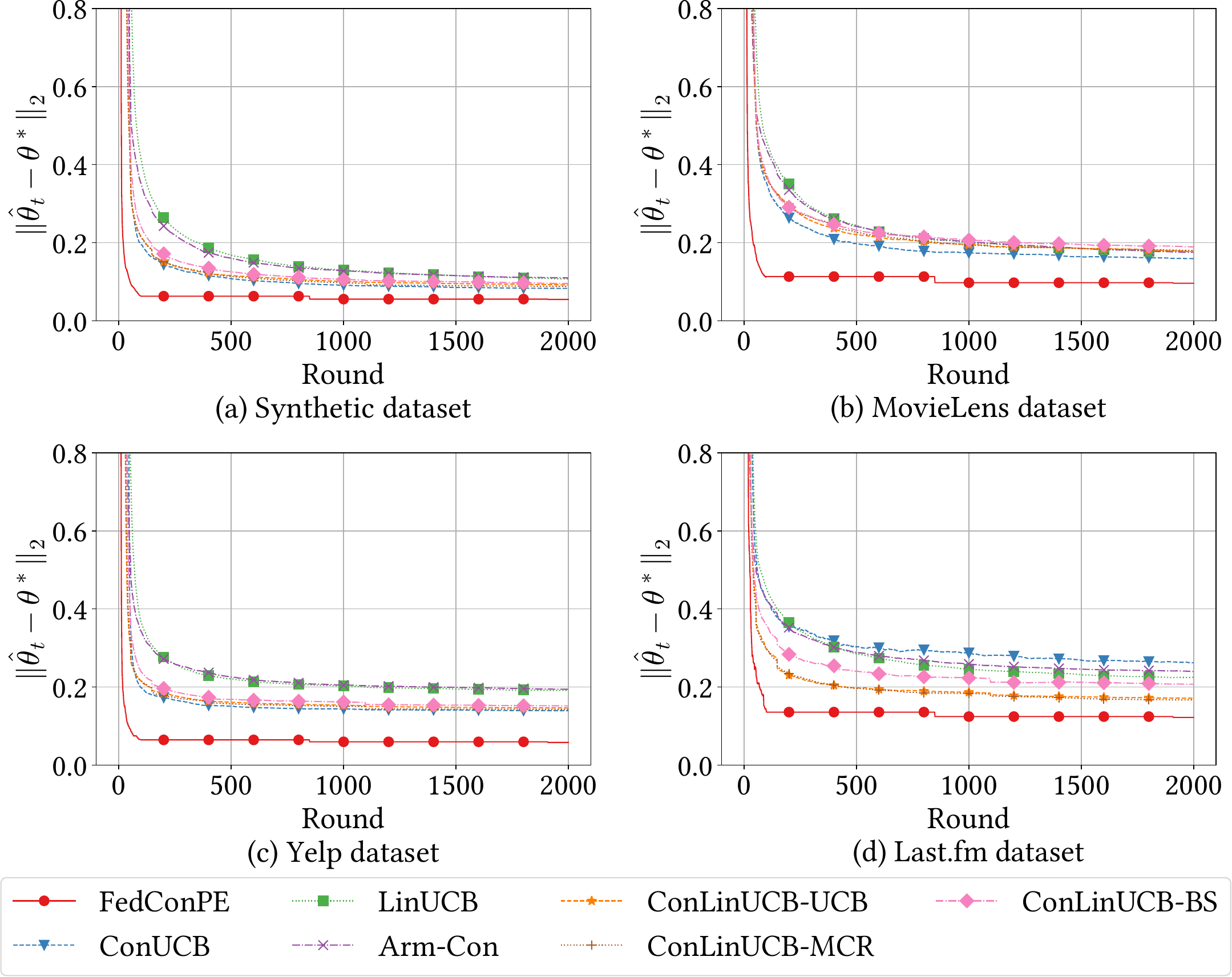}
    \caption{\label{fig:theta-diff-vs-rounds} Accuracy of estimated preference vectors.}
  \end{figure}

  \subsubsection{Number of Conversations}
  Finally, we measure the number of conversations (i.e., queries of key terms) initiated by each algorithm.
  Note that all of our baseline conversational algorithms (\texttt{ConUCB}, \texttt{Arm-Con}, and \texttt{ConLinUCB}) launch conversations in a deterministic manner, i.e., a pre-defined function \(b(t)\) governs the frequency of conversations.
  Therefore, we directly calculate the number of conversations launched according to their algorithms and plot the results.
  Following the original papers, we plot \(b(t)=5\lfloor\log(t)\rfloor\) and \(b(t)=\lfloor t/50\rfloor\), respectively.
  We note that although some baselines use a logarithmic \(b(t)\) in their experiments, they require a linear \(b(t)\) for their proofs.
  We run \fedconpe on all the datasets and record the number of key terms selected, with 10 users, and \(M=10\), \(K=100\).
  As shown in Figure~\ref{fig:keyterms-pulling-times}, due to the novel design of determining the need for conversations, \fedconpe launches fewer conversations than other algorithms, offering a better user experience.
  It is important to note that \fedconpe also provides enhanced flexibility concerning the order of conversations and recommendations within each phase (see details in Section~\ref{sec:client-algorithm}).

  \begin{figure}[htb]
    \centering
    \includegraphics[width=0.8\linewidth]{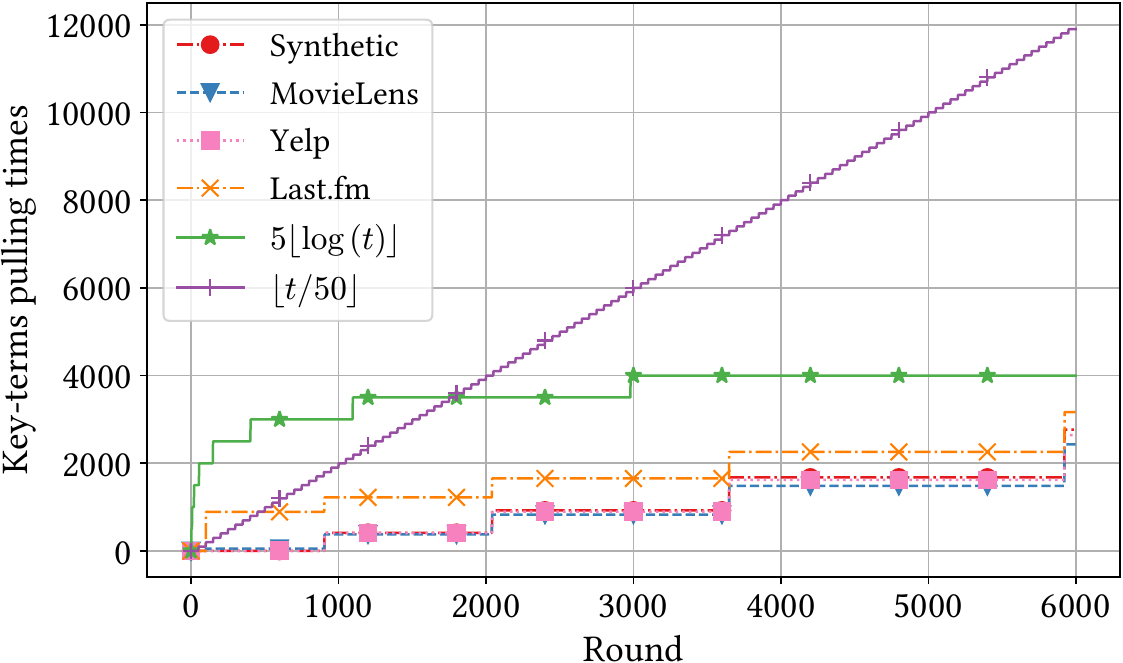}
    \caption{\label{fig:keyterms-pulling-times} The pulling times of key terms.}
  \end{figure}

\section{Conclusion}
\label{sec:conclusion}
  In this paper, we introduced \fedconpe, a phase elimination-based algorithm for federated conversational bandits with finite arm sets and heterogeneous clients.
  It adaptively constructs key terms that minimize uncertainty in the feature space.
  We proved that \fedconpe achieves matching regret lower\&upper bounds and has reduced communication cost, conversation frequency, and computational complexity.
  Extensive evaluations showed that our algorithm achieves a lower regret and requires fewer conversations than existing methods.

\section*{Acknowledgments}
We would like to thank the anonymous reviewers for their constructive comments.
The work of John C.S. Lui is supported in part by the RGC SRFS2122-4S02.

\section*{Contribution Statement}
Zhuohua Li and Maoli Liu contributed equally to this work.

\bibliographystyle{named}
\bibliography{ijcai24}

\appendix
\clearpage
\onecolumn
\section{Summary of Notations}\label{sec:notations}
Table~\ref{tab:notation} summarizes the notations used through the main paper and the appendix.

\begin{table}[htb]
\centering
\caption{Table of Notations}
\label{tab:notation}
\begin{tabular}{ccl}
\toprule
Symbol &  & Meaning \\
\midrule
$C$ & & Constant introduced in Assumption~\ref{assumption:2}.\\
$N$ & & Constant introduced in Algorithm~\ref{algo:client} and Algorithm~\ref{algo:server}.\\
$M$ & & Number of clients.\\
$d$ & & Dimension.\\
$K$ & & Number of candidate items\slash key terms.\\
$T$ & & Number of rounds.\\
$L$ & & Number of phases.\\
$t$ & & Index of rounds.\\
$p,\ell$ & & Index of phases.\\
$\varepsilon_{\ell}$ & & Defined as $\varepsilon_{\ell}=2^{-\ell}$.\\
$\mathcal{A}_i$ & & Candidate arm set for client $i$.\\
$\mathcal{A}_i^{\ell}$ & & Active arm set for client $i$ in phase $\ell$.\\
$\mathcal{K}$ & & Candidate key term set.\\
$\vec{\theta}^*, \widetilde{\vec{\theta}}^*$ & & Arm-level and key term-level preference vector, respectively.\\
$\widehat{\vec{\theta}}_{\ell}$ & & Estimated preference vector in phase $\ell$.\\
$\vec{a}_i^*$ & & The optimal arm for client $i$.\\
$\vec{a}_{i,t}, \vec{k}_{i,t}$ & & Arm and key term selected by client $i$ at round $t$, respectively.\\
$\eta_{i,t}, \widetilde{\eta}_{i,t}$ & & Noise term of arm-level and key term-level feedback for client $i$ at round $t$, respectively.\\
$x_{i,t}, \widetilde{x}_{i,t}$ & & Arm-level and key term-level feedback received by client $i$ at round $t$, respectively.\\
$\mathcal{T}_{i,\vec{a}}^{\ell}, \widetilde{\mathcal{T}}_{i,\vec{k}}^{\ell}$ & & Set of rounds during which client $i$ plays arm $\vec{a}$ and key term $\vec{k}$, respectively.\\
$\mathcal{T}_{i}^{\ell}, \widetilde{\mathcal{T}}_{i}^{\ell}$ & & Set of rounds during which client $i$ plays arms and key terms, respectively.\\
$\pi_i^{\ell}$ & & G-optimal design computed by client $i$ in phase $\ell$.\\
$\vec{V}_i^{\ell}(\pi_i^{\ell})$ & & Information matrix of G-optimal design computed by client $i$ in phase $\ell$.\\
$\lambda_{\vec{v}}$ & & Eigenvalue associated with eigenvector $\vec{v}$.\\
$\lambda_{\text{min}}(\vec{M})$ & & The smallest eigenvalue of matrix $\vec{M}$.\\
\(\lambda_k(\vec{M})\) & & The \(k\)-th ordered eigenvalue of matrix $\vec{M}$.\\
$T_{i,\ell}(\vec{a})$ & & Number of times that arm $\vec{a}$ is selected by client $i$ in phase $\ell$.\\
$\widetilde{T}_{i,\ell}$ & & Number of key terms selected in phase $\ell$ by client $i$.\\
$T_{\ell}$ & & Number of arms selected in phase $\ell$ by a client.\\
$n_{\vec{k}}$ & & Number of times that key term $\vec{k}$ will be selected.\\
$\vec{G}_i^{\ell}$, $\vec{W}_i^{\ell}$ & & $d \times d$ Gram matrix and $d \times 1$ moment vector collected by client $i$ in phase $\ell$, respectively.\\
$\vec{G}$, $\vec{W}$ & & $d \times d$ Gram matrix and $d \times 1$ moment vector collected by the server, respectively.\\
$N_i(t)$, $\widetilde{N}_j(t)$ & & Number of times the \(i\)-th arm and the \(j\)-th key term are chosen, respectively after the end of round \(t\).\\
\(N_{i,a}(t)\) & & Number of times the \(a\)-th arm is chosen by client \(i\) after the end of round \(t\).\\
$\mathcal{H}_t$ & & Sequence of outcomes generated by the interaction between the policy and the environment until round \(t\).\\
$\pi^{\text{arm}}$, $\pi^{\text{key}}$ & & Policies for selecting arms and key terms, respectively.\\
\(P_{i}\), \(\widetilde{P}_{j}\) & & Arm-level and key term-level reward distributions for the $i$-th arm and $j$-th key term, respectively.\\
\(p_{i}\), \(\widetilde{p}_{j}\) & & Density functions of arm-level and key term-level reward distributions \(P_{i}\) and \(\widetilde{P}_{j}\), respectively.\\
\bottomrule
\end{tabular}
\end{table}

\section{Preliminaries}
\label{sec:preliminaries}
  We introduce some well-known results that will be used in Appendix~\ref{sec:proof-regret} and Appendix~\ref{sec:proof-lowerbound} without proofs.

  \begin{lemma}[Subgaussian random variables]\label{lemma:subgaussian}
    Suppose that random variables \(X\) is \(\sigma\)-subgaussian, \(X_1\) and \(X_2\) are independent and \(\sigma_1\) and \(\sigma_2\)-subgaussian, respectively, then
    \begin{enumerate}
      \item For any \(\varepsilon>0\), \(\Pr\left[X\geq \varepsilon\right] \leq \exp\left(-\frac{\varepsilon^2}{2\sigma^2}\right)\).
      \item \(X_1 + X_2\) is \(\sqrt{\sigma_1^2 + \sigma_2^2}\)-subgaussian.
    \end{enumerate}
  \end{lemma}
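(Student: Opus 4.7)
The plan is to handle the two parts independently using standard moment generating function (MGF) techniques, since both claims follow directly from the defining inequality $\EE[e^{\lambda X}] \leq e^{\lambda^2 \sigma^2 / 2}$ for a $\sigma$-subgaussian variable $X$.

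For part (1), I would apply the Chernoff--Markov argument. For any $\lambda > 0$, monotonicity of $x \mapsto e^{\lambda x}$ gives $\Pr[X \geq \varepsilon] = \Pr[e^{\lambda X} \geq e^{\lambda \varepsilon}]$, and Markov's inequality then yields $\Pr[X \geq \varepsilon] \leq e^{-\lambda \varepsilon} \EE[e^{\lambda X}] \leq \exp(-\lambda \varepsilon + \lambda^2 \sigma^2 / 2)$ by the subgaussianity hypothesis. The exponent is a quadratic in $\lambda$ minimized at $\lambda^{*} = \varepsilon / \sigma^2$; substituting this value gives the stated bound $\exp(-\varepsilon^2 / (2\sigma^2))$. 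This is the only step where I would actually need a small calculation, but it is a one-line optimization.

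For part (2), I would use independence of $X_1, X_2$ to factor the MGF of the sum: $\EE[e^{\lambda(X_1 + X_2)}] = \EE[e^{\lambda X_1}] \cdot \EE[e^{\lambda X_2}]$. Applying the subgaussian MGF bound to each factor yields $\EE[e^{\lambda(X_1 + X_2)}] \leq \exp(\lambda^2 \sigma_1^2 / 2) \cdot \exp(\lambda^2 \sigma_2^2 / 2) = \exp\bigl(\lambda^2 (\sigma_1^2 + \sigma_2^2) / 2\bigr)$, which is precisely the defining MGF inequality for a $\sqrt{\sigma_1^2 + \sigma_2^2}$-subgaussian variable.

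There is no real obstacle here; both parts are textbook consequences of the MGF definition. The only minor care needed is to ensure that the optimization in part (1) uses $\lambda > 0$ (which is automatic since $\varepsilon > 0$ and $\sigma^2 > 0$), and to explicitly invoke independence in part (2) to justify factoring the MGF. No auxiliary lemmas from the paper are required beyond the stated definition of subgaussianity.
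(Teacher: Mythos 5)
Your proof is correct: the Chernoff--Markov argument with the optimal choice $\lambda^{*} = \varepsilon/\sigma^2$ gives part (1), and factoring the moment generating function via independence gives part (2); both are standard consequences of the MGF definition of subgaussianity. Note that the paper itself states this lemma in its preliminaries as a well-known result and deliberately provides no proof, so there is nothing to compare against --- your argument is simply the textbook derivation, and it is complete and correct as written.
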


  \begin{lemma}[\protect\citeauthor{bretagnolle-huber-1978}]\label{lemma:bretagnolle-huber}
    Let \(P\) and \(Q\) be probability measures on the same measurable space \((\Omega, \mathcal{F})\), and let \(A \in \mathcal{F}\) be an arbitrary event. Then,
    \[P(A) + Q(A^c) \geq \frac{1}{2} \exp(-D(P \parallel Q)),\]
    where \(D(P \parallel Q)=\int_{\Omega}\log\left(\odv{P}{Q}\right)\odif{P} = \E_P\left[\log\odv{P}{Q}\right]\) is the KL divergence between \(P\) and \(Q\). \(A^c = \Omega \setminus A\) is the complement of \(A\).
  \end{lemma}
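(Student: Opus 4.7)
The plan is to prove the Bretagnolle--Huber inequality by reducing it through two intermediate quantities: the integral of the pointwise minimum of the densities, and the Bhattacharyya/Hellinger affinity $\int \sqrt{pq}\,d\mu$. Throughout, I fix a dominating measure $\mu$ (for instance $\mu = (P+Q)/2$) and write $p = dP/d\mu$, $q = dQ/d\mu$, so every integral below is well-defined.

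First I would establish the elementary bound $P(A) + Q(A^c) \geq \int \min(p,q)\,d\mu$ for an arbitrary event $A$. This follows simply from $P(A) = \int_A p\,d\mu \geq \int_A \min(p,q)\,d\mu$, the analogous statement $Q(A^c) \geq \int_{A^c} \min(p,q)\,d\mu$, and summing. This step makes the bound event-free and lets me focus on a purely information-theoretic quantity.

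Next I would pass from $\int \min(p,q)\,d\mu$ to the affinity $\int \sqrt{pq}\,d\mu$. The key observation is $\sqrt{pq} = \sqrt{\min(p,q)\cdot\max(p,q)}$, so by Cauchy--Schwarz
\[
\left(\int \sqrt{pq}\,d\mu\right)^{2} \leq \left(\int \min(p,q)\,d\mu\right)\left(\int \max(p,q)\,d\mu\right).
\]
Since $\min(p,q) + \max(p,q) = p+q$, the factor $\int \max(p,q)\,d\mu$ equals $2 - \int \min(p,q)\,d\mu \leq 2$. Consequently
\[
\int \min(p,q)\,d\mu \;\geq\; \tfrac{1}{2}\left(\int \sqrt{pq}\,d\mu\right)^{2}.
\]

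Finally I would connect the affinity to the KL divergence via Jensen's inequality applied to the concave function $\log$. Writing $\int \sqrt{pq}\,d\mu = \int p \cdot \sqrt{q/p}\,d\mu = \E_P[\sqrt{q/p}]$, Jensen yields
\[
\log \int \sqrt{pq}\,d\mu \;\geq\; \E_P\!\left[\log \sqrt{q/p}\right] \;=\; -\tfrac{1}{2}\E_P\!\left[\log(p/q)\right] \;=\; -\tfrac{1}{2} D(P \parallel Q),
\]
so $\int \sqrt{pq}\,d\mu \geq \exp(-\tfrac{1}{2}D(P\parallel Q))$. Chaining the three inequalities delivers the claim
\[
P(A) + Q(A^c) \;\geq\; \tfrac{1}{2}\left(\int \sqrt{pq}\,d\mu\right)^{2} \;\geq\; \tfrac{1}{2}\exp(-D(P \parallel Q)).
\]

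The only subtle point, and the one I would be most careful about, is handling the support mismatch: when $D(P\parallel Q) = \infty$ the statement is trivial, and when $D$ is finite the set $\{q = 0\}$ has $P$-measure zero, so the ratios $\sqrt{q/p}$ and $p/q$ above are well-defined $P$-almost everywhere and Jensen's inequality applies without issue. Measurability of $\min(p,q)$, $\max(p,q)$, and $\sqrt{pq}$ is immediate since $p,q$ are nonnegative measurable functions. No other obstacle arises; each of the three steps is a one-line application of a classical inequality.
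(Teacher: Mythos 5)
Your proof is correct. Note that the paper itself offers no proof of this lemma: it is stated in the Preliminaries as a classical result of Bretagnolle and Huber, explicitly ``without proofs,'' so there is nothing to compare against on the paper's side. Your argument is the standard one (as in Tsybakov or Lattimore--Szepesv\'ari): the event-free reduction \(P(A)+Q(A^c)\ge\int\min(p,q)\,d\mu\), the Cauchy--Schwarz step \(\int\min(p,q)\,d\mu\ge\tfrac12\left(\int\sqrt{pq}\,d\mu\right)^2\) using \(\int\max(p,q)\,d\mu\le 2\), and Jensen's inequality to lower-bound the affinity by \(\exp(-\tfrac12 D(P\parallel Q))\). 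All three steps are sound, and your handling of the support issue (the case \(D(P\parallel Q)=\infty\) being trivial, and \(P(\{q=0\})=0\) otherwise) is exactly the right caveat.
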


  \begin{lemma}[KL divergence between Gaussian distributions]\label{lemma:kl-divergence-of-gaussian}
    If \(P \sim \mathcal{N}(\mu_1, \sigma^2)\) and \(Q \sim \mathcal{N}(\mu_2, \sigma^2)\), then
    \[D(P \parallel Q)=\frac{(\mu_1-\mu_2)^2}{2\sigma^2}.\]
  \end{lemma}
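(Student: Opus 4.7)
The plan is to read both facts directly off the client and server schedules in Algorithms~\ref{algo:client} and~\ref{algo:server}. Let $S = \{j \in [d]: \lambda_{\vec{v}_j} < \tfrac{3}{4(1-\varepsilon_\ell^2)dN}\}$ be the set of eigen-directions that trigger an upload on Line~\ref{line:check-eigenvalue}; these are exactly the directions for which the server assigns a key term to client $i$ in phase $\ell$. Part (a) is then immediate: if $\beta \geq \tfrac{3}{4(1-\varepsilon_\ell^2)dN}$, then every eigenvalue satisfies $\lambda_{\vec{v}_j} \geq \beta \geq \tfrac{3}{4(1-\varepsilon_\ell^2)dN}$, so $S = \emptyset$, the server receives no pair, returns $\mathcal{K}_i^\ell = \emptyset$, and client $i$ initiates no conversations in phase $\ell$.

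For Part (b), I would form the ratio $\widetilde T_{i,\ell}/T_\ell$ from the explicit play counts prescribed in Algorithms~\ref{algo:client} and~\ref{algo:server}. Using $\sum_{\vec{a}} \pi_i^\ell(\vec{a})=1$, the total arm pulls satisfy
\begin{equation*}
T_\ell \;=\; \sum_{\vec{a}\in\mathcal{A}_i^\ell} T_{i,\ell}(\vec{a}) \;\geq\; \tfrac{2d}{\varepsilon_\ell^2}\log \tfrac{2KM\log T}{\delta},
\end{equation*}
while (up to an $|S|\leq d$ integer slack from the ceilings) the total conversations satisfy
\begin{equation*}
\widetilde T_{i,\ell} \;\leq\; \sum_{j\in S} \frac{\tfrac{3}{2(1-\varepsilon_\ell^2)N} - 2d\lambda_{\vec{v}_j}}{C^2\varepsilon_\ell^2}\log\tfrac{2KM\log T}{\delta}.
\end{equation*}
The main algebraic step is the uniform estimate
\begin{equation*}
\sum_{j\in S}\ab(\tfrac{3}{2(1-\varepsilon_\ell^2)N} - 2d\lambda_{\vec{v}_j}) \;\leq\; d\ab(\tfrac{3}{2(1-\varepsilon_\ell^2)N} - 2d\beta),
\end{equation*}
obtained by substituting $\lambda_{\vec{v}_j}\geq \beta$ into every summand and bounding $|S|\leq d$. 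Dividing the two counts cancels the $\varepsilon_\ell^{-2}\log(\cdot)$ factors and gives
\begin{equation*}
\frac{\widetilde T_{i,\ell}}{T_\ell} \;\leq\; \frac{d\ab(\tfrac{3}{2(1-\varepsilon_\ell^2)N} - 2d\beta)}{2dC^2} \;=\; \frac{\tfrac{3}{4(1-\varepsilon_\ell^2)} - dN\beta}{NC^2},
\end{equation*}
which is exactly the stated bound.

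The main subtlety will be that the two worst-case moves, $\lambda_{\vec{v}_j}\to\beta$ inside the sum and $|S|\to d$, are simultaneously tight only in the degenerate regime $V_i^\ell(\pi_i^\ell)=\beta I$, so the estimate is valid but conservative whenever the spectrum of $V_i^\ell(\pi_i^\ell)$ is spread out. The only remaining bookkeeping is confirming that the additive $O(d)$ contribution from the ceilings in $T_{i,\ell}(\vec{a})$ and $n_{\vec{k}}$ is dominated by the leading $\Theta(d\varepsilon_\ell^{-2}\log(\cdot))$ scaling of $T_\ell$, so it perturbs the stated fraction only by a vanishing term, which I would fold into the final inequality.
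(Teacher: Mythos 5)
Your proposal does not address the statement at all. The statement to be proved is Lemma~\ref{lemma:kl-divergence-of-gaussian}, the closed-form expression for the KL divergence between two Gaussians with equal variance, \(D(P \parallel Q) = \frac{(\mu_1-\mu_2)^2}{2\sigma^2}\). What you have written is instead a proof of Theorem~\ref{thm:conversation} (the conversation frequency upper bound): you analyze the eigenvalue-triggered uploads in Algorithms~\ref{algo:client} and~\ref{algo:server}, bound \(\widetilde T_{i,\ell}/T_\ell\), and so on. None of that has any bearing on the KL divergence of two normal distributions, so as a proof of the stated lemma it is a complete miss.

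For the record, the paper itself states this lemma in the preliminaries as a standard fact and offers no proof. The argument you should have given is a one-line computation: with \(p(x) = \frac{1}{\sqrt{2\pi\sigma^2}}\exp\ab(-\frac{(x-\mu_1)^2}{2\sigma^2})\) and \(q(x)\) defined analogously with mean \(\mu_2\), one has \(\log\frac{p(x)}{q(x)} = \frac{(x-\mu_2)^2 - (x-\mu_1)^2}{2\sigma^2}\), and taking the expectation under \(P\) (using \(\E_P[x] = \mu_1\) and \(\E_P[(x-\mu_1)^2] = \sigma^2\)) yields
\begin{equation*}
D(P \parallel Q) = \E_P\ab[\frac{(x-\mu_2)^2 - (x-\mu_1)^2}{2\sigma^2}] = \frac{\sigma^2 + (\mu_1-\mu_2)^2 - \sigma^2}{2\sigma^2} = \frac{(\mu_1-\mu_2)^2}{2\sigma^2}.
\end{equation*}
Your analysis of the conversation frequency may well be sound on its own terms, but it belongs to a different theorem.
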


\section{Proof of Theorem~\ref{thm:regret} (Regret Upper Bound)}
\label{sec:proof-regret}
  We first prepare several lemmas:

\begin{lemma}[Concentration of linear regression]\label{lemma:concentration}
  For any \(\delta > 0\), \(\ell \in [L]\), \(\vec{x} \in \RR^d\), with probability at least \(1-2\delta\), we have
  \[\left| \inprod{\widehat{\vec{\theta}}_{\ell} - \vec{\theta}^{*}}{\vec{x}} \right| \leq \sqrt{2 \|\vec{x}\|_{\vec{G}^{-1}}^2 \log \frac{1}{\delta}}.\]
\end{lemma}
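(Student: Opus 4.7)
The plan is to reduce the bound on $\inprod{\widehat{\vec{\theta}}_{\ell} - \vec{\theta}^{*}}{\vec{x}}$ to a standard subgaussian tail bound. First I would write $\widehat{\vec{\theta}}_{\ell}$ in closed form using the server's update rule. Substituting the reward model $x_{i,t} = \vec{a}_{i,t}^{\mathsf{T}} \vec{\theta}^{*} + \eta_{i,t}$ and $\widetilde{x}_{i,t} = \vec{k}_{i,t}^{\mathsf{T}} \vec{\theta}^{*} + \widetilde{\eta}_{i,t}$ (using $\widetilde{\vec{\theta}}^{*} = \vec{\theta}^{*}$) into $\vec{W} = \sum_{p \in [\ell]} \sum_{i \in [M]} \vec{W}_i^p$ and matching the outer-product terms with $\vec{G}$, one obtains the decomposition $\vec{W} = \vec{G}\vec{\theta}^{*} + \vec{Z}$, where
\[
\vec{Z} = \sum_{p \in [\ell]} \sum_{i \in [M]} \Bigl( \sum_{t \in \mathcal{T}_i^{p}} \vec{a}_{i,t}\, \eta_{i,t} + \sum_{t \in \widetilde{\mathcal{T}}_i^{p}} \vec{k}_{i,t}\, \widetilde{\eta}_{i,t} \Bigr).
\]
Hence $\widehat{\vec{\theta}}_{\ell} - \vec{\theta}^{*} = \vec{G}^{-1} \vec{Z}$ and $\inprod{\widehat{\vec{\theta}}_{\ell} - \vec{\theta}^{*}}{\vec{x}} = \vec{x}^{\mathsf{T}} \vec{G}^{-1} \vec{Z}$ is a linear combination of the noise terms with coefficients $\vec{x}^{\mathsf{T}} \vec{G}^{-1} \vec{a}_{i,t}$ and $\vec{x}^{\mathsf{T}} \vec{G}^{-1} \vec{k}_{i,t}$.

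Next I would argue subgaussianity. The arms and key terms played within phase $p$ are determined by the design chosen at the start of that phase, which depends only on data from earlier phases; conditioning on the history up to the beginning of phase $p$, the noises $\eta_{i,t}$ and $\widetilde{\eta}_{i,t}$ are independent $1$-subgaussian by Assumption~\ref{assumption:1}. Treating $\vec{G}$ as fixed along the trajectory (it is a deterministic function of the chosen pulls, which are themselves measurable with respect to the filtration up to the end of phase $\ell$), an inductive application of part~(2) of Lemma~\ref{lemma:subgaussian} shows that $\vec{x}^{\mathsf{T}} \vec{G}^{-1} \vec{Z}$ is subgaussian with proxy variance
\[
\sum_{p,i,t \in \mathcal{T}_i^{p}} (\vec{x}^{\mathsf{T}} \vec{G}^{-1} \vec{a}_{i,t})^{2} + \sum_{p,i,t \in \widetilde{\mathcal{T}}_i^{p}} (\vec{x}^{\mathsf{T}} \vec{G}^{-1} \vec{k}_{i,t})^{2} = \vec{x}^{\mathsf{T}} \vec{G}^{-1} \Bigl(\sum \vec{a}\vec{a}^{\mathsf{T}} + \sum \vec{k}\vec{k}^{\mathsf{T}}\Bigr) \vec{G}^{-1} \vec{x} = \vec{x}^{\mathsf{T}} \vec{G}^{-1} \vec{x} = \|\vec{x}\|_{\vec{G}^{-1}}^{2},
\]
where the middle equality is the key cancellation: the outer-product sum inside the parentheses is precisely $\vec{G}$.

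Finally, applying part~(1) of Lemma~\ref{lemma:subgaussian} to the $\|\vec{x}\|_{\vec{G}^{-1}}$-subgaussian random variable $\vec{x}^{\mathsf{T}} \vec{G}^{-1} \vec{Z}$ with threshold $\varepsilon = \sqrt{2\|\vec{x}\|_{\vec{G}^{-1}}^{2} \log(1/\delta)}$ yields $\Pr[\vec{x}^{\mathsf{T}} \vec{G}^{-1} \vec{Z} \geq \varepsilon] \leq \delta$, and the same bound for the lower tail follows by applying the argument to $-\vec{x}$. A union bound over the two tails delivers the claimed probability $1 - 2\delta$ for the absolute value.

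The only delicate point, and what I would flag as the main obstacle, is the measurability issue: since $\vec{G}$ is formed from the plays up through phase $\ell$ and thus depends on the same noise variables through the adaptive elimination step, one cannot directly treat $\vec{G}$ as deterministic. The clean way around this is to work phase-by-phase, conditioning on the $\sigma$-algebra generated by all information up to the start of each phase (which determines $\mathcal{A}_i^{p}$, $\pi_i^{p}$, $\mathcal{K}_i^{p}$, and the pull counts in phase $p$) so that the per-phase coefficients are deterministic functions of this conditioning and the noises in that phase remain independent and $1$-subgaussian; combining phases via the tower property preserves the subgaussianity of $\vec{x}^{\mathsf{T}} \vec{G}^{-1} \vec{Z}$ with the stated proxy.
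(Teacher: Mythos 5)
Your proposal matches the paper's proof essentially step for step: the same decomposition $\widehat{\vec{\theta}}_{\ell}-\vec{\theta}^{*}=\vec{G}^{-1}\vec{Z}$, the same cancellation showing the sum of squared coefficients equals $\|\vec{x}\|_{\vec{G}^{-1}}^{2}$, the same subgaussian tail bound and two-sided union bound. The only difference is that you explicitly flag and handle the measurability of $\vec{G}$ with respect to the phase-wise filtration, a point the paper's proof passes over silently; your treatment is the more careful one.
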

\begin{proof}
  We only prove one side, the other side is similar.
  \begin{align*}
    &\inprod{\widehat{\vec{\theta}}_{\ell} - \vec{\theta}^{*}}{\vec{x}} = \inprod{\vec{x}}{\vec{G}^{-1} \vec{W} - \vec{\theta}^{*}}\\
    =& \inprod{\vec{x}}{\vec{G}^{-1} \sum_{p=1}^{\ell}\sum_{i=1}^{M}\Big(\sum_{\mathclap{t \in \mathcal{T}_i^{p}}} \vec{a}_{i,t}x_{i,t} + \sum_{\mathclap{t \in \mathcal{\widetilde{T}}_i^{p}}} \vec{k}_{i,t}\widetilde{x}_{i,t}\Big)-\vec{\theta}^{*} }\\
    =& \inprod{\vec{x}}{\vec{G}^{-1} \sum_{p=1}^{\ell}\sum_{i=1}^{M}\ab[\sum_{t \in \mathcal{T}_i^p} \vec{a}_{i,t} \ab( \vec{a}_{i,t}^\mathsf{T} \vec{\theta}^{*} + \eta_{i,t}) + \sum_{t \in \mathcal{\widetilde{T}}_i^p} \vec{k}_{i,t} \ab( \vec{k}_{i,t}^\mathsf{T} \vec{\theta}^{*} + \widetilde{\eta}_{i,t})] -\vec{\theta}^{*} }\\
    =& \inprod{\vec{x}}{\vec{G}^{-1} \underbrace{\sum_{p=1}^{\ell}\sum_{i=1}^{M}\bigg(\sum_{t \in \mathcal{T}_i^{p}} \vec{a}_{i,t}\vec{a}_{i,t}^\mathsf{T} + \sum_{t \in \mathcal{\widetilde{T}}_i^{p}}\vec{k}_{i,t}\vec{k}_{i,t}^\mathsf{T}\bigg)}_{=\vec{G}} \vec{\theta}^{*} + \vec{G}^{-1}\sum_{p=1}^{\ell}\sum_{i=1}^{M}\bigg(\sum_{t \in \mathcal{T}_i^{p}} \vec{a}_{i,t}\eta_{i,t}+\sum_{t \in \mathcal{\widetilde{T}}_i^{p}} \vec{k}_{i,t}\widetilde{\eta}_{i,t}\bigg) -\vec{\theta}^{*} }\\
    =& \inprod{\vec{x}}{\vec{G}^{-1} \sum_{p=1}^{\ell}\sum_{i=1}^{M}\bigg(\sum_{t \in \mathcal{T}_i^{p}} \vec{a}_{i,t}\eta_{i,t}+\sum_{t \in \mathcal{\widetilde{T}}_i^{p}} \vec{k}_{i,t}\widetilde{\eta}_{i,t}\bigg)}\\
    =& \sum_{p=1}^{\ell}\sum_{i=1}^{M}\Big(\sum_{\mathclap{t \in \mathcal{T}_i^{p}}}\inprod{\vec{x}}{\vec{G}^{-1} \vec{a}_{i,t}}\eta_{i,t}+\sum_{\mathclap{t \in \mathcal{\widetilde{T}}_i^{p}}}\inprod{\vec{x}}{\vec{G}^{-1} \vec{k}_{i,t}}\widetilde{\eta}_{i,t}\Big). \numberthis \label{eq:sum-of-subgaussian}
  \end{align*}
  With some basic linear algebra, we can show that
  \begin{align*}
    &\sum_{p=1}^{\ell}\sum_{i=1}^{M}\bigg(\sum_{t \in \mathcal{T}_i^{p}}\inprod{\vec{x}}{\vec{G}^{-1} \vec{a}_{i,t}}^2 + \sum_{t \in \mathcal{\widetilde{T}}_i^{p}}\inprod{\vec{x}}{\vec{G}^{-1} \vec{k}_{i,t}}^2\bigg)\\
    =&\sum_{p=1}^{\ell}\sum_{i=1}^{M}\bigg(\sum_{t \in \mathcal{T}_i^{p}} \vec{x}^\mathsf{T}\vec{G}^{-1}\vec{a}_{i,t}\vec{a}_{i,t}^\mathsf{T}\vec{G}^{-1}\vec{x} + \sum_{t \in \mathcal{\widetilde{T}}_i^{p}} \vec{x}^\mathsf{T}\vec{G}^{-1}\vec{k}_{i,t}\vec{k}_{i,t}^\mathsf{T}\vec{G}^{-1}\vec{x} \bigg)\\
    =&\vec{x}^\mathsf{T} \vec{G}^{-1}\underbrace{\sum_{p=1}^{\ell}\sum_{i=1}^{M}\bigg(\sum_{t \in \mathcal{T}_i^{p}} \vec{a}_{i,t}\vec{a}_{i,t}^\mathsf{T} + \sum_{t \in \mathcal{\widetilde{T}}_i^{p}}\vec{k}_{i,t}\vec{k}_{i,t}^\mathsf{T}\bigg)}_{=\vec{G}} \vec{G}^{-1} \vec{x}\\
    =& \|\vec{x}\|_{\vec{G}^{-1}}^{2}.
  \end{align*}
  Therefore, since \(\eta_{i,t}\) and \(\widetilde{\eta}_{i,t}\) are independent and 1-subgaussian, Equation~\ref{eq:sum-of-subgaussian} is \(\|\vec{x}\|_{\vec{G}^{-1}}\)-subgaussian by Lemma~\ref{lemma:subgaussian}.
  We have for all \(\varepsilon \geq 0\),
  \[\Pr\left[\inprod{\widehat{\vec{\theta}}_{\ell} - \vec{\theta}^{*}}{x} \geq \varepsilon\right] \leq \exp\left(-\frac{\varepsilon^2}{2\|\vec{x}\|_{\vec{G}^{-1}}^{2}}\right).\]
  Let the right-hand-side be \(\delta\), we get
  \[\Pr\left[\inprod{\widehat{\vec{\theta}}_{\ell} - \vec{\theta}^{*}}{\vec{x}} \geq \sqrt{2\|\vec{x}\|_{\vec{G}^{-1}}^{2} \log \frac{1}{\delta}}\right] \leq \delta.\]
  The other side is similar.
  The result follows by a union bound.
\end{proof}

\begin{lemma}[Bound the number of phases]\label{lemma:bound-num-of-phases}
  Given the time horizon T, the total number of phases \(L \leq \log T\).
\end{lemma}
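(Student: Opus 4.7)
The plan is a direct geometric-series counting argument: the number of arm pulls spent in each phase grows as $\Theta(4^{\ell})$ (up to logarithmic factors), so after $L$ phases the client has already consumed at least $\Omega(4^{L})$ time steps, which immediately caps $L$ by a logarithm of $T$. Key terms do not cost time steps (by the convention stated in the Communication Model), so I only need to sum arm-pull counts.

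First I would fix any client $i$ and lower-bound the number of arms it pulls in phase $\ell$. By the definition of $T_{i,\ell}(\vec{a})$ in Algorithm~\ref{algo:client} and the fact that $\pi_i^{\ell}$ is a probability distribution, dropping the ceilings gives
\[
T_\ell \;:=\; \sum_{\vec{a} \in \mathcal{A}_i^{\ell}} T_{i,\ell}(\vec{a}) \;\geq\; \sum_{\vec{a} \in \mathcal{A}_i^{\ell}} \frac{2d\,\pi_i^{\ell}(\vec{a})}{\varepsilon_\ell^{2}}\log\frac{2KM\log T}{\delta} \;=\; \frac{2d}{\varepsilon_\ell^{2}}\log\frac{2KM\log T}{\delta}.
\]
Substituting $\varepsilon_\ell = 2^{-\ell}$ turns this into $T_\ell \geq 2d\cdot 4^{\ell}\log\!\frac{2KM\log T}{\delta}$, so the per-phase arm-pull count indeed grows geometrically.

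Next I would sum over phases up to the final phase $L$. Since the client--server protocol is fully synchronized and each time step corresponds to exactly one arm pull for the client, the total arm-pull count over all phases cannot exceed the time horizon $T$. Using the geometric sum $\sum_{\ell=1}^{L} 4^{\ell} \geq 4^{L}$,
\[
T \;\geq\; \sum_{\ell=1}^{L} T_\ell \;\geq\; 2d\, 4^{L}\log\!\frac{2KM\log T}{\delta} \;\geq\; 4^{L},
\]
where the last step uses the mild parameter regime $d \geq 1$ and $\log\frac{2KM\log T}{\delta} \geq \tfrac12$. Rearranging yields $L \leq \log_4 T \leq \log T$, which is the claim.

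There is no real obstacle here: the only subtleties are (i) ensuring that key-term queries are not double-counted as time steps, which is exactly the convention adopted in Section~2.2, and (ii) being careful with the ceiling in $T_{i,\ell}(\vec{a})$, which only helps in the lower bound direction. Everything else is a one-line geometric-series estimate, so the lemma follows essentially immediately from the definition of $\varepsilon_\ell$ and the phase-length schedule in Algorithm~\ref{algo:client}.
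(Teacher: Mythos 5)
Your proposal is correct and follows essentially the same route as the paper's proof: lower-bound the per-phase arm-pull count by $\frac{2d}{\varepsilon_\ell^2}\log\frac{2KM\log T}{\delta}$ using $\sum_{\vec{a}}\pi_i^\ell(\vec{a})=1$, sum the resulting geometric series, and compare against $T$. The only cosmetic difference is that you keep the bound as $4^{L}$ (giving $L\leq\log_4 T$) whereas the paper weakens $2d\,2^{2L}\log(\cdot)$ to $2^{L}$ before concluding; both yield $L\leq\log T$.
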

\begin{proof}
  According to Algorithm~\ref{algo:client}, in each phase \(\ell\), the total number of plays
  \[T_{\ell} = \sum_{\vec{a} \in \mathcal{A}_{i}^{\ell}} T_{i,\ell}(\vec{a})=\sum_{\vec{a} \in \mathcal{A}_{i}^{\ell}}\left\lceil \frac{2d\pi_i^{\ell}(\vec{a})}{\varepsilon_{\ell}^2}\log \frac{2KM\log T}{\delta} \right\rceil \geq \frac{2d}{\varepsilon_{\ell}^2} \log \frac{2KM\log T}{\delta},\]
  where we use the fact that \(\sum_{\vec{a} \in \mathcal{A}_{i}^{\ell}} \pi_i^{\ell}(\vec{a}) = 1\). Therefore,
  \begin{align*}
    T &\geq \sum_{\ell=1}^{L} T_{\ell} \geq \sum_{\ell=1}^{L} \frac{2d}{\varepsilon_{\ell}^2} \log \frac{2KM\log T}{\delta}\\
      &= 2d \frac{4}{3}(2^{2L}-1)\log \frac{2KM\log T}{\delta}\\
      &\geq 2d 2^{2L} \log \frac{KM\log T}{\delta} \numberthis \label{eq:bound-num-of-phases}\\
      &\geq 2^L.
  \end{align*}
  The last inequality is due to \(2d 2^L \log \frac{KM\log T}{\delta} \geq 1\).
\end{proof}

\begin{lemma}[Eigenvalue of information matrix with conversational information]\label{lemma:lower-bound-of-smallest-eigenvalue}
  For any client \(i \in [M]\) and any phase \(\ell \in [L]\), we have
  \[\lambda_{\text{min}}\ab(\vec{V}_i^{\ell}(\pi_i^{\ell}) + \sum_{(\lambda,\vec{k}) \in \mathcal{K}_i^{\ell}} \frac{\frac{3}{4(1-\varepsilon_{\ell}^2)dN}-\lambda}{C^2}\vec{k} \vec{k}^\mathsf{T}) \geq \frac{3}{4(1-\varepsilon_{\ell}^2)dN},\]
  where \(\lambda_{\text{min}}\) denotes the smallest eigenvalue, \(\mathcal{K}_i^{\ell}\) contains the eigenvalue-eigenvector pairs that the server sends to client \(i\) in phase \(\ell\), and \(0<C\leq 1\) is the constant defined in Assumption~\ref{assumption:2}.
\end{lemma}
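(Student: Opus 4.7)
The plan is to prove the stronger positive semidefinite ordering $M \succeq \tau I$, where $M$ is the matrix inside $\lambda_{\min}(\cdot)$ and $\tau := \frac{3}{4(1-\varepsilon_\ell^2)dN}$, since this immediately yields the claimed eigenvalue bound. I would begin by diagonalizing $\vec{V}_i^\ell(\pi_i^\ell) = \sum_{j=1}^{d} \lambda_{\vec{v}_j}\vec{v}_j\vec{v}_j^\mathsf{T}$ in the orthonormal eigenbasis already produced on line~\ref{line:diagonalization}, and partitioning the indices into the \emph{deficient} set $S = \{j : \lambda_{\vec{v}_j} < \tau\}$ (precisely those uploaded by the client on line~\ref{line:check-eigenvalue}) and its complement. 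For $j \notin S$ the contribution $\lambda_{\vec{v}_j}\vec{v}_j\vec{v}_j^\mathsf{T}$ already dominates $\tau\vec{v}_j\vec{v}_j^\mathsf{T}$, so the whole argument reduces to establishing the ordering on the deficient subspace $\mathrm{span}\{\vec{v}_j : j \in S\}$.

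Next, for each $j \in S$, the server's choice on line~\ref{line:find-key-term} combined with Assumption~\ref{assumption:2} guarantees $\vec{k}_j^\mathsf{T}\vec{v}_j \geq C$. A one-line computation along $\vec{v}_j$ itself gives
\[
\vec{v}_j^\mathsf{T} M \vec{v}_j \;\geq\; \lambda_{\vec{v}_j} + \frac{\tau - \lambda_{\vec{v}_j}}{C^2}\bigl(\vec{k}_j^\mathsf{T}\vec{v}_j\bigr)^2 \;\geq\; \lambda_{\vec{v}_j} + (\tau - \lambda_{\vec{v}_j}) \;=\; \tau,
\]
which verifies the bound along every eigen-direction and simultaneously explains why the server picks the coefficient $1/C^2$: it is calibrated exactly to absorb the angle defect $(\vec{k}_j^\mathsf{T}\vec{v}_j)^2 \geq C^2$ between the selected key term and its target eigenvector.

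The main obstacle is promoting this eigenvector-wise lower bound to a \emph{uniform} bound over all unit vectors $\vec{u}$, because each rank-one perturbation $\vec{k}_j\vec{k}_j^\mathsf{T}$ induces off-diagonal coupling in the $\{\vec{v}_j\}$ basis. To handle this, I would write $\vec{k}_j = \cos\theta_j\,\vec{v}_j + \sin\theta_j\,\vec{w}_j$ with $\vec{w}_j \perp \vec{v}_j$ and $\cos\theta_j \geq C$, and expand
\[
\vec{k}_j\vec{k}_j^\mathsf{T} = \cos^2\theta_j\,\vec{v}_j\vec{v}_j^\mathsf{T} + \cos\theta_j\sin\theta_j\,(\vec{v}_j\vec{w}_j^\mathsf{T} + \vec{w}_j\vec{v}_j^\mathsf{T}) + \sin^2\theta_j\,\vec{w}_j\vec{w}_j^\mathsf{T}.
\]
The aligned piece contributes exactly $(\tau - \lambda_{\vec{v}_j})\vec{v}_j\vec{v}_j^\mathsf{T}$, filling the deficit in direction $\vec{v}_j$; the third term is PSD; the hard piece is the sign-indefinite cross block, which I would absorb via a Schur-complement argument on the $2\times 2$ sub-block spanned by $\vec{v}_j$ and $\vec{w}_j$, using the excess slack produced by $\frac{1-C^2}{C^2}(\tau-\lambda_{\vec{v}_j})\vec{w}_j\vec{w}_j^\mathsf{T}$ together with the leftover PSD mass along the sufficient directions $S^c$.

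Assembling these pieces across all $j \in S$ and reassembling with the trivial ordering on $S^c$ yields the desired ordering $M \succeq \tau I$. The delicate point throughout is that the $1/C^2$ gain cannot be weakened, since it is simultaneously what makes the direction-$\vec{v}_j$ computation saturate at $\tau$ \emph{and} what provides the orthogonal PSD slack needed to dominate the cross terms off that direction.
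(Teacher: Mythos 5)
Your decomposition of $\vec{k}_j$ into an aligned part and an orthogonal part, and your identification of the sign-indefinite cross block as the crux, are exactly right — but the Schur-complement absorption you propose at that point does not go through, and in fact cannot. Write $\tau=\frac{3}{4(1-\varepsilon_{\ell}^2)dN}$ and $\alpha=(\tau-\lambda_{\vec{v}_j})/C^2$. Along the deficient direction $\vec{v}_j$ the diagonal entry of $M-\tau \vec{I}$ is $\lambda_{\vec{v}_j}-\tau+\alpha\cos^2\theta_j=(\tau-\lambda_{\vec{v}_j})\bigl(\cos^2\theta_j/C^2-1\bigr)$, which is \emph{exactly zero} when the server's chosen key term attains $\cos\theta_j=C$ (Assumption~2 only guarantees $\geq C$, so this is an admissible configuration), while the off-diagonal entry coupling $\vec{v}_j$ to $\vec{w}_j$ is $\alpha\cos\theta_j\sin\theta_j=(\tau-\lambda_{\vec{v}_j})\sqrt{1-C^2}/C\neq 0$ for $C<1$. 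A symmetric matrix with a zero diagonal entry in a row containing a nonzero off-diagonal entry is never positive semidefinite: taking $\vec{u}=\vec{v}_j-\epsilon\,\mathrm{sign}(b)\,\vec{w}_j$ gives $\vec{u}^{\mathsf{T}}(M-\tau\vec{I})\vec{u}=-2\epsilon\abs{b}+O(\epsilon^2)<0$, no matter how much slack the $\vec{w}_j$ or $S^c$ directions carry. Concretely, with $d=2$, $\lambda_1<\tau\leq\lambda_2$, and $\vec{k}=C\vec{v}_1+\sqrt{1-C^2}\,\vec{v}_2$, one gets $\lambda_{\min}(M)<\tau$. So the "delicate point" you flag is not delicate but fatal: the $1/C^2$ multiplier is calibrated to \emph{saturate} the diagonal along $\vec{v}_j$, which leaves nothing to pay for the cross term.

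For comparison, the paper's own proof evades this only through an algebraic slip: it asserts $(c_j\vec{v}_j+\vec{x})(c_j\vec{v}_j+\vec{x})^{\mathsf{T}}=c_j^2\vec{v}_j\vec{v}_j^{\mathsf{T}}+\vec{x}\vec{x}^{\mathsf{T}}$, silently discarding the rank-two cross term $c_j(\vec{v}_j\vec{x}^{\mathsf{T}}+\vec{x}\vec{v}_j^{\mathsf{T}})$ — precisely the term you correctly refuse to drop. You are therefore being more careful than the paper, but the target you are both aiming at, the ordering $M\succeq\tau\vec{I}$ (equivalent to the lemma's conclusion), is false as stated for $C<1$. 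A correct version requires either exactly aligned key terms ($C=1$), a multiplier strictly larger than $1/C^2$ paired with a PSD domination argument for the full rank-one update $\vec{k}\vec{k}^{\mathsf{T}}\succeq(2C^2-1)\vec{v}_j\vec{v}_j^{\mathsf{T}}$-type bound (which needs $C>1/\sqrt{2}$ and degrades the constant), or a weakened conclusion $\lambda_{\min}(M)\geq c_0(C)\,\tau$ with $c_0(C)<1$; any of these would still support the downstream regret analysis after adjusting constants, but neither your argument nor the paper's establishes the lemma as written.
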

\begin{proof}
  Let \(\vec{v}_1, \dots, \vec{v}_d\) and \(\lambda_1, \dots, \lambda_d\) be the eigenvectors and the corresponding eigenvalues of \(\vec{V}_i^{\ell}(\pi_i^{\ell})\).
  Using the eigenvectors as an orthonormal basis, for any \(j \in [d]\), we can write any vector \(\vec{k} = \sum_{i=1}^{d} c_i \vec{v}_i = \sum_{i=1, i\neq j}^d c_i \vec{v}_i + c_j \vec{v}_j\).
  Note that \(\vec{x}\triangleq \sum_{i=1, i\neq j}^d c_i \vec{v}_i\) is orthogonal to \(\vec{v}_j\).
  According to Line~\ref{line:find-key-term} of Algorithm~\ref{algo:server}, every \(\vec{k}\) in \(\mathcal{K}_i^{\ell}\) is chosen to be close to an eigenvector, say \(\vec{v}_j\).
  Then by Assumption~\ref{assumption:2}, we have \(\vec{k}^\mathsf{T} \vec{v}_j\geq C\).
  Therefore, we have \((\sum_{i=1}^{d} c_i \vec{v}_i)^\mathsf{T} \vec{v}_j = c_j \geq C\), and \(\vec{k} \vec{k}^\mathsf{T} = (c_j \vec{v}_j+\vec{x})(c_j \vec{v}_j+\vec{x})^\mathsf{T}=c_j^2 \vec{v}_j \vec{v}_j^\mathsf{T} + \vec{x} \vec{x}^\mathsf{T}\).
  Let \(s_{\ell} = \frac{3}{4(1-\varepsilon_{\ell}^2)dN}\), we have,
  \begin{align*}
    &\vec{V}_i^{\ell}(\pi_i^{\ell}) + \sum_{(\lambda,\vec{k}) \in \mathcal{K}_i^{\ell}} \frac{\frac{3}{4(1-\varepsilon_{\ell}^2)dN}-\lambda}{C^2}\vec{k} \vec{k}^\mathsf{T}\\
    =& \sum_{i=1}^{d} \lambda_i \vec{v}_i \vec{v}_i^\mathsf{T} + \sum_{j: \lambda_j<s_{\ell}} \frac{s_{\ell}-\lambda_j}{C^2} \ab(c_j^2 \vec{v}_j \vec{v}_j^\mathsf{T} + \vec{x} \vec{x}^\mathsf{T}) \numberthis\label{eq:expand-and-change-sum-condition}\\
    \succeq& \sum_{i=1}^{d} \lambda_i \vec{v}_i \vec{v}_i^\mathsf{T} + \sum_{j: \lambda_j<s_{\ell}} \ab(s_{\ell} - \lambda_j) \vec{v}_j \vec{v}_j^\mathsf{T}\\
    =& \sum_{j: \lambda_j<s_{\ell}} s_{\ell} \vec{v}_j \vec{v}_j^\mathsf{T} + \sum_{j: \lambda_j\geq s_{\ell}} \lambda_j \vec{v}_j \vec{v}_j^\mathsf{T}\\
    \succeq& \sum_{i=1}^{d} \frac{3}{4(1-\varepsilon_{\ell}^2)dN} \vec{v}_i \vec{v}_i^\mathsf{T}.
  \end{align*}
  Where in Equation~\ref{eq:expand-and-change-sum-condition} we expand \(\vec{V}_i^{\ell}(\pi_i^{\ell})\) by its spectral decomposition and change the summation condition of the second term according to Algorithm~\ref{algo:client}, Line~\ref{line:check-eigenvalue}.
  The proof concludes by the property of Loewner order that if \(\vec{A} \succeq \vec{B}\) then \(\lambda_k(\vec{A}) \geq \lambda_k(\vec{B})\), where \(\lambda_k\) denotes the \(k\)th ordered eigenvalue.
\end{proof}

\begin{lemma}[Bound the probability of a ``bad'' event]\label{lemma:bad-event}
  Define the ``bad'' event:
  \[\mathcal{E} = \set{\exists i \in [M], \ell \in [L], \vec{a} \in \mathcal{A}_i^{\ell}, \text{ s.t. } \left| \inprod{\widehat{\vec{\theta}}_{\ell} - \vec{\theta}^{*}}{\vec{a}} \right| > \sqrt{\frac{N}{M}}\varepsilon_{\ell}}.\]
  Then \(\Pr\left[\mathcal{E}\right] \leq \delta\).
\end{lemma}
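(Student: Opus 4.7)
The plan is to union-bound over all triples $(i,\ell,\vec{a})$ and control each deviation via Lemma~\ref{lemma:concentration}, whose effective width is governed by $\|\vec{a}\|_{\vec{G}^{-1}}^2$. The core of the argument is therefore to establish a uniform PSD lower bound on $\vec{G}$ of the form $\vec{G} \succeq \Omega(M\beta/\varepsilon_\ell^2)\vec{I}$, where $\beta := \log(2KM\log T/\delta)$. Once this is in hand, the rest is essentially a careful tracking of constants together with two applications of the union bound (over arms, clients, and phases).

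First I would drop all earlier phases and use $\vec{G} \succeq \sum_{j \in [M]} \vec{G}_j^{\ell}$, since a PSD sum can only grow. For a fixed client $j$, the pull counts $T_{j,\ell}(\cdot)$ and $n_{\vec{k}}$ chosen by Algorithm~\ref{algo:client} and Algorithm~\ref{algo:server} are engineered so that
\[
\vec{G}_j^{\ell} \succeq \frac{2d\beta}{\varepsilon_\ell^2}\left[\vec{V}_j^{\ell}(\pi_j^{\ell}) + \sum_{(\lambda,\vec{k})\in \mathcal{K}_j^{\ell}} \frac{s_\ell - \lambda}{C^2}\, \vec{k}\vec{k}^\mathsf{T}\right],
\]
where $s_\ell := \frac{3}{4(1-\varepsilon_\ell^2)dN}$ is precisely the threshold used in Line~\ref{line:check-eigenvalue}. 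Lemma~\ref{lemma:lower-bound-of-smallest-eigenvalue} is tailored to this expression and shows the bracketed matrix is $\succeq s_\ell \vec{I}$, so the factor $d$ cancels, giving $\vec{G}_j^{\ell} \succeq \frac{3\beta}{2(1-\varepsilon_\ell^2)N\varepsilon_\ell^2}\vec{I}$. Summing over $j\in[M]$ and inverting yields $\|\vec{a}\|_{\vec{G}^{-1}}^2 = O\!\left(\frac{N\varepsilon_\ell^2}{M\beta}\right)$ uniformly over $\vec{a}$ with $\|\vec{a}\|=1$.

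Next I would plug this estimate into Lemma~\ref{lemma:concentration} with confidence parameter $\delta' = \delta/(2KM\log T)$, chosen so that $\log(1/\delta') = \beta$ precisely cancels the $\beta$ in the denominator of the Mahalanobis norm bound. This produces the per-triple estimate $\left|\inprod{\widehat{\vec{\theta}}_\ell - \vec{\theta}^{*}}{\vec{a}}\right| \leq \sqrt{N/M}\,\varepsilon_\ell$ with probability at least $1 - \delta/(KM\log T)$. A union bound over at most $K$ arms per active set, $M$ clients, and $L \leq \log T$ phases (using Lemma~\ref{lemma:bound-num-of-phases}) then bounds $\Pr[\mathcal{E}] \leq \delta$ as desired.

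The main obstacle is the PSD-lower-bound step: one must verify that the conversational contribution suffices to make every client's local matrix well-conditioned in \emph{all} directions, so that pooling across heterogeneous clients legitimately produces the $1/M$ variance reduction. Without the key terms, $\vec{V}_j^{\ell}(\pi_j^{\ell})$ can be rank-deficient (or nearly so) in directions outside client $j$'s active arm span, and the clean cancellation of the dimension factor $d$ between the eigenvalue threshold $s_\ell$ and the pull-count scaling $\frac{2d\beta}{\varepsilon_\ell^2}$ would fail. This is exactly the bridge between the algorithmic constants and the statistical bound $\sqrt{N/M}\,\varepsilon_\ell$, and getting it right is what justifies the entire adaptive conversation-triggering scheme.
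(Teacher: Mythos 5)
Your overall architecture is the same as the paper's: lower-bound $\lambda_{\text{min}}(\vec{G})$ via Lemma~\ref{lemma:lower-bound-of-smallest-eigenvalue}, feed $\|\vec{a}\|_{\vec{G}^{-1}}^2 \leq 1/\lambda_{\text{min}}(\vec{G})$ into Lemma~\ref{lemma:concentration} with $\delta' = \delta/(2KM\log T)$, and union-bound over $K$, $M$, and $L \leq \log T$. Your identification of the key-term contribution as the step that makes every client's matrix well-conditioned is also exactly right.

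However, there is a concrete gap in the very first step: you discard all phases $p < \ell$ and keep only $\vec{G} \succeq \sum_j \vec{G}_j^{\ell}$. Your arithmetic from there is correct and gives $\vec{G} \succeq \frac{3M\beta}{2(1-\varepsilon_\ell^2)N\varepsilon_\ell^2}\vec{I}$, but plugging this into Lemma~\ref{lemma:concentration} yields the per-triple bound $\sqrt{\frac{4(1-\varepsilon_\ell^2)}{3}}\cdot\sqrt{\frac{N}{M}}\,\varepsilon_\ell$, which strictly exceeds the required $\sqrt{\frac{N}{M}}\,\varepsilon_\ell$ for every $\ell \geq 2$ (e.g.\ by a factor $\approx 1.118$ at $\ell = 2$, approaching $\sqrt{4/3}$ as $\ell \to \infty$). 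The threshold $s_\ell = \frac{3}{4(1-\varepsilon_\ell^2)dN}$ in Line~\ref{line:check-eigenvalue} is reverse-engineered for the \emph{cumulative} Gram matrix: the paper keeps all phases, bounds $\lambda_{\text{min}}$ of each phase-$p$ contribution by $\varepsilon_p^{-2}s_p \geq \varepsilon_p^{-2}s_\ell$ via Weyl's inequality, and then the geometric sum $\sum_{p=1}^{\ell}\varepsilon_p^{-2} = \frac{4}{3}\cdot\frac{1-\varepsilon_\ell^2}{\varepsilon_\ell^2}$ cancels the $\frac{3}{4(1-\varepsilon_\ell^2)}$ in $s_\ell$ exactly, giving $\lambda_{\text{min}}(\vec{G}) \geq \frac{2M\beta}{N\varepsilon_\ell^2}$ and hence the clean constant $\sqrt{N/M}\,\varepsilon_\ell$. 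Your version is off only by a bounded constant (and could be repaired by redefining the good event and the elimination radius in Line~\ref{line:elimination}), but as written it does not prove the lemma as stated; the fix is simply to retain the earlier phases rather than dropping them.
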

\begin{proof}
  Let \(s_{\ell} = \frac{3}{4(1-\varepsilon_{\ell}^2)dN}\). For any phase \(\ell \in [L]\), according to the definition of \(\vec{G}\) in Algorithm~\ref{algo:server}, we have
  \begin{align*}
    \vec{G} &= \sum_{p=1}^{\ell} \sum_{i=1}^{M} \vec{G}_i^p\\
            &=\sum_{p=1}^{\ell}\sum_{i=1}^{M}\left[\sum_{\vec{a} \in \mathcal{A}_i^{p}} T_{p,i}(\vec{a}) \vec{a}\vec{a}^\mathsf{T} + \sum_{(\lambda,\vec{k}) \in \mathcal{K}_i^{p}} n_{\vec{k}} \vec{k} \vec{k}^\mathsf{T}\right] \\
            &\succeq \sum_{p=1}^{\ell}\sum_{i=1}^{M} \left[\sum_{\vec{a} \in \mathcal{A}_i^{p}}\frac{2d\pi_i^{p}(\vec{a})}{\varepsilon_{p}^2} \log\ab(\frac{2KM\log T}{\delta}) \vec{a}\vec{a}^\mathsf{T} + \sum_{(\lambda,\vec{k}) \in \mathcal{K}_i^{p}}\frac{2d(s_p-\lambda)}{C^2\varepsilon_{p}^2} \log\ab(\frac{2KM\log T}{\delta}) \vec{k} \vec{k}^\mathsf{T} \right]\\
            &= 2d \log \ab(\frac{2KM\log T}{\delta}) \sum_{i=1}^{M} \underbrace{\ab[\sum_{p=1}^{\ell} \frac{1}{\varepsilon_{p}^2}\sum_{\vec{a} \in \mathcal{A}_i^{p}} \pi_i^{p}(\vec{a}) \vec{a}\vec{a}^\mathsf{T} + \sum_{p=1}^{\ell} \frac{1}{\varepsilon_{p}^2} \sum_{(\lambda,\vec{k}) \in \mathcal{K}_i^{p}} \frac{s_p - \lambda}{C^2}\vec{k} \vec{k}^\mathsf{T}]}_{\triangleq \vec{Q}_i^{\ell}}.
  \end{align*}

  Next, we prove the lower bound of the smallest eigenvalue of \(\vec{Q}_i^{\ell}\).
  By Weyl's inequality~\cite{weyl-1912}, we have \(\lambda_{\text{min}}(\vec{A}+\vec{B}) \geq \lambda_{\text{min}}(\vec{A})+\lambda_{\text{min}}(\vec{B})\), therefore
  \begin{align*}
    \lambda_{\text{min}}\ab(\vec{Q}_i^{\ell}) &= \lambda_{\text{min}}\ab(\sum_{p=1}^{\ell} \frac{1}{\varepsilon_{p}^2}\ab[\sum_{\vec{a} \in \mathcal{A}_i^{p}} \pi_i^{p}(\vec{a}) \vec{a}\vec{a}^\mathsf{T} + \sum_{(\lambda,\vec{k}) \in \mathcal{K}_i^{p}} \frac{s_p - \lambda}{C^2}\vec{k} \vec{k}^\mathsf{T}])\\
    &\geq \sum_{p=1}^{\ell} \frac{1}{\varepsilon_{p}^2} \lambda_{\text{min}}\ab(\vec{V}_i^{p}(\pi_i^{p}) + \sum_{(\lambda,\vec{k}) \in \mathcal{K}_i^{p}} \frac{s_p - \lambda}{C^2}\vec{k} \vec{k}^\mathsf{T})\\
    &\geq \sum_{p=1}^{\ell} \frac{1}{\varepsilon_{p}^2} \frac{3}{4(1-\varepsilon_{p}^2)dN}
      \geq \frac{3}{4(1-\varepsilon_{\ell}^2)dN}\sum_{p=1}^{\ell} \frac{1}{\varepsilon_{p}^2} \numberthis\label{eq:use-smallest-eigenvalue-lemma}\\
    &=\frac{3}{4(1-\varepsilon_{\ell}^2)dN} \cdot \frac{4}{3}\ab(\frac{1}{\varepsilon_{\ell}^2} - 1) = \frac{1}{dN\cdot \varepsilon_{\ell}^2},
  \end{align*}
  where Equation~\ref{eq:use-smallest-eigenvalue-lemma} uses Lemma~\ref{lemma:lower-bound-of-smallest-eigenvalue} and the fact that \(\varepsilon_{\ell}=2^{-\ell}\) is decreasing in \(\ell\).
  Based on this, we can further get the lower bound of the smallest eigenvalue of \(\vec{G}\) as follows.
  \begin{align*}
    \lambda_{\text{min}} \left( \vec{G} \right) &\geq 2d \log \ab(\frac{2KM\log T}{\delta})\sum_{i=1}^{M} \lambda_{\text{min}} \left( \vec{Q}_i^{\ell} \right)\\
    &\geq \frac{2M}{N\varepsilon_{\ell}^2} \log \frac{2KM\log T}{\delta}.
  \end{align*}
  By Lemma~\ref{lemma:concentration}, with probability at least \(1-\frac{\delta}{KM\log T}\), we have for all client \(i \in [M]\) and all arm \(\vec{a} \in \mathcal{A}_i^{\ell}\),
  \begin{align*}
    \left| \inprod{\widehat{\vec{\theta}}_{\ell} - \vec{\theta}^{*}}{\vec{a}} \right|
    &\leq \sqrt{2\|\vec{a}\|_{\vec{G}^{-1}}^{2} \log \frac{2KM\log T}{\delta}}\\
    &\leq \sqrt{2 \frac{1}{\lambda_{\text{min}}(\vec{G})} \log \frac{2KM\log T}{\delta}} \numberthis \label{eq:introduce-lambda-min}\\
    &\leq \sqrt{\frac{N}{M}}\varepsilon_{\ell}.
  \end{align*}
  Where Equation~\ref{eq:introduce-lambda-min} is due to Courant-Fischer theorem.
  Finally, by union bound,
  \begin{align*}
    \Pr\left[\mathcal{E}\right] \leq& \sum_{i \in [M]}\sum_{\ell \in [L]}\sum_{\vec{a} \in \mathcal{A}_i^{\ell}} \Pr\left[\left| \inprod{\widehat{\vec{\theta}}_{\ell} - \vec{\theta}^{*}}{\vec{a}} \right| > \sqrt{\frac{N}{M}}\varepsilon_{\ell}\right]\\
    \leq& MLK \frac{\delta}{KM\log T} \leq \delta.
  \end{align*}
  Where the last inequality uses the fact that \(L\leq \log T\), which is due to Lemma~\ref{lemma:bound-num-of-phases}.
\end{proof}

\begin{lemma}\label{lemma:optimal-arm-never-eliminated}
  Under the ``good'' event \(\mathcal{E}^c\), for any \(i \in [M], \ell \in [L]\), we have \(\vec{a}_i^{*} \in \mathcal{A}_i^{\ell}\), where \(\vec{a}_i^{*} \triangleq \argmax_{\vec{a} \in \mathcal{A}_i} \inprod{\vec{a}}{\vec{\theta}^{*}}\), i.e., the local optimal arm will never be eliminated.
\end{lemma}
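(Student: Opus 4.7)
The plan is to proceed by induction on the phase index $\ell$, using the triangle inequality together with the concentration guarantee that defines the good event $\mathcal{E}^c$. The base case $\ell = 1$ is immediate because $\mathcal{A}_i^1 = \mathcal{A}_i$ by initialization, and $\vec{a}_i^{*} \in \mathcal{A}_i$ by definition.

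For the inductive step, assume $\vec{a}_i^{*} \in \mathcal{A}_i^{\ell}$. To show $\vec{a}_i^{*} \in \mathcal{A}_i^{\ell+1}$ I need to verify the elimination condition on Line~\ref{line:elimination}, namely
\[
    \max_{\vec{b} \in \mathcal{A}_i^{\ell}} \inprod{\widehat{\vec{\theta}}_{\ell}}{\vec{b}-\vec{a}_i^{*}} \leq 2\sqrt{\tfrac{N}{M}}\,\varepsilon_{\ell}.
\]
For any $\vec{b} \in \mathcal{A}_i^{\ell}$, I would decompose
\[
    \inprod{\widehat{\vec{\theta}}_{\ell}}{\vec{b}-\vec{a}_i^{*}} = \inprod{\vec{\theta}^{*}}{\vec{b}-\vec{a}_i^{*}} + \inprod{\widehat{\vec{\theta}}_{\ell}-\vec{\theta}^{*}}{\vec{b}} - \inprod{\widehat{\vec{\theta}}_{\ell}-\vec{\theta}^{*}}{\vec{a}_i^{*}}.
\]
The first term is non-positive by optimality of $\vec{a}_i^{*}$ with respect to $\vec{\theta}^{*}$ over $\mathcal{A}_i$ (hence over any subset). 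Under the good event $\mathcal{E}^c$, each of the remaining two inner products is bounded in absolute value by $\sqrt{N/M}\,\varepsilon_{\ell}$, provided both $\vec{b}$ and $\vec{a}_i^{*}$ lie in $\mathcal{A}_i^{\ell}$; the latter is exactly the inductive hypothesis. Applying the triangle inequality gives the required bound $2\sqrt{N/M}\,\varepsilon_{\ell}$ uniformly in $\vec{b}$.

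No step looks genuinely hard; the only subtlety is making sure the concentration guarantee from Lemma~\ref{lemma:bad-event} actually applies to $\vec{a}_i^{*}$, which is why the induction is needed: the bound in the definition of $\mathcal{E}$ quantifies only over $\vec{a} \in \mathcal{A}_i^{\ell}$, so I must carry forward the fact that $\vec{a}_i^{*}$ has not been eliminated in earlier phases. Once that inductive invariant is established, the argument collapses to a one-line application of the triangle inequality and optimality of $\vec{a}_i^{*}$, which closes the proof.
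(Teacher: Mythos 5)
Your proposal is correct and matches the paper's proof: the same three-term decomposition of \(\inprod{\widehat{\vec{\theta}}_{\ell}}{\vec{b}-\vec{a}_i^{*}}\), the same use of the good event to bound the two estimation-error terms by \(\sqrt{N/M}\,\varepsilon_{\ell}\) each, and the same appeal to local optimality of \(\vec{a}_i^{*}\) to kill the third term. Your explicit induction on \(\ell\) is a welcome addition rather than a different route: the paper applies the concentration bound of the good event to \(\vec{a}_i^{*}\) without first noting that \(\vec{a}_i^{*} \in \mathcal{A}_i^{\ell}\) (which is precisely the claim being proved, since \(\mathcal{E}\) only quantifies over arms still in \(\mathcal{A}_i^{\ell}\)), so your inductive invariant makes rigorous a step the paper leaves implicit.
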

\begin{proof}
  If \(\mathcal{E}^c\) happens, then for any phase \(\ell \in [L]\), client \(i \in [M]\), arm \(\vec{b} \in \mathcal{A}_i^{\ell}\), we have
  \begin{align*}
    &\inprod{\vec{b}-\vec{a}_i^{*}}{\widehat{\vec{\theta}}_{\ell}}\\
    =& \inprod{\vec{b}}{\widehat{\vec{\theta}}_{\ell}} - \inprod{\vec{b}}{\vec{\theta}^{*}} + \inprod{\vec{b}}{\vec{\theta}^{*}} - \inprod{\vec{a}_i^{*}}{\widehat{\vec{\theta}}_{\ell}} + \inprod{\vec{a}_i^{*}}{\vec{\theta}^{*}} - \inprod{\vec{a}_i^{*}}{\vec{\theta}^{*}}\\
    =&\inprod{\vec{b}}{\widehat{\vec{\theta}}_{\ell}-\vec{\theta}^{*}} - \inprod{\vec{a}_i^{*}}{\widehat{\vec{\theta}}_{\ell}-\vec{\theta}^{*}} + \inprod{\vec{b}-\vec{a}_i^{*}}{\vec{\theta}^{*}}\\
    \leq& \sqrt{\frac{N}{M}}\varepsilon_{\ell} + \sqrt{\frac{N}{M}}\varepsilon_{\ell} + 0 = 2\sqrt{\frac{N}{M}}\varepsilon_{\ell}.
  \end{align*}
  Where the last inequality uses the ``good'' event and the fact that \(\vec{a}_i^{*}\) is locally optimal.
\end{proof}

\begin{lemma}
  Under the ``good'' event \(\mathcal{E}^c\), for any phase \(\ell \in [L]\), client \(i \in [M]\), arm \(\vec{a} \in \mathcal{A}_i^{\ell}\), we have
  \[\Delta_{i,\vec{a}} \triangleq \inprod{\vec{\theta}^{*}}{\vec{a}_i^{*} - \vec{a}} \leq 8\sqrt{\frac{N}{M}}\varepsilon_{\ell}.\]
\end{lemma}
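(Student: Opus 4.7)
The plan is to proceed by induction on $\ell$, treating $\ell=1$ as a trivial base case and using the elimination rule together with the good event $\mathcal{E}^c$ to pass from phase $\ell-1$ to phase $\ell$.

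For the base case $\ell=1$, the active set is $\mathcal{A}_i^{1}=\mathcal{A}_i$, so no elimination has yet occurred and there is nothing to exploit from the algorithm. Instead, I would use Assumption~\ref{assumption:1}, namely $\|\vec{a}\|=\|\vec{a}_i^*\|=1$ and $\|\vec{\theta}^*\|\leq 1$. By Cauchy--Schwarz and the triangle inequality, $\inprod{\vec{\theta}^*}{\vec{a}_i^*-\vec{a}}\leq 2$, and since $\varepsilon_1=1/2$, the target bound $8\sqrt{N/M}\,\varepsilon_1=4\sqrt{N/M}$ holds as long as $N\geq M/4$, which is a benign condition on the algorithm's constant $N$ (the same kind of mild lower bound tacitly used to make the subsequent regret bound meaningful).

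For the inductive step $\ell\geq 2$, the key observation is that $\vec{a}\in\mathcal{A}_i^{\ell}$ means $\vec{a}$ survived the elimination step at the end of phase $\ell-1$ (Line~\ref{line:elimination} of Algorithm~\ref{algo:client}). Therefore
\[\max_{\vec{b}\in\mathcal{A}_i^{\ell-1}}\inprod{\widehat{\vec{\theta}}_{\ell-1}}{\vec{b}-\vec{a}}\;\leq\;2\sqrt{N/M}\,\varepsilon_{\ell-1}.\]
By the preceding lemma, under $\mathcal{E}^c$ the local optimum $\vec{a}_i^*$ is never eliminated, so $\vec{a}_i^*\in\mathcal{A}_i^{\ell-1}$ and the above maximum upper-bounds $\inprod{\widehat{\vec{\theta}}_{\ell-1}}{\vec{a}_i^*-\vec{a}}$ in particular. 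Decomposing
\[\inprod{\vec{\theta}^*}{\vec{a}_i^*-\vec{a}}=\inprod{\widehat{\vec{\theta}}_{\ell-1}}{\vec{a}_i^*-\vec{a}}+\inprod{\vec{\theta}^*-\widehat{\vec{\theta}}_{\ell-1}}{\vec{a}_i^*}+\inprod{\widehat{\vec{\theta}}_{\ell-1}-\vec{\theta}^*}{\vec{a}},\]
and applying the definition of $\mathcal{E}^c$ to bound each of the last two inner products by $\sqrt{N/M}\,\varepsilon_{\ell-1}$ (valid because both $\vec{a}_i^*,\vec{a}\in\mathcal{A}_i^{\ell-1}$), yields a total upper bound of $4\sqrt{N/M}\,\varepsilon_{\ell-1}$. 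Substituting $\varepsilon_{\ell-1}=2\varepsilon_{\ell}$ gives exactly $8\sqrt{N/M}\,\varepsilon_{\ell}$, completing the inductive step.

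There is no genuine obstacle in the argument; this is the standard phase-elimination telescoping trick, repackaged in the federated setting using the shared estimator $\widehat{\vec{\theta}}_{\ell-1}$. The one small point to be careful about is that the elimination rule and the confidence radius must match up (both scale with $\sqrt{N/M}\,\varepsilon_{\ell-1}$), so that the three contributions combine cleanly into $4\sqrt{N/M}\,\varepsilon_{\ell-1}$ rather than a larger quantity; this is ensured by the algorithm's design. The subtlest piece is really the base case, which needs an implicit assumption that $N$ is not chosen pathologically small relative to $M$.
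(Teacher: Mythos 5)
Your argument is correct and is essentially the paper's own proof run in the forward direction: the paper states it contrapositively (an arm whose gap exceeds $8\sqrt{N/M}\,\varepsilon_{\ell}$ would necessarily have been eliminated at the end of phase $\ell-1$), whereas you read the survival of $\vec{a}$ through the elimination rule directly and add the two concentration errors, but the ingredients --- Lemma~\ref{lemma:optimal-arm-never-eliminated}, the good event applied to $\vec{a}$ and $\vec{a}_i^{*}$ as elements of $\mathcal{A}_i^{\ell-1}$, and $\varepsilon_{\ell-1}=2\varepsilon_{\ell}$ --- and the resulting constants are identical, so the ``induction'' framing is superfluous (your step never invokes the inductive hypothesis). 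Your explicit treatment of $\ell=1$, where the claim reduces to $2\leq 4\sqrt{N/M}$ and genuinely requires $N\geq M/4$ (or else a separate, lower-order accounting of the first phase in the regret sum), is a legitimate edge case that the paper's proof silently skips by writing ``consider phase $\ell-1\in[L]$''.
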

\begin{proof}
  If \(\mathcal{E}^c\) happens, then by Lemma~\ref{lemma:bad-event} and Lemma~\ref{lemma:optimal-arm-never-eliminated}, consider phase \(\ell-1 \in [L]\), for any \(i \in [M], \vec{a} \in \mathcal{A}_i^{\ell-1}\), we have \(\inprod{\widehat{\vec{\theta}}_{\ell-1}}{\vec{a}} \leq \inprod{\vec{\theta}^{*}}{\vec{a}} + \sqrt{\frac{N}{M}}\varepsilon_{\ell-1}\) and \(\inprod{\widehat{\vec{\theta}}_{\ell-1}}{\vec{a}_i^{*}} \geq \inprod{\vec{\theta}^{*}}{\vec{a}_i^{*}} - \sqrt{\frac{N}{M}}\varepsilon_{\ell-1}\).
  One event that eliminate arm \(\vec{a}\) is:
  \[\inprod{\widehat{\vec{\theta}}_{\ell-1}}{\vec{a}_i^{*}} -\sqrt{\frac{N}{M}}\varepsilon_{\ell-1} \geq \inprod{\widehat{\vec{\theta}}_{\ell-1}}{\vec{a}} + \sqrt{\frac{N}{M}}\varepsilon_{\ell-1}.\]
  This event is guaranteed to happen (so arm \(\vec{a}\) is eliminated) as long as \(\inprod{\vec{\theta}^{*}}{\vec{a}_i^{*}} - 2\sqrt{N/M}\varepsilon_{\ell-1} > \inprod{\vec{\theta}^{*}}{\vec{a}} + 2\sqrt{N/M}\varepsilon_{\ell-1}\), which we can rearrange as \(\Delta_{i,a} \triangleq \inprod{\vec{\theta}^{*}}{\vec{a}_i^{*} - \vec{a}} > 4\sqrt{N/M}\varepsilon_{\ell-1}=8\sqrt{N/M}\varepsilon_{\ell}\).
  Consider the contrapositive, if arm \(\vec{a}\) is not eliminated, i.e., \(\vec{a} \in \mathcal{A}_i^{\ell}\), then we have \(\Delta_{i,\vec{a}} \leq 8\sqrt{N/M}\varepsilon_{\ell}\).
\end{proof}

Finally, we can prove Theorem~\ref{thm:regret}, which we restate here.

\restateregret*

\begin{proof}
  Assume the total number of phases is \(L\), with probability \(1-\delta\), the cumulative regret can be bounded as:
  \begin{align*}
    R_{M}(T) =&\sum_{i=1}^{M}\sum_{t=1}^{T} \inprod{\vec{\theta}^{*}}{\vec{a}_i^{*}-\vec{a}_{i,t}}
        \leq \sum_{i=1}^{M}\sum_{\ell=1}^{L} T_{\ell} \cdot 8\sqrt{\frac{N}{M}}\varepsilon_{\ell}\\
        =&8\sqrt{N}\sum_{i=1}^{M} \sum_{\ell=1}^{L} \sum_{\vec{a} \in \mathcal{A}_i^{\ell}} \left\lceil \frac{2d\pi_i^{\ell}(\vec{a})}{\varepsilon_{\ell}^2} \log \frac{KM\log T}{\delta} \right\rceil \frac{\varepsilon_{\ell}}{\sqrt{M}}\\
        \leq& 8\sqrt{NM}\sum_{\ell=1}^{L} \left( \frac{2d}{\varepsilon_{\ell}^2} \log \frac{KM\log T}{\delta} + \frac{d(d+1)}{2}\right) \varepsilon_{\ell} \numberthis \label{eq:remove-ceil}\\
        =& 16d\sqrt{NM}\log \frac{KM\log T}{\delta} \sum_{\ell=1}^{L} 2^{\ell} + 4d(d+1)\sqrt{NM} \sum_{\ell=1}^{L} \frac{1}{2^{\ell}}\\
        \leq& 32d\sqrt{NM}\log \frac{KM\log T}{\delta} 2^L + 4d(d+1)\sqrt{NM}\\
        \leq& 32\sqrt{N} \sqrt{dMT \log \frac{KM\log T}{\delta}} +4d(d+1)\sqrt{NM} \numberthis \label{eq:introduce-T}\\
        =& \mathcal{O}\left( \sqrt{dMT\log \frac{KM \log T}{\delta}} \right). \numberthis \label{eq:remove-small-terms}
  \end{align*}
  where in Equation~\ref{eq:remove-ceil} we remove the ceiling according to Lemma~\ref{lemma:kiefer-wolfowitz}.
  In Equation~\ref{eq:introduce-T} we replace \(2^L\) according to Equation~\ref{eq:bound-num-of-phases}.
  And in Equation~\ref{eq:remove-small-terms} we ignore constant and insignificant terms because \(T\gg M\) and \(T\gg d\).
\end{proof}

\section{Proof of Theorem~\ref{thm:lowerbound} (Regret Lower Bound)}
\label{sec:proof-lowerbound}

  Since all clients face the same linear bandit problem, any algorithm in the federated setting with \(M\) clients over time horizon \(T\) can be simulated by an algorithm under the centralized setting with total \(MT\) observations available for learning.
  As a result, the cumulative regret of \(M\) clients in the federated setting over horizon \(T\) must be no better than a single-client linear bandit over horizon \(MT\).
  Therefore, in the following, we consider the latter setting.
  we emphasize that for federated linear bandits with fixed arm sets, one cannot directly apply the well-known result \(\Omega(\sqrt{dT})\) by \cite{chu-2011-contextual} because it depends on the property of time-varying arm sets.
  In addition, the main challenge of deriving the lower bound is to handle the conversational information.

  Since algorithms for conversational bandits need to make decisions on selecting both arms and key terms, we consider that a policy \(\pi\) consists of two parts \(\pi = (\pi^{\text{arm}}, \pi^{\text{key}})\).
  We also assume that at each time step, the policy can select at most one key term. (Otherwise the number of key terms may be larger than the number of arms, which is unrealistic.)
  Since in our setting, we assume that for all client \(i\), arms in \(\mathcal{A}_i\) span \(\RR^d\), so the number of arms \(K\geq d\).
  Let \(\mathcal{A}_1=\mathcal{A}_2=\dots=\mathcal{A}_M = \mathcal{K}= \set{\vec{e}_1, \vec{e}_2, \dots, \vec{e}_d} \cup \set{(K-d)\text{ arbitrary unit vectors}}\), where \(\vec{e}_i\) is the \(i\)-th standard basis vector in \(\RR^d\).
  We choose \(\vec{\theta}=(\Delta,0,\dots,0)^\mathsf{T}\) (\(\Delta\) will be specified later.) and denote \(\mathcal{H}_t = \set{a_1, x_1, k_1, \widetilde{x}_1, \dots,a_{t}, x_{t}, k_{t}, \widetilde{x}_{t}}\) as a sequence of outcomes generated by the interaction between the policy and the environment until time \(t\).
  We note that the appearance of key terms at every time step is without loss of generality because we allow \(k_t\) to be empty if there is no conversation initiated at round \(t\).
  The noise term of both arm and key term level feedback (\(\eta_{i,t}\) and \(\widetilde{\eta}_{i,t}\)) are independently drawn from a standard Gaussian distribution \(\mathcal{N}(0,1)\).
  We denote \(\PP_{\vec{\theta}}\) as the probability measure induced by the environment \(\vec{\theta}\) and policy \(\pi\), and expectations under \(\PP_{\vec{\theta}}\) is denoted by \(\E_{\vec{\theta}}\).
  Let random variables \(N_i(t)\), \(\widetilde{N}_j(t)\) be the number of times the \(i\)-th arm and the \(j\)-th key term (\(i,j \in [K]\)) are chosen, respectively after the end of round \(t\).
  Define another environment \(\vec{\theta}' = (\Delta, 0, \dots, \underbrace{2\Delta}_{s\text{-th}}, \dots, 0)^\mathsf{T}\), where \(s=\argmin_{j>1} \max\ab\{\E_{\vec{\theta}}[N_j(MT)],\E_{\vec{\theta}}[\widetilde{N}_j(MT)]\}\).
  Given the definitions, it is easy to see that \(\E_{\vec{\theta}}[N_j(MT)] \leq \frac{MT}{K-1}\) and \(\E_{\vec{\theta}}[\widetilde{N}_j(MT)] \leq \frac{MT}{K-1}\).
  Therefore, arm\slash key term 1 is the optimal arm\slash key term for all clients under environment \(\vec{\theta}\), while under \(\vec{\theta}'\), the optimal arm\slash key term is arm\slash key term \(s\).
  Based on the above settings, we prove the following lemma.

  \begin{lemma}\label{lemma:divergence-decomposition}
    Denote \(D(P \parallel Q)=\int_{\Omega}\log\left(\odv{P}{Q}\right)\odif{P}\) as is the KL divergence between \(P\) and \(Q\), then we have
    \[D(\PP_{\vec{\theta}} \parallel \PP_{\vec{\theta}'}) = \E\nolimits_{\vec{\theta}}[N_s(MT) + \widetilde{N}_s(MT)] D(\mathcal{N}(0,1) \parallel \mathcal{N}(2\Delta,1)).\]
  \end{lemma}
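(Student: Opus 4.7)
The plan is to invoke the standard divergence decomposition (chain rule for KL divergence) and then to observe that only the rounds in which the $s$-th basis vector is played contribute any divergence.

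First I would write the joint density of $\mathcal{H}_{MT}$ under $\PP_{\vec{\theta}}$ as a telescoping product: at each round $t$ the conditional law factors as the policy's law of $a_t$ given $\mathcal{H}_{t-1}$, times the arm-reward density $p(x_t \mid a_t)$, times the policy's law of $k_t$ given $(\mathcal{H}_{t-1}, a_t, x_t)$, times the key-term reward density $\widetilde{p}(\widetilde{x}_t \mid k_t)$. Because the policy $\pi = (\pi^{\text{arm}}, \pi^{\text{key}})$ is a fixed function of the observed history alone, the conditional laws of $a_t$ and $k_t$ are identical under $\PP_{\vec{\theta}}$ and $\PP_{\vec{\theta}'}$, so those factors cancel in $\log(\mathrm{d}\PP_{\vec{\theta}}/\mathrm{d}\PP_{\vec{\theta}'})$ and contribute zero after taking the expectation under $\PP_{\vec{\theta}}$.

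The remaining reward contribution, by Lemma~\ref{lemma:kl-divergence-of-gaussian} applied to unit-variance Gaussians, equals
$$D\bigl(\mathcal{N}(a_t^{\mathsf T}\vec{\theta},1)\,\big\|\,\mathcal{N}(a_t^{\mathsf T}\vec{\theta}',1)\bigr) + D\bigl(\mathcal{N}(k_t^{\mathsf T}\vec{\theta},1)\,\big\|\,\mathcal{N}(k_t^{\mathsf T}\vec{\theta}',1)\bigr) = \tfrac{1}{2}\bigl(a_t^{\mathsf T}(\vec{\theta}-\vec{\theta}')\bigr)^2 + \tfrac{1}{2}\bigl(k_t^{\mathsf T}(\vec{\theta}-\vec{\theta}')\bigr)^2.$$
Since $\vec{\theta}' - \vec{\theta} = 2\Delta\,\vec{e}_s$ and the only arm\slash key-term vector with a nonzero $\vec{e}_s$-component in the chosen construction of $\mathcal{A}_i$ and $\mathcal{K}$ is $\vec{e}_s$ itself, each of the two terms collapses to $\tfrac{(2\Delta)^2}{2}$ times the indicator that the corresponding played vector equals $\vec{e}_s$. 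Summing the indicators over $t = 1, \dots, MT$ recovers $N_s(MT)$ and $\widetilde{N}_s(MT)$, and taking expectation under $\PP_{\vec{\theta}}$ yields the stated identity because $\tfrac{(2\Delta)^2}{2}$ is exactly $D(\mathcal{N}(0,1)\,\|\,\mathcal{N}(2\Delta,1))$ by Lemma~\ref{lemma:kl-divergence-of-gaussian}.

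The main care in the write-up is the chain-rule step itself: one must set up a clean measure-theoretic decomposition of the full outcome space and verify that the policy-induced conditional laws cancel in the log-likelihood ratio so that only the reward noise contributes. A minor subtlety is that not every round must initiate a conversation; this is handled by treating $k_t$ (and the associated $\widetilde{x}_t$) as a null symbol on rounds with no query, so its conditional law does not depend on $\vec{\theta}$ and its KL contribution is zero---in particular, only genuine queries of $\vec{e}_s$ are counted by $\widetilde{N}_s(MT)$, which is consistent with the identity we are asked to prove.
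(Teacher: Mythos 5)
Your proposal is correct and follows essentially the same route as the paper: factor the canonical joint density into policy terms and reward-density terms, cancel the policy terms in the log-likelihood ratio, and reduce the per-round Gaussian KL contributions to indicators of playing arm\slash key term \(s\), which sum to \(N_s(MT)\) and \(\widetilde{N}_s(MT)\). The paper organizes the last step by conditioning on \(A_t, K_t\) and summing \(\E_{\vec\theta}[N_i+\widetilde N_i]D(P_i\Vert P_i')\) over all \(i\) before noting only \(i=s\) contributes, while you plug in the Gaussian KL formula directly, but this is a cosmetic difference; both arguments (yours and the paper's) implicitly rely on the extra \(K-d\) unit vectors in the construction having zero \(\vec e_s\)-component, which is the instance designer's prerogative.
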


  \begin{proof}
    Given a bandit instance with parameter \(\vec{\theta}\) and a policy \(\pi\), according to Section 4.6 of \cite{lattimore-2020-bandit-algorithms}, we construct the canonical bandit model of our setting as follows.
    Let \((\Omega,\mathcal{F}, \PP_{\vec{\theta}})\) be a probability space where \(\Omega=([K] \times \RR)^{MT}\), \(\mathcal{F}=\mathcal{B}(\Omega)\), and the density function of the probability measure \(\PP_{\vec{\theta}}\) is defined by \(p_{\vec{\theta}}: \Omega \to \RR\):
    \begin{align*}
      p_{\vec{\theta}}(\mathcal{H}_{MT}) = \prod_{t=1}^{MT} \pi_t^{\text{arm}}(a_t \mid \mathcal{H}_{t-1}) p_{a_t}(x_t) \cdot \pi_t^{\text{key}}(k_t \mid \mathcal{H}_{t-1}) \widetilde{p}_{k_t}(\widetilde{x}_t),
    \end{align*}
    where \(p_{a_t}\) and \(\widetilde{p}_{k_t}\) are the density functions of arm-level and key term-level reward distributions \(P_{a_t}\) and \(\widetilde{P}_{k_t}\), respectively.
    The definition of \(\PP_{\vec{\theta}'}\) is identical except that \(p_{a_t}\), \(\widetilde{p}_{k_t}\) are replaced by \(p'_{a_t}\), \(\widetilde{p}'_{k_t}\) and \(P_{a_t}\), \(\widetilde{P}_{k_t}\) are replaced by \(P'_{a_t}\), \(\widetilde{P}'_{k_t}\).

    By the definition of KL divergence, we have
    \begin{align*}
      D(\PP_{\vec{\theta}} \parallel \PP_{\vec{\theta}'}) =\int_{\Omega}\log\left(\odv{\PP_{\vec{\theta}}}{\PP_{\vec{\theta}'}}\right)\odif{\PP_{\vec{\theta}}}= \E\nolimits_{\vec{\theta}}\left[ \log \odv{\PP_{\vec{\theta}}}{\PP_{\vec{\theta}'}} \right].
    \end{align*}
    Note that
    \begin{align*}
      &\log \ab(\odv{\PP_{\vec{\theta}}}{\PP_{\vec{\theta}'}}(\mathcal{H}_T))= \log \frac{p_{\vec{\theta},\pi}(\mathcal{H}_{T})}{p_{\vec{\theta}',\pi}(\mathcal{H}_{T})} \numberthis\label{eq:chain-rule-radon-nikodym}\\
      =& \log \frac{\prod_{t=1}^{MT} \pi_t^{\text{arm}}(a_t \mid \mathcal{H}_{t-1}) p_{a_t}(x_t) \cdot \pi_t^{\text{key}}(k_t \mid \mathcal{H}_{t-1}) \widetilde{p}_{k_t}(\widetilde{x}_t)}{\prod_{t=1}^{MT} \pi_t^{\text{arm}}(a_t \mid \mathcal{H}_{t-1}) p'_{a_t}(x_t) \cdot \pi_t^{\text{key}}(k_t \mid \mathcal{H}_{t-1}) \widetilde{p}'_{k_t}(\widetilde{x}_t)}\\
      =& \sum_{t=1}^{MT} \ab(\log \frac{p_{a_t}(x_t)}{p'_{a_t}(x_t)} + \log \frac{\widetilde{p}_{k_t}(\widetilde{x}_t)}{\widetilde{p}'_{k_t}(\widetilde{x}_t)}).
    \end{align*}
    where in Equation~\ref{eq:chain-rule-radon-nikodym} we used the chain rule for Radon–Nikodym derivatives, and in the last equality, all the terms involving the policy cancel.
    Therefore,
    \begin{align*}
      D(\PP_{\vec{\theta}} \parallel \PP_{\vec{\theta}'})
      =& \sum_{t=1}^{MT} \ab(\E\nolimits_{\vec{\theta}} \ab[\log \frac{p_{A_t}(X_t)}{p'_{A_t}(X_t)}] + \E\nolimits_{\vec{\theta}}\ab[\log \frac{\widetilde{p}_{K_t}(\widetilde{X}_t)}{\widetilde{p}'_{K_t}(\widetilde{X}_t)}])\\
      =& \sum_{t=1}^{MT} \ab(\E\nolimits_{\vec{\theta}}\ab[\E\nolimits_{\vec{\theta}} \ab[\log \frac{p_{A_t}(X_t)}{p'_{A_t}(X_t)} \bigg\mid A_t]] + \E\nolimits_{\vec{\theta}}\ab[\E\nolimits_{\vec{\theta}}\ab[\log \frac{\widetilde{p}_{K_t}(\widetilde{X}_t)}{\widetilde{p}'_{K_t}(\widetilde{X}_t)}\bigg\mid K_t]])\\
      =& \sum_{t=1}^{MT} \ab(\E\nolimits_{\vec{\theta}}[D(P_{A_t} \parallel P'_{A_t})] + \E\nolimits_{\vec{\theta}}[D(\widetilde{P}_{K_t} \parallel \widetilde{P}'_{K_t})])\\
      =& \sum_{i=1}^{K} \E\nolimits_{\vec{\theta}} \ab[\sum_{t=1}^{MT} \ab(\1_{\set{A_t=i}}D(P_{A_t} \parallel P'_{A_t}) + \1_{\set{K_t=i}}D(\widetilde{P}_{K_t} \parallel \widetilde{P}'_{K_t}))]\\
      =& \sum_{i=1}^{K} \E\nolimits_{\vec{\theta}}[N_i(MT) + \widetilde{N}_i(MT)]D(P_i \parallel P'_i)\\
      =& \E\nolimits_{\vec{\theta}}[N_s(MT) + \widetilde{N}_s(MT)]D(\mathcal{N}(0,1) \parallel \mathcal{N}(2\Delta,1)), \numberthis\label{eq:divergence-differs-only-at-s}
    \end{align*}
    where Equation~\ref{eq:divergence-differs-only-at-s} is due to the fact that \(P_i\) and \(P'_i\) only differs when \(i=s\).
  \end{proof}

  \restatelowerbound*
  \begin{proof}
    Define \(R_{M,\vec{\theta}}^{\pi}(T)\) as the expected cumulative regret of policy \(\pi\) under environment \(\vec{\theta}\) over \(M\) clients and time horizon \(T\).
    Denote \(N_{i,a}(t)\) as the number of times the \(a\)-th arm (\(a \in [K]\)) is chosen by client \(i \in [M]\) after the end of round \(t\), then
    \begin{align*}
      R_{M,\vec{\theta}}^{\pi}(T)
      &= \E\nolimits_{\vec{\theta}}\ab[\sum_{i=1}^{M} \sum_{t=1}^{T} \left( \max_{\vec{a} \in \mathcal{A}_i} \vec{a}^\mathsf{T} \vec{\theta}^{*} - \vec{a}_{i,t}^\mathsf{T} \vec{\theta}^{*} \right)]\\
      &= \sum_{i=1}^{M} \sum_{t=1}^{T} \E\nolimits_{\vec{\theta}}\ab[\Delta_{i,\vec{a}_{i,t}}]\\
      &= \sum_{i=1}^{M} \Delta \sum_{a=2}^{K} \E\nolimits_{\vec{\theta}}[N_{i,a}(T)] \numberthis\label{eq:introduce-N}\\
      &= \sum_{i=1}^{M} \Delta (T-\E\nolimits_{\vec{\theta}}[N_{i,1}(T)])\\
      &= \Delta\ab(MT-\E\nolimits_{\vec{\theta}}\ab[\sum_{i=1}^{M} N_{i,1}(T)])\\
      &\geq \Delta \Pr\nolimits_{\vec{\theta}}\ab[MT-\sum_{i=1}^{M} N_{i,1}(T) \geq \frac{MT}{2}] \frac{MT}{2},
    \end{align*}
    where Equation~\ref{eq:introduce-N} is because the optimal arm for environment \(\vec{\theta}\) is arm 1, so pulling other arms increments the expected regret by \(\Delta\).
    The last inequality is due to Markov inequality.
    For environment \(\vec{\theta}'\), similarly we have
    \begin{align*}
      R_{M,\vec{\theta}'}^{\pi}(T)
      &= \sum_{i=1}^{M} \sum_{a=1}^{K} \Delta_{i,a} \E\nolimits_{\vec{\theta}'}[N_{i,a}(T)]\\
      &\geq \sum_{i=1}^{M} \Delta \E\nolimits_{\vec{\theta}'}[N_{i,1}(T)]
      = \Delta\E\nolimits_{\vec{\theta}'}\ab[\sum_{i=1}^{M}N_{i,1}(T)] \numberthis\label{eq:only-maintain-arm1-ignore-others}\\
      &\geq \Delta \Pr\nolimits_{\vec{\theta}'}\ab[\sum_{i=1}^{M} N_{i,1}(T) > \frac{MT}{2}] \frac{MT}{2},
    \end{align*}
    where in Equation~\ref{eq:only-maintain-arm1-ignore-others} we only keep the term for arm 1.
    Therefore,
    \begin{align*}
      R_{M,\vec{\theta}}^{\pi}(T) + R_{M,\vec{\theta}'}^{\pi}(T)
      &\geq \frac{\Delta MT}{2} \ab(\Pr\nolimits_{\vec{\theta}}\ab[\sum_{i=1}^{M} N_{i,1}(T) \leq \frac{MT}{2}] + \Pr\nolimits_{\vec{\theta}'}\ab[\sum_{i=1}^{M} N_{i,1}(T) > \frac{MT}{2}])\\
      &\geq \frac{\Delta MT}{4} \exp\ab(-D(\PP_{\theta} \parallel \PP_{\theta'})) \numberthis\label{eq:use-bretagnolle-huber}\\
      &\geq \frac{\Delta MT}{4} \exp\ab(-\E\nolimits_{\vec{\theta}}[N_s(MT) + \widetilde{N}_s(MT)] D(\mathcal{N}(0,1) \parallel \mathcal{N}(2\Delta,1))) \numberthis\label{eq:use-divergence-decomposition}\\
      &\geq \frac{\Delta MT}{4} \exp\ab(-\frac{2MT}{K-1} \frac{(2\Delta)^2}{2}) \numberthis\label{eq:use-gaussian-KL-divergence}\\
      &\geq \frac{\Delta MT}{4} \exp\ab(-\frac{2MT}{d-1} \frac{(2\Delta)^2}{2}) \numberthis\label{eq:use-K-larger-than-d}
    \end{align*}
    where Equation~\ref{eq:use-bretagnolle-huber} is due to Bretagnolle-Huber theorem (Lemma~\ref{lemma:bretagnolle-huber}).
    Equation~\ref{eq:use-divergence-decomposition} uses Lemma~\ref{lemma:divergence-decomposition}.
    Equation~\ref{eq:use-gaussian-KL-divergence} uses Lemma~\ref{lemma:kl-divergence-of-gaussian} and the definition of \(s\).
    And Equation~\ref{eq:use-K-larger-than-d} is due to \(K\geq d\).

    Finally, let \(\Delta=\sqrt{\frac{d-1}{MT}}\), by using \(\max\ab\{a,b\}\geq \frac{a+b}{2}\), we have
    \begin{align*}
      \max\ab\{R_{M,\vec{\theta}}^{\pi}(T), R_{M,\vec{\theta}'}^{\pi}(T)\} \geq \frac{\exp(-4)}{8} \sqrt{(d-1)MT} = \Omega\ab(\sqrt{dMT}).
    \end{align*}

  \end{proof}

\section{Proof of Theorem~\ref{thm:communication} (Communication Cost)}
\label{sec:proof-communication}
\restatecommunication*
\begin{proof}
  At each phase \(\ell\), each client \(i\) downloads: (1) key terms set \(\mathcal{K}_i^{\ell}\), which contains at most \(d\) feature vectors with dimension \(d\), (2) repetition times of each key term \(\set{n_{\vec{k}}}_{\vec{k} \in \mathcal{K}_i^{\ell}}\), which are at most $d$ integers, and (3) estimated preference vector \(\widehat{\vec{\theta}}_{\ell}\), which is a \(d\)-dimensional vector.
  The client also uploads: (1) at most \(d\) eigenvalues and eigenvectors, and (2) matrix \(\vec{G}_i^{\ell}\) and \(\vec{W}_i^{\ell}\), which are both of size \(d^2\).
  Also note that by Lemma~\ref{lemma:bound-num-of-phases}, the number of phases is at most \(\log T\).
  Therefore, the upload cost and the download cost are both \(\mathcal{O}(d^2M \log T)\).
\end{proof}

\section{Proof of Theorem~\ref{thm:conversation} (Conversation Frequency Upper Bound)}
\label{sec:proof-conversation}
\restateconversation*
\begin{proof}
  (a) is straightforward from Algorithm~\ref{algo:client}, Line~\ref{line:check-eigenvalue}.
  For (b), in phase \(\ell \in [L]\), the number of arms pulled by each client is
  \[T_{\ell} = \sum_{\vec{a} \in \mathcal{A}_{i}^{\ell}} T_{i,\ell}(\vec{a})=\sum_{\vec{a} \in \mathcal{A}_{i}^{\ell}}\left\lceil \frac{2d\pi_i^{\ell}(\vec{a})}{\varepsilon_{\ell}^2}\log \frac{2KM\log T}{\delta} \right\rceil \geq \frac{2d}{\varepsilon_{\ell}^2} \log \frac{2KM\log T}{\delta}.\]
  While the number of key terms pulled by client \(i\) is
  \begin{align*}
    \widetilde{T}_{i,\ell}
    &=\sum_{(\lambda,\vec{k}) \in \mathcal{K}_i^{\ell}} n_{\vec{k}}\\
    &=\sum_{(\lambda,\vec{k}) \in \mathcal{K}_i^{\ell}}\frac{2d\ab(s_{\ell}-\lambda)}{C^2\varepsilon_{\ell}^2} \log\ab(\frac{2KM\log T}{\delta})\\
    &=\sum_{j:\lambda_j<s_{\ell}}\frac{2d\ab(s_{\ell}-\lambda_j)}{C^2\varepsilon_{\ell}^2} \log\ab(\frac{2KM\log T}{\delta})\\
    &\leq \sum_{j=1}^d\frac{2d\ab(s_{\ell}-\beta)}{C^2\varepsilon_{\ell}^2} \log\ab(\frac{2KM\log T}{\delta})\\
    &=\frac{2d\ab(\frac{3}{4(1-\varepsilon_{\ell}^2)N}-d\beta)}{C^2\varepsilon_{\ell}^2} \log\ab(\frac{2KM\log T}{\delta}),
  \end{align*}
  where \(s_{\ell} = \frac{3}{4(1-\varepsilon_{\ell}^2)dN}\).
  So the fraction between key terms and arms is upper bounded by
  \begin{align*}
    \frac{\widetilde{T}_{i,\ell}}{T_{\ell}} \leq \frac{\frac{3}{4(1-\varepsilon_{\ell}^2)N}-d\beta}{C^2}=\frac{\frac{3}{4(1-\varepsilon_{\ell}^2)}-dN\beta}{NC^2} \leq \frac{1-dN\beta}{NC^2}.
  \end{align*}
\end{proof}

\section{Data Generation and Preprocessing}
\label{sec:data-generation-preprocessing}
  Our synthetic dataset is generated following the same method used in previous studies~\cite{Zhang-Conversational-WWW20,wu-2021-clustering-of-conversational,Wang-2023-Efficient}.
  We set the dimension \(d=50\), the number of users (i.e., the number of unknown preference vectors) \(N=200\), the number of arms \(|\mathcal{A}|=5000\), and the number of key terms \(|\mathcal{K}|=1000\).
  To reflect the fact that each key term is related to a subset of arms, we generate them as follows.
  First we index each arm and key term by \(i \in [|\mathcal{A}|]\) and \(k \in [|\mathcal{K}|]\), respectively.
  We sample \(|\mathcal{K}|\) pseudo feature vectors \(\set{\dot{\vec{x}}_{k}}_{k \in [|\mathcal{K}|]}\), where each dimension of \(\dot{\vec{x}}_{k}\) is drawn from a uniform distribution \(\mathcal{U}(-1,1)\).
  Then for arm \(i\), we uniformly sample a subset of \(n_{i}\) key term indices \(\mathcal{K}_{i} \subset [|\mathcal{K}|]\), and assign the weight \(w_{i,k}=1/n_i\) for each \(k \in \mathcal{K}_{i}\), where \(n_{i}\) is a random integer selected from \(\set{1, 2, \dots, 5}\).
  Finally, the feature vector of arm \(i\), denoted by \(\vec{a}_i\), is drawn from \(\mathcal{N}(\sum_{j \in \mathcal{K}_{i}} \dot{\vec{x}}_{j}/n_{i}, \vec{I})\).
  And the feature vector of key term \(k\), denoted by \(\vec{x}_k\), is computed as \(\vec{x}_k=\sum_{i \in [|\mathcal{A}|]} \frac{w_{i,k}}{\sum_{j \in [|\mathcal{A}|]} w_{j,k}} \vec{a}_{i}\).
  Each user \(u \in [N]\) is represented as a preference vector \(\vec{\theta}_{u} \in \RR^d\).
  We generate \(N\) preference vectors by sampling each dimension of \(\vec{\theta}_{u}\) from \(\mathcal{U}(-1,1)\).

  For the real-world datasets, we regard movies\slash artists\slash businesses as arms.
  To facilitate data analysis, a subset of \(|\mathcal{A}|\) arms with the most quantity of user-assigned ratings/tags, and a subset of \(N\) users who assign the most quantity of ratings/tags, are extracted.
  Key terms are identified by using the associated movie genres, business categories, or tag IDs in the MovieLens, Yelp, and Last.fm datasets, respectively.
  For example, each movie is associated with a list of genres, such as ``action'', ``comedy'', or ``drama'', and each business (e.g., restaurant) has a list of categories, such as ``Mexican'', ``Burgers'', or ``Gastropubs''.
  Using the data extracted above, we create a \emph{feedback matrix} \(\vec{R}\) of size \({N \times |\mathcal{A}|}\), where each element \(\vec{R}_{i,j}\) represents the user \(i\)'s feedback to arm \(j\).
  We assume that the user's feedback is binary.
  For the MovieLens and Yelp datasets, a user's feedback for a movie/business is 1 if the user's rating is higher than 3; otherwise, the feedback is 0.
  For the Last.fm dataset, a user's feedback for an artist is 1 if the user assigns a tag to the artist.

  Next, we generate the feature vectors for arms \(\vec{a}_i\) and the preference vectors for users \(\vec{\theta}_u\).
  Following the existing works~\cite{wu-2021-clustering-of-conversational,Wang-2023-Efficient,zhao-2022-knowledge-aware}, we decompose the feedback matrix \(\vec{R}\) using Singular Value Decomposition (SVD) as \(\vec{R}=\vec{\Theta} \vec{S} \vec{A}^\mathsf{T}\), where \(\vec{\Theta} = \set{\vec{\theta}_u}_{u \in [N]}\), and \(\vec{A} = \set{\vec{a}_i}_{i \in [|\vec{A}|]}\).
  Then the top \(d=50\) dimensions of these vectors associated with the highest singular values in \(\vec{S}\) are extracted.
  The feature vectors for key terms are generated following \citet{Zhang-Conversational-WWW20}, which maintains equal weights for all key terms corresponding to each arm.

\section{Related Work}
\label{sec:related-work}
  \noindent\textbf{Linear Contextual Bandit.} Linear contextual bandits tackle the exploitation-exploration trade-off of online decision-making problems and have various applications such as recommender systems, clinical trials, and networking.
  This model has been extensively studied in both the infinite-arm setting~\cite{Abbasi-Improved-2011,li-2010-a-contextual} and the finite-arm setting~\cite{chu-2011-contextual}.
  Conversational contextual bandits, initially introduced by~\citet{Zhang-Conversational-WWW20}, allow recommenders to launch conversations and obtain feedback on key terms to speed up user preference elicitation, in addition to performing arm selection by the users.
  There have been many follow-up studies since then.
  \texttt{CtoF-ConUCB}~\cite{wu-2021-clustering-of-conversational} introduces clustering of conversational bandits to avoid the human labeling efforts and automatically learn the key terms with proper granularity.
  \texttt{RelativeConUCB}~\cite{xie-2021-comparison-based} proposes comparison-based conversational interactions to get comparative feedback from users.
  \texttt{GraphConUCB}~\cite{zhao-2022-knowledge-aware} integrates knowledge graphs as additional knowledge and uses D-optimal design to select key terms.
  \texttt{Hier-UCB} and \texttt{Hier-LinUCB}~\cite{zuo-2022-hierarchical-conversational} leverage the hierarchical structure between key terms and items to learn which items to recommend efficiently.
  \texttt{DecUCB}~\cite{xia-2023-user-regulation} uses causal inference to handle the bias resulting from items and key terms.
  \texttt{ConLinUCB}~\cite{Wang-2023-Efficient} proposes computing the \emph{barycentric spanner} of the key terms as an efficient exploration basis.
  Different from these existing works, our work extends conversational bandits into the federated setting, and we show that by leveraging key terms, both the regret and the computational/communication costs can be reduced.

  \noindent\textbf{Federated Linear Bandit.} Federated linear bandit has emerged as a popular research topic.
  It can be regarded as privacy-preserving distributed bandits without action collisions.
  Numerous works have been proposed to manage data aggregation and distribution~\cite{huang-2021-federated}, ensure differential privacy~\cite{dubey-2020-differentially,zhou-2023-differentially}, implement online clustering~\cite{liu-2022-federated}, and achieve asynchronous communication~\cite{li-2022-asynchronous,he-2022-simple,chen-2023-on-demand,yang-2024-federated}, etc.

  Our work is closely related to \citet{wang-2020-distributed-bandit} and \citet{huang-2021-federated}, since we all consider the setup where the agents communicate with a central server by sending and receiving packets, and each agent has a \emph{fixed} and \emph{finite} arm set.
  However, \citet{wang-2020-distributed-bandit} does not consider data heterogeneity, i.e., it assumes that all the clients have the same arm set.
  \citet{huang-2021-federated} mitigates this by allowing for different arm sets among clients.
  The authors propose \emph{multi-client G-optimal design} to deal with the challenges of data heterogeneity, as we mentioned in Section~\ref{sec:challenges}.
  But it requires all the clients to upload their active arm sets to the server, which increases communication costs and potentially compromises users' privacy.
  Moreover, as mentioned by the authors (Appendix G.2, Theorem 8 of \citet{huang-2021-federated}), for the \emph{shared-parameter} case (which is what we consider in this work), currently there is no known algorithm to efficiently compute the \emph{multi-client G-optimal design}.
  On the contrary, our work neither requires to upload each client's arm set, nor computes the \emph{multi-client G-optimal design}.
  Instead, we leverage the key terms to minimize uncertainty across each direction of the feature space, achieving the same regret upper bound and an improved communication cost upper bound.

\section{Supplementary Plots}
\subsection{Regret with Different Number of Clients}
\label{appendix:plot-clients}
  We provide the cumulative regret for $T=6,000$ when the number of clients varies from 3 to 15 using the three real-world datasets.
  As shown in Figure~\ref{fig:regret-vs-client-num-real-world-dataset}, the results are in line with that of the synthetic dataset and further demonstrate the performance of our algorithm \fedconpe.

  \begin{figure}[htb]
    \centering
    \begin{minipage}{0.48\columnwidth}
      \centering
      \subfigure[MovieLens]{
        \includegraphics[width=\columnwidth]{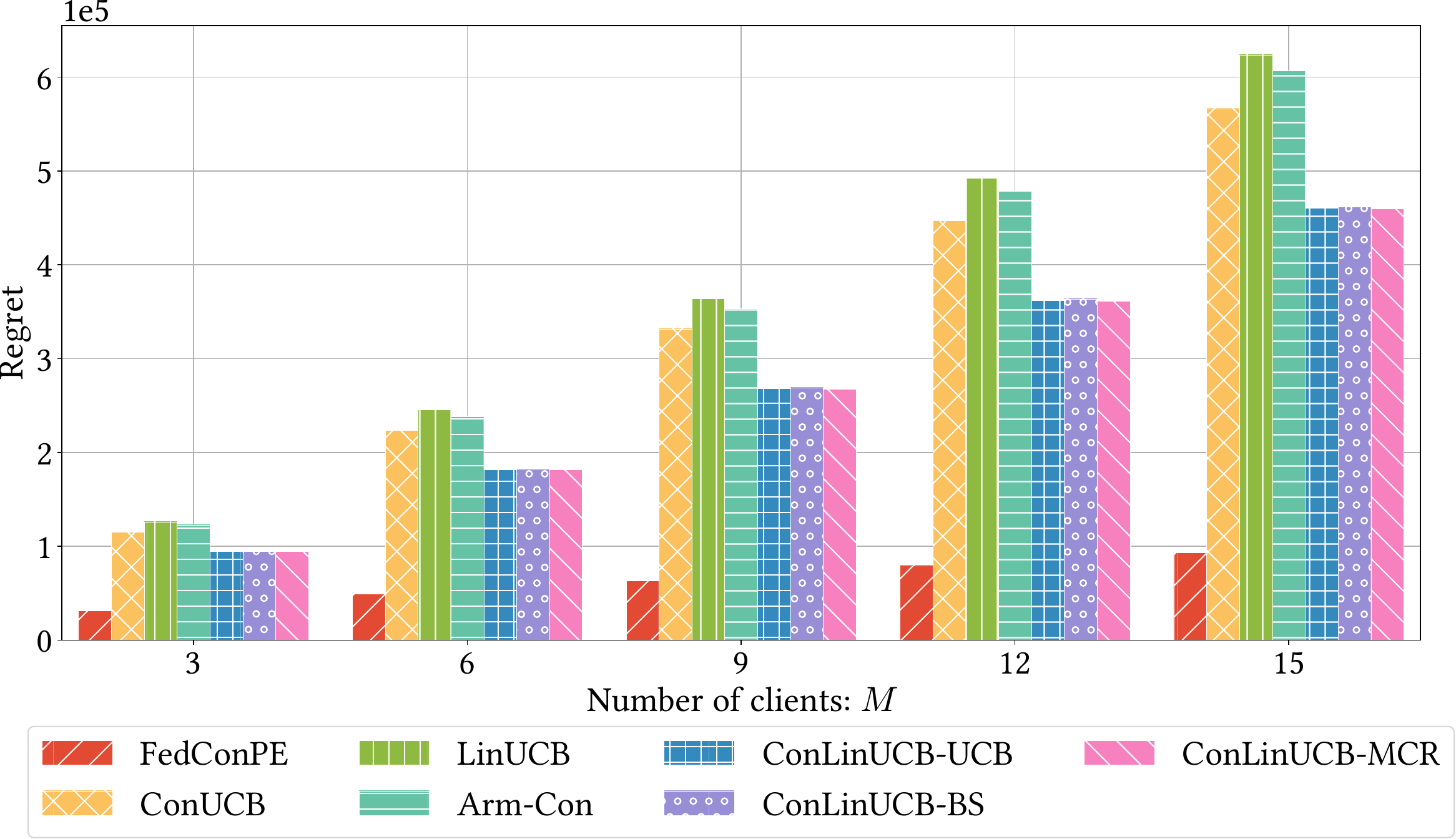}
      } \\
      \subfigure[Yelp]{
        \includegraphics[width=\columnwidth]{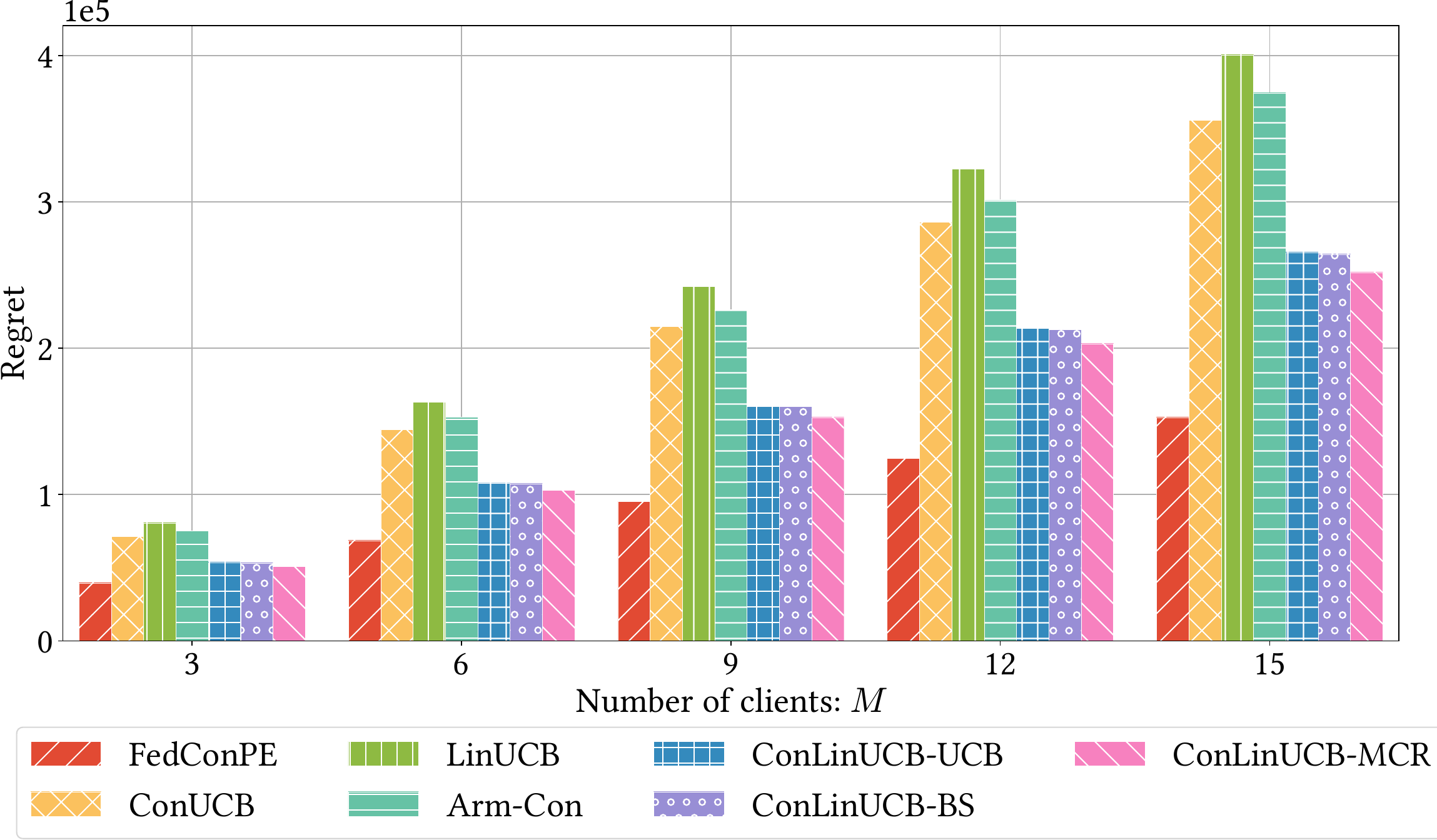}
      } \\
      \subfigure[Last.fm]{
        \includegraphics[width=\columnwidth]{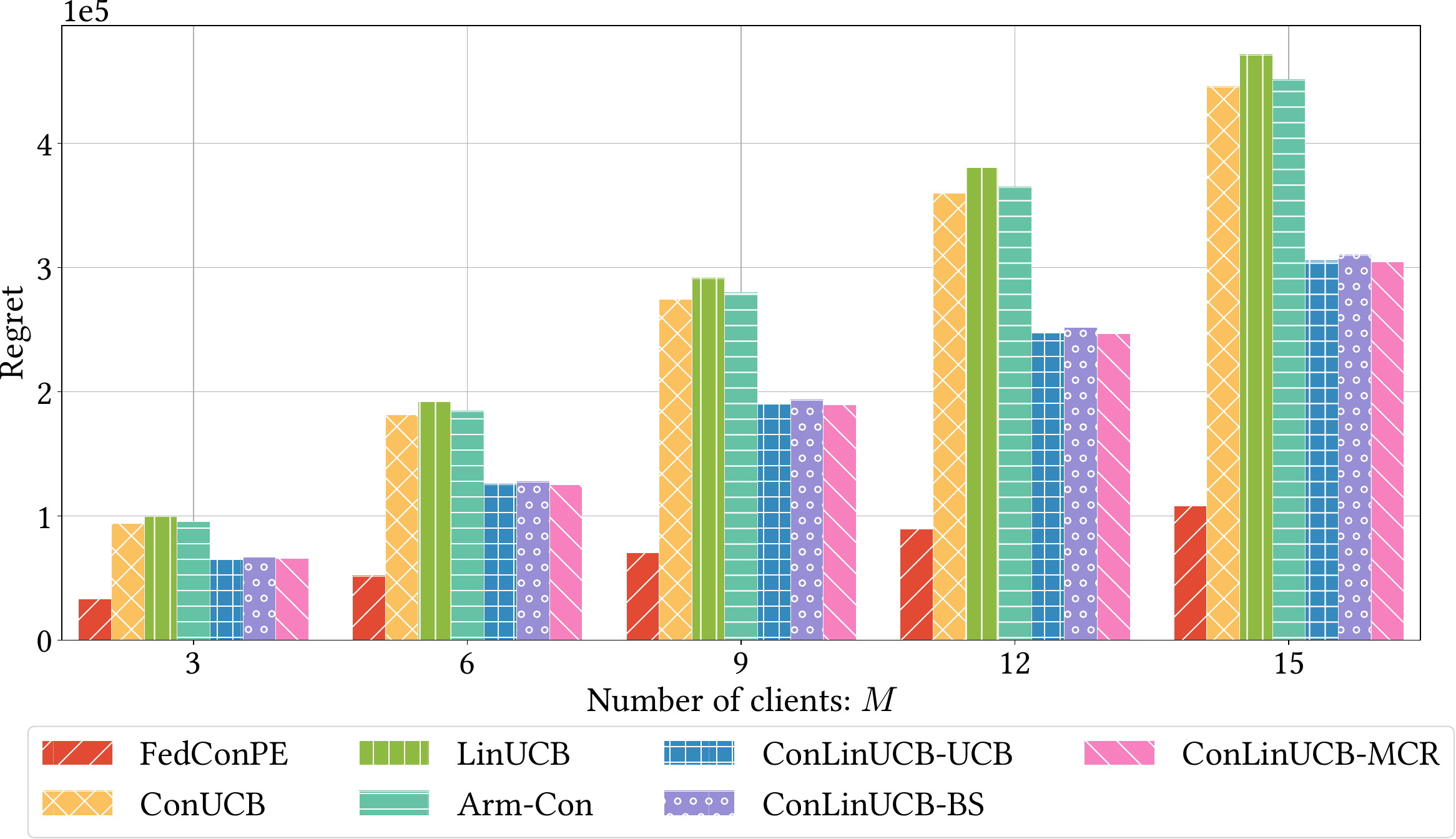}
      }
      \caption{\label{fig:regret-vs-client-num-real-world-dataset} Regret v.s. number of clients on real-world datasets.}
    \end{minipage}
    \hfill
    \begin{minipage}{0.48\columnwidth}
      \centering
      \subfigure[MovieLens]{
        \includegraphics[width=\columnwidth]{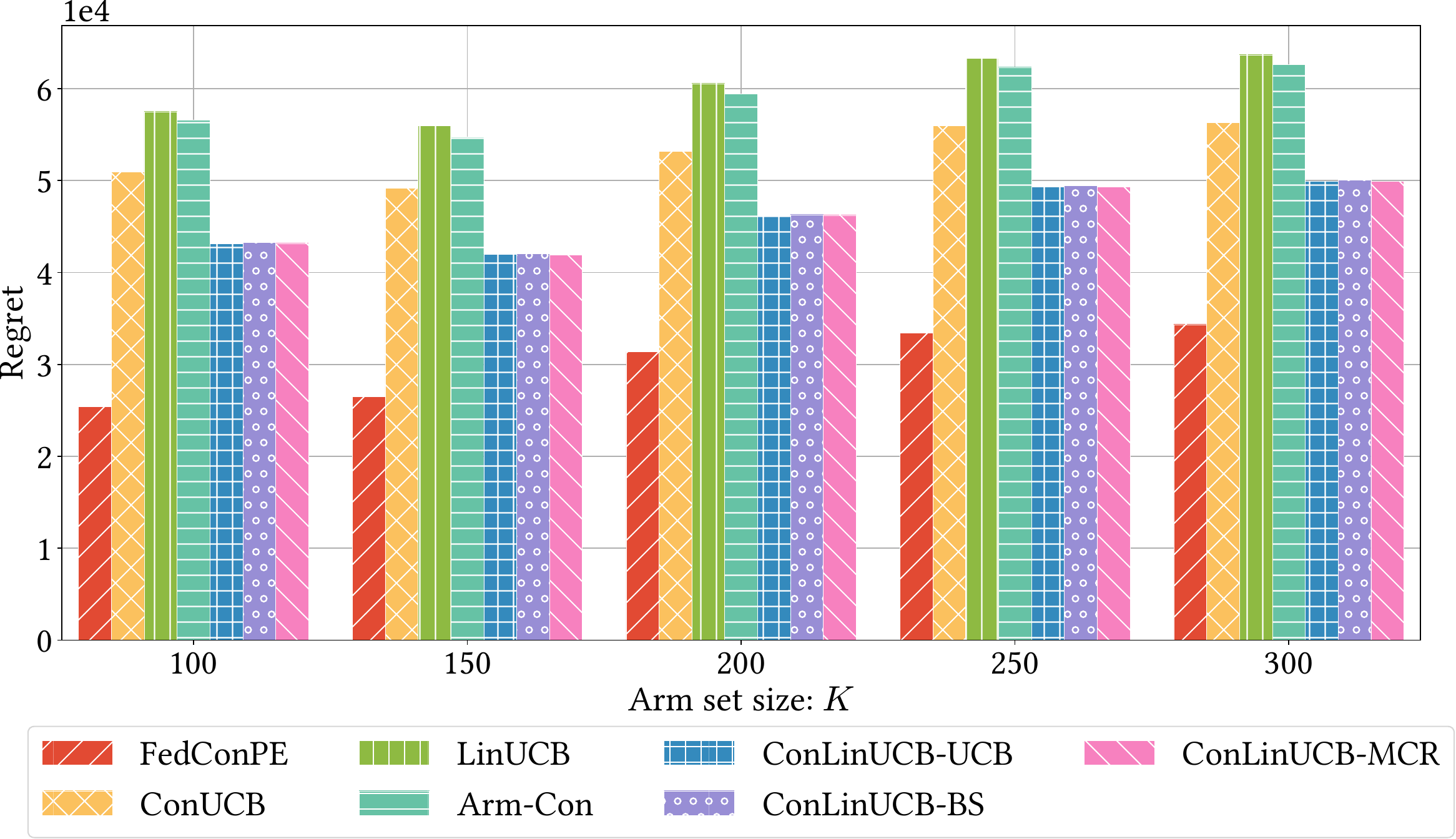}
      } \\
      \subfigure[Yelp]{
        \includegraphics[width=\columnwidth]{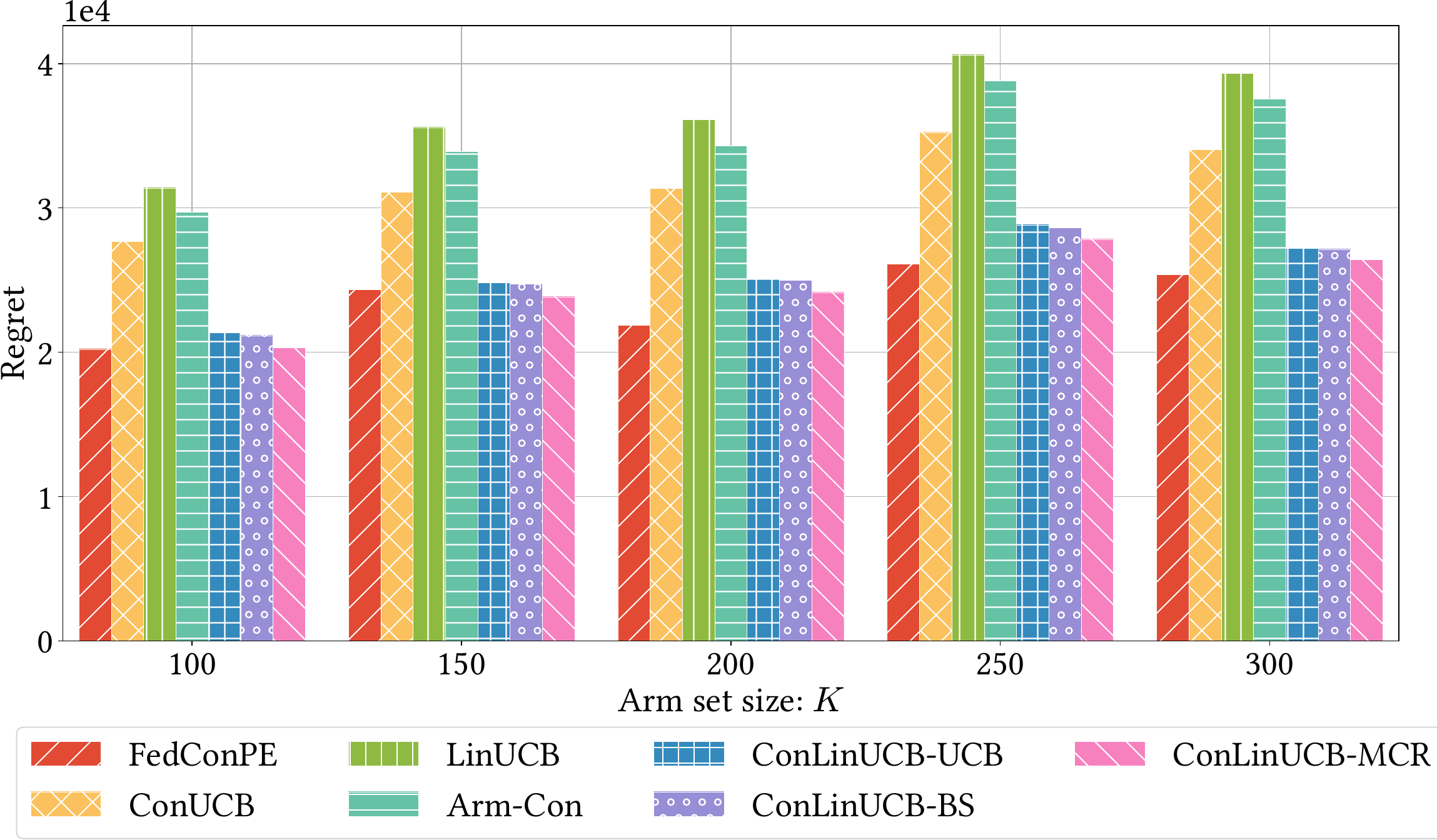}
      } \\
      \subfigure[Last.fm]{
        \includegraphics[width=\columnwidth]{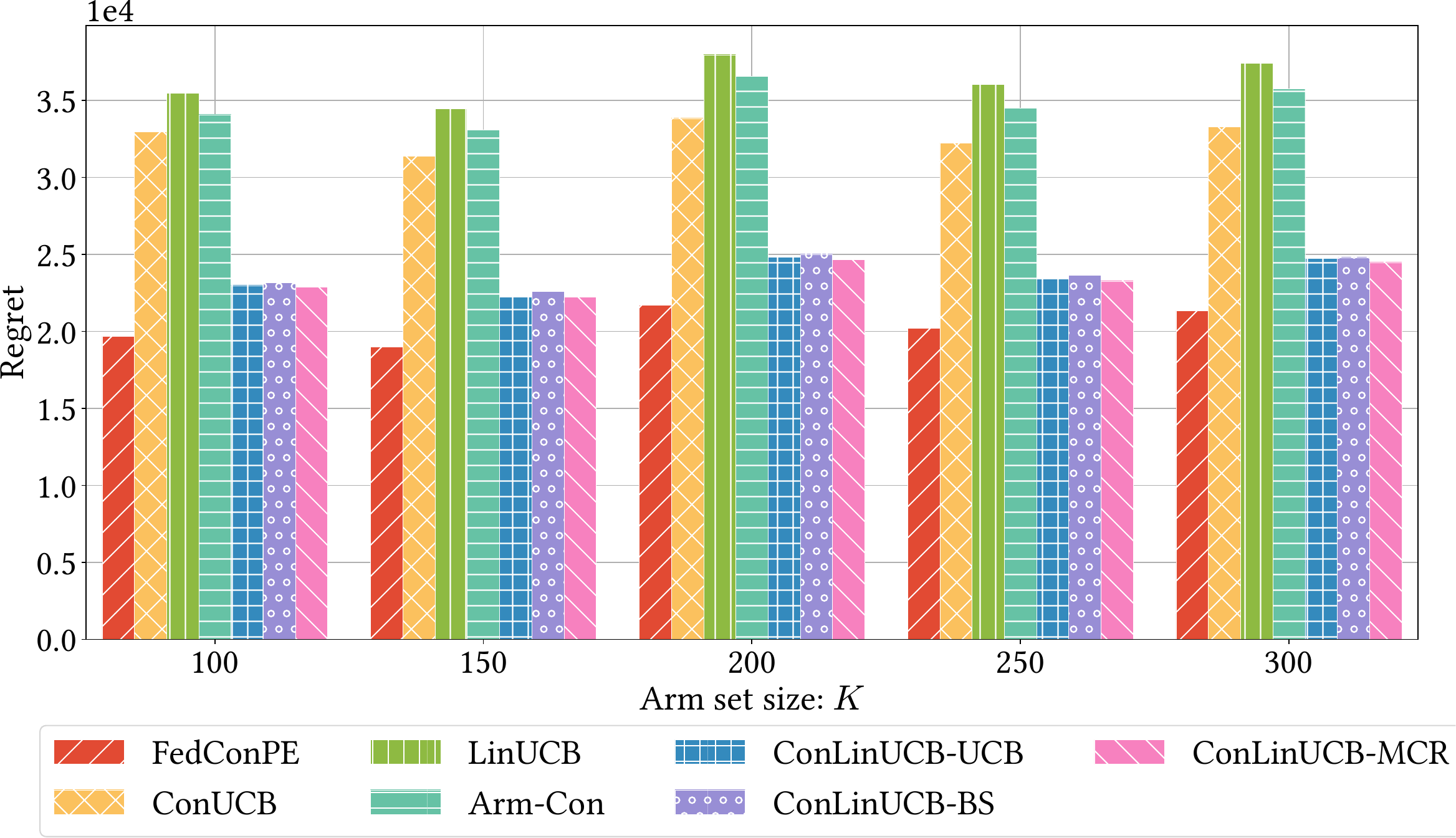}
      }
      \caption{\label{fig:regret-vs-poolsize-real-world-dataset} Regret v.s. size of arm sets on real-world datasets.}
    \end{minipage}
  \end{figure}

\subsection{Regret with Different Arm Set Sizes}
 \label{appendix:plot-arms}
  We provide the cumulative regret for $T=10,000$ when the arm set size varies from 100 to 300, using the three real-world datasets.
  As shown in Figure~\ref{fig:regret-vs-poolsize-real-world-dataset}, the results are in line with that of the synthetic dataset, showing that our algorithm \fedconpe is robust to the size of arm sets.

\end{document}